\DeclareMathOperator*{\argmin}{arg\,min}
\newtheorem{assumption}{\textbf{Assumption}}\newtheorem{definition}{\textbf{Definition}}\newtheorem{lemma}{\textbf{Lemma}}\newtheorem{theorem}{\textbf{Theorem}}\newtheorem{remark}{\textbf{Remark}}
\newcommand{\cM}{\mathcal{M}}
\newcommand{\tx}{\tilde{x}}
\newcommand{\norm}[1]{\left\lVert#1\right\rVert}
\newcommand{\tth}[1]{#1^{\text{th}}}
\newcommand{\ex}[1]{\mathbb{E}\left[#1\right]}
\newcommand{\cond}{\bigg |}
\newcommand{\paren}[1]{\left(#1\right)}
\newcommand{\innprod}[1]{\left\langle#1\right\rangle}
\newcommand{\mbR}{\mathbb{R}}
\newcommand{\subx}{y}
\title{Convergence of Distributed Stochastic Variance Reduced Methods without Sampling Extra Data}
\author{Shicong Cen$^\dag$ \quad Huishuai Zhang$^\ddag$\quad 
Yuejie Chi$^\dag$ \quad
Wei Chen$^\ddag$ \quad
Tie-Yan Liu$^\ddag$}
\affil{$^\dag$Carnegie Mellon University \quad  $^\ddag$Microsoft Research Asia \\ \vspace{0.02in}
\texttt{\{shicongc,yuejiec\}@andrew.cmu.edu} \\
\texttt{\{huzhang, wche, tyliu\}@microsoft.com}}
\date{May 2019; Revised Jun. 2020}
\begin{document}

\maketitle

\begin{abstract}

Stochastic variance reduced methods have gained a lot of interest recently for empirical risk minimization due to its appealing run time complexity. When the data size is large and disjointly stored on different machines, it becomes imperative to distribute the implementation of such variance reduced methods. In this paper, we consider a general framework that directly distributes popular stochastic variance reduced methods in the master/slave model, by assigning outer loops to the parameter server, and inner loops to worker machines. 
This framework is natural and friendly to implement, but its theoretical convergence is not well understood. We obtain a comprehensive understanding of algorithmic convergence with respect to data homogeneity by measuring the smoothness of the discrepancy between the local and global loss functions.
We establish the linear convergence of distributed versions of a family of stochastic variance reduced algorithms, including those using accelerated and recursive gradient updates, for minimizing strongly convex losses. Our theory captures how the convergence of distributed algorithms behaves as the number of machines and the size of local data vary. Furthermore, we show that when the data are less balanced, regularization can be used to ensure convergence at a slower rate. We also demonstrate that our analysis can be further extended to handle nonconvex loss functions.

\end{abstract}

	\section{Introduction}

 	Empirical risk minimization arises frequently in machine learning and signal processing, where the objective function is the average of losses computed at different data points. Due to the increasing size of data, distributed computing architectures, which assign the learning task over multiple computing nodes, are in great need to meet the scalability requirement in terms of both computation power and storage space. In addition, distributed frameworks are suitable for problems where there are privacy concerns to transmit and store all the data in a central location, a scenario related to the nascent field of {\em federated learning} \cite{konevcny2015federated}. It is, therefore, necessary to develop distributed optimization frameworks that are tailored to solving large-scale empirical risk minimization problems with desirable communication-computation trade-offs, where the data are stored disjointly over different machines.

 One prevalent model of distributed systems is the so-called master/slave model, where there is a central parameter server to coordinate the computation and information exchange across different worker machines. Due to the low per-iteration cost, a popular solution is distributed stochastic gradient descent (SGD) \cite{recht2011hogwild}, where the parameter server aggregates gradients from each worker and does mini-batch gradient updates. However, distributed SGD is not communication-efficient and requires lots of communication rounds to converge, which partially diminishes the benefit of distribution. On the other hand, recent breakthroughs in developing stochastic variance reduced methods have made it possible to achieve fast convergence and small per-iteration cost at the same time, such as the notable SVRG \cite{Johnson2013} algorithm. Yet, distributed schemes of such variance reduced methods that are both practical and theoretically sound are much less developed.  
 
This paper focuses on a general framework of distributed stochastic variance reduced methods, to be presented in Alg.~\ref{alg:general}, which is natural and friendly to implement in a master/slave model. On a high level, SVRG-type algorithms \cite{Johnson2013} contain inner loops for parameter updates via variance-reduced SGD, and outer loops for global gradient and parameter updates. Our general framework assigns outer loops to the parameter server, and inner loops to worker machines. The parameter server collects gradients from worker machines and then distributes the global gradient to each machine. Each worker machine then runs the inner loop independently in parallel using variance reduction techniques, which might be different when distributing different algorithms, and returns the updates to the parameter server at the end.  Per iteration,  two communication rounds are required: one communication round is used to average the parameter estimates, and the other is used to average the gradients, which is the same as distributed synchronous SGD. However, the premise is that by performing more efficient local computation using stochastic variance reduced methods, the algorithm converges in fewer iterations and is therefore more communication-efficient.

Due to the simplicity of this framework, similar methods have been implemented in several works \cite{konevcny2015federated, de2016efficient, reddi2016aide}, and have achieved  great empirical success. Surprisingly, a complete theoretical understanding of its convergence behavior is still missing at large. Moreover, distributed variants using accelerated variance reduction methods are not developed. The main analysis difficulty is that the variance-reduced gradient of each worker is no longer an unbiased gradient estimator when sampling from re-used local data.

To ease this difficulty, several variants of distributed SVRG, e.g. \cite{lee2017distributed,shamir2016without,wang2017memory} have been proposed with performance guarantees, which try to bypass the biased gradient estimation issue by emulating the process of i.i.d. sampling from the global data using some complicated random data re-allocation protocol, which requires sampling extra data with or without replacement. These procedures lead to unnecessary data waste and potential privacy leakage, and can be cumbersome and difficult to implement in practice. 
 
Consequently, a natural question arises: {\em can we provide a mathematical analysis to the convergence of the natural framework of distributed stochastic variance reduced methods, under some simple and intuitive metric?}

 \subsection{Contributions of This Paper}

This paper provides a convergence analysis of a family of {\em naturally} distributed stochastic variance reduced methods under the framework described in Alg.~\ref{alg:general}, for both convex and nonconvex loss functions. By using different variance reduction schemes at the worker machines, we study distributed variants of three representative algorithms in this paper: SVRG \cite{Johnson2013}, SARAH employing recursive gradient updates \cite{Nguyen2017,nguyen2019optimal}, and MiG employing accelerated gradient updates \cite{zhou2018simple}. Our methodology can be extended to study other variants in a similar fashion. The contributions of this paper are summarized below.
 	\begin{itemize}
 	\item We suggest a simple and intuitive metric called {\em distributed smoothness} to gauge data balancedness among workers, defined as the smoothness of the difference $f_k -f$ between the local loss function $f_k$ and the global loss function $f$, which is the average of the local loss functions. The metric is deterministic, easy-to-compute, applies for arbitrary dataset splitting, and is shown to play a critical role in the convergence analysis.  
		
\item	We establish the linear convergence of distributed D-SVRG, D-SARAH, and D-MiG under strongly convex losses, as long as the distributed smoothness parameter is smaller than a constant fraction of the strong convexity parameter $\sigma$, e.g. $\sigma/4$, where the fraction might change for different algorithms. Our bounds capture the phenomenon that the convergence rate improves as the local loss functions become more similar to the global loss function, by reducing the distributed smoothness parameter. Furthermore, the run time complexity exhibits the so-called ``linear speed-up'' property in distributed computing, where the complexity depends on the local data size, instead of the global data size, which typically implies an improvement by a factor of $n$, where $n$ is the number of machines.

        \item When the local data are highly unbalanced, the distributed smoothness parameter becomes large, which implies that the algorithm might diverge. We suggest regularization as an effective way to handle this situation, and show that by adding larger regularization to machines that are less distributed smooth, one can still ensure linear convergence in a regularized version of D-SVRG, called D-RSVRG, though at a slower rate of convergence. 
        
        \item More generally, the notion of distributed smoothness can also be used to establish the convergence under nonconvex losses. We demonstrate this through the convergence analysis of D-SARAH in the nonconvex setting.
        
 	\end{itemize}

    \begin{table*}[t]
    	    \caption{Communication rounds and runtime of the proposed and existing algorithms for strongly convex losses (ignoring logarithmic factors in $\kappa$) to reach $\epsilon$-accuracy. Algorithms with an asterisk are proposed/analyzed in this paper. Here, $N$ is the total number of input data points, $n$ is the number of worker machines, and $\kappa$ is the condition number of the global loss function $f$.}   
   \centering
	\begin{tabular}{ |l | c |  c| c|}
		\hline
		Algorithm &  Communication Rounds & Runtime & Assumptions \\ \hline
		\hline
		DSVRG \cite{lee2017distributed} &  $ (1+\kappa/(N/n))\log(1/\epsilon)$  & $(N/n+\kappa)\log (1/\epsilon)$&  extra data \\ \hline		
		DASVRG \cite{lee2017distributed} & $ (1+\sqrt{\kappa/(N/n)})\log(1/\epsilon) $ & $(N/n+\sqrt{\kappa N/n})\log (1/\epsilon)$ & extra data\\\hline% (Assumption \ref{assumption_C}) \\ \hline
		Dist. AGD   & $ \sqrt{\kappa}\log (1/\epsilon)$ & $(N/n)\sqrt{\kappa}\log (1/\epsilon)$ & none \\ \hline
		ADMM &$\kappa \log(1/\epsilon) $ & $(N/n) \kappa \log(1/\epsilon) $ & none \\ \hline
		SCOPE \cite{zhao2017scope} & $\kappa \log(1/\epsilon)$ & $(N/n + \kappa\log\kappa)\kappa \log(1/\epsilon)$  &  uniform regularization \\ \hline
		pSCOPE \cite{zhao2018proximal} & $\log(1/\epsilon)$  & $(N/n + \kappa)\log(1/\epsilon)$ & good partition \\ \hline
		D-SVRG*  &  $\log(1/\epsilon)$  & $(N/n + \kappa)\log(1/\epsilon)$ & distributed restricted smoothness \\ \hline
 		D-SARAH*  &  $\log(1/\epsilon)$  & $(N/n + \kappa)\log(1/\epsilon)$ & distributed restricted smoothness \\ \hline
		D-MiG*  &  $ (1+\sqrt{\kappa/(N/n)})\log(1/\epsilon)$   & $(N/n+\sqrt{\kappa N/n})\log(1/\epsilon)$ & distributed smoothness \\  \hline
		D-RSVRG*   &  $ \kappa\log(1/\epsilon)$  & $ (N/n)  \kappa \log(1/\epsilon)$ & large regularization \\ \hline
		D-RSVRG*   &  $ \log(1/\epsilon)$  & $(N/n + \kappa )\log(1/\epsilon)$ & small regularization \\ \hline
	\end{tabular}
	\label{tab:dist-opt-algs}
\end{table*}

 \subsection{Related Work}
 
Distributed optimization is a classic topic \cite{boyd2011distributed,bertsekas1989parallel}, yet recent  trends in data-intensive applications are calling for new developments with a focus on communication and computation efficiency. Examples of {deterministic} optimization methods include DANE \cite{Shamir2013}, AIDE \cite{reddi2016aide}, DiSCo \cite{Zhang2015}, GIANT \cite{wang2018giant}, CoCoA \cite{smith2018cocoa}, CEASE \cite{fan2019communication}, one-shot averaging \cite{zinkevich2010parallelized, zhang2012communication}, etc.

Many stochastic variance reduced methods have been proposed recently, for example, SAG \cite{Schmidt2015}, SAGA \cite{defazio2014saga}, SVRG \cite{Johnson2013}, SDCA \cite{shalev2013stochastic}, MiG \cite{zhou2018simple}, Katyusha \cite{allen2017katyusha}, Catalyst \cite{lin2015universal}, SCOPE \cite{zhao2017scope,zhao2018proximal}, SARAH \cite{Nguyen2017}, SPIDER \cite{fang2018spider}, SpiderBoost \cite{wang2018spiderboost}, to name a few. Several previous works have studied distributed variants of SVRG. For example, the D-SVRG algorithm has been empirically studied before in \cite{reddi2016aide,konevcny2015federated} without a theoretical convergence analysis. The pSCOPE algorithm \cite{zhao2018proximal} is also a variant of distributed SVRG, and its convergence is studied under an assumption called good data partition in \cite{zhao2018proximal}, which is hard to interpret and verify in practice. The SCOPE algorithm \cite{zhao2017scope} is similar to the regularized variant D-RSVRG of D-SVRG under large regularization, however our analysis is much more refined by allowing different regularizations to different local workers with respect to the distributed smoothness of local data, and gracefully degenerates to the unregularized case when the distributed smoothness is benign. The general framework of distributed variance-reduced methods covering SARAH and MiG and various loss settings in this paper has not been studied before. For conciseness, Table~\ref{tab:dist-opt-algs} summarizes the communication and runtime complexities of the most relevant algorithms to the current paper.\footnote{Since in a master/slave model, all worker machines perform computations in parallel, the runtime is dictated by the worst-case computation complexity per worker.}

There are also a lot of recent efforts on reducing the communication cost of distributed GD/SGD by gradient quantization \cite{alistarh2017qsgd, bernstein2018signsgd, seide20141, wen2017terngrad}, gradient compression and sparsification \cite{alistarh2018convergence, lian2017can, lin2018deep, wangni2018gradient, tang2018d}. In comparison, we communicate the exact gradient, and it is an interesting future direction to combine gradient compression schemes in distributed variance reduced stochastic gradient methods. Another line of works \cite{yuan2017variancereduced,mokhtari2015dsa,li2020communication} seek to adopt variance reduction techniques in the decentralized setting, where each worker is only able to communicate with its neighbors over a network. In contrast, our work focuses on the master/slave model where each worker communicates with a parameter server.

\subsection{Paper Organization} The rest of this paper is organized as follows. Section~\ref{sec:problem_setup} presents the problem setup and a general framework of distributed stochastic optimization with variance-reduced local updates. Section~\ref{sec:main_results} presents the convergence guarantees of D-SVRG, D-SARAH and D-MiG under appropriate distributed smoothness assumptions. Section~\ref{sec:regSVRG} introduces regularization to D-SVRG to handle unbalanced data when distributed smoothness does not hold. Section~\ref{sec:extensions} presents extensions to nonconvex losses for D-SARAH. Section~\ref{sec:outline} provides an outline to the analysis and the rest of the proofs are deferred to the appendix. Section~\ref{sec:numerical} presents numerical experiments to corroborate the theoretical findings. Finally, we conclude in Section~\ref{sec:conclusions}.

\section{Problem Setup} \label{sec:problem_setup}

 	Suppose we have a data set $\cM=\{z_1, \cdots, z_N\}$, where $z_j \in\mbR^p$ is the $j$th data point for $j=1,...,N$, and $N$ is the total number of data points. In particular, we do not make any assumptions on their statistical distribution. Consider the following empirical risk minimization problem 
	\begin{equation} \label{eq:central}
	\min_{x\in\mbR^d} f(x) := \frac{1}{N}\sum_{z\in \cM} \ell(x;z),
	\end{equation}
	where $x\in\mbR^d$ is the parameter to be optimized and $\ell:\mbR^d \times \mbR^p \mapsto \mbR$ is the sample loss function. For brevity, we use $\ell_z(x)$ to denote $\ell(x;z)$ throughout the paper.
	
	In a distributed setting, where the data are distributed to $n$ machines or workers, we define a partition of the data set $\cM$ as 
	$\cM = \bigcup_{k=1}^n \cM_k$, where $\cM_i \bigcap \cM_k = \varnothing$, $\forall i \neq k$.
The $\tth{k}$ worker, correspondingly, is in possession of the data subset $\cM_k$, $1\leq k\leq n$. We assume there is a parameter server (PS) that coordinates the parameter sharing among the workers. The sizes of data held by each worker machine is $N_k = | \cM_k|$. When the data is split equally, we have $N_k = N/n$. The original problem \eqref{eq:central} can be rewritten as minimizing the following objective function:
	\begin{equation}
	f(x):= \frac{1}{n}\sum_{k=1}^{n}f_k(x) ,\label{eq:objectve}
	\end{equation} 
	where $f_k(x)=\frac{1}{(N/n)}\sum_{z\in \cM_k} \ell_z(x)$ is the local loss function at the $\tth{k}$ worker machine.\footnote{It is straightforward to state our results under unequal data splitting with proper rescaling.} 
	
		\begin{algorithm}[ht]
	\caption{A general distributed framework for empirical risk minimization}
	\label{alg:general}
	\begin{algorithmic}[1]
		\STATE {\textbf{Input:} initial point $\tx^0$.}
		\STATE {\textbf{Initialization}: Compute $\nabla f(\tx^0)$ and distribute it to all machines. }
		\FOR {$t = 0,1, 2, \cdots$}
        \FOR{\textbf{workers} $1\leq k \leq n$ in parallel}
        \STATE{Compute
        $y_k^{t+} = \texttt{LocalUpdate}(f_k, \tx^{t}, \nabla f(\tx^t))$;}
        \STATE{Send $y_k^{t+}$ to \textbf{PS};}  %// {\em options to use different variance reduced methods}} 

        \ENDFOR

		\STATE{\textbf{PS}: randomly select $\tx^{t+1}$ from all $y_k^{t+}$ and push $\tx^{t+1}$ to all workers;}
		\FOR{\textbf{workers} $1\leq k \leq n$ in parallel}
		\STATE{Compute $\nabla f_k(\tx^{t+1})$ and send it to \textbf{PS}; }
        \ENDFOR
        \STATE {\textbf{PS}: average $\nabla f(\tx^{t+1}) = \frac{1}{n}\sum_{k=1}^n \nabla f_k(\tx^{t+1})$ and push $\nabla f(\tx^{t+1})$ to all workers.}

		\ENDFOR	
		\STATE{\textbf{return} 	$\tx^{t+1}$}	
	\end{algorithmic}
\end{algorithm}	

\subsection{A General Distributed Framework}	 
 Alg.~\ref{alg:general} presents a general framework for distributed stochastic variance reduced methods, which assigns the outer loops of an SVRG-type algorithm \cite{Johnson2013,Nguyen2017,zhou2018simple} to PS and the inner loops to local workers. By using different variance reduction schemes at the worker machines (i.e. \texttt{LocalUpdate}), we obtain distributed variants of different algorithms. On a high level, the framework alternates between local computation by individual workers in parallel (i.e. Line 5), and global information sharing coordinated by the PS (i.e. Lines 8-12). 
 \begin{itemize}
 \item The local worker conducts local computation $\texttt{LocalUpdate}$ based on the current estimate $\tilde{x}^t$, the global gradient $\nabla f(\tilde{x}^t)$, and its local data $f_k(\cdot)$; in this paper, we are primarily interested in local updates using stochastic variance-reduced gradients. A little additional information about the previous update is needed when employing acceleration, which will be specified in Alg.~\ref{alg:dismig_main}.
 \item The PS randomly selects from the local estimates $y_k^{t+}$ from all workers in each round, which is set as the global estimate $\tilde{x}^{t+1}$, then computes the global gradient $\nabla f(\tilde{x}^{t+1})$ by pulling the local gradient  $\nabla f_k(\tilde{x}^{t+1})$. In total, each iteration requires two rounds of communications. %
 \end{itemize}

\begin{remark}\label{remark:alternating_PS_rule}
%\color{red}
A careful reader may suggest that there are better ways to set the global estimate $\tilde{x}^{t+1}$. One way is to let the PS randomly select a worker to compute a local estimate $y_k^{t+}$ and set other workers in an idle mode, which saves local computation. Another way is to set $\tilde{x}^{t+1} = \frac{1}{n} \sum_{k=1}^n y_k^{t+}$ as the average of all local updates, which may potentially improve the performance. The specific choice in Alg.~\ref{alg:general} allows us to provide theoretical analysis for all the \texttt{LocalUpdate} rules considered in this paper. 

%see Section~\ref{sec:numerical} for some numerical evidence. %We can conclude from Jensen's inequality that averaging results from all workers, i.e., replacing step 8 in Alg.~\ref{alg:general} with $\tx^{t+1} = 1/n \sum_{k=1}^n y_k^{t+}$ may potentially improve the performance of D-SVRG and D-MiG, and is observed to boost convergence speed for all methods in practice. Yet there is a lack of theoretical justification for doing so with D-SARAH since the convergence result is based on gradient norm, which is not guaranteed to be convex. \dacong{Shall we move this remark to the analysis section?}
\end{remark}

	\subsection{Assumptions}Throughout, we invoke one or several of the following standard assumptions of the loss function in the convergence analysis.
	\begin{assumption}[Smoothness]
	    \label{assum:l_smooth}
	    The sample loss $\ell_z(\cdot)$ is $L$-smooth, i.e., the gradient of $\nabla\ell_z(\cdot)$ is $L$-Lipschitz for all $z \in \cM$.
	\end{assumption}
	\begin{assumption}[Convexity]
	    \label{assum:l_convex}
	    The sample loss $\ell_z(\cdot)$ is convex for all $z \in \cM$.
	\end{assumption}
	\begin{assumption}[Strong Convexity]
	    \label{assum:f_sc}
	    The empirical risk $f(\cdot)$ is $\sigma$-strongly convex.
	\end{assumption}
		
When $f$ is strongly convex, the condition number of $f$ is defined as $\kappa : = L/\sigma$. Denote the unique minimizer and the optimal value of $f(x)$ as 
$$x^* := \argmin_{x\in\mbR^d} f(x),\quad \quad f^* := f(x^*).$$

As it turns out, the smoothness of the deviation $f_k-f$ between the local loss function $f_k$ and the global loss function $f$ plays a key role in the convergence analysis, as it measures the balancedness between local data in a simple and intuitive manner. We refer to this as the ``distributed smoothness''. In some cases, a weaker notion called
 \emph{restricted smoothness} is sufficient, which is defined below. 
 
\begin{definition}[Restricted Smoothness]
\label{def:rsc}
A differentiable function $f: \mbR^d \mapsto \mbR$ is called $c$-restricted smooth with regard to $x^{*}$ if
$\norm{\nabla f(x^*) - \nabla f(y)} \le c \norm{x^*-y}$, for all $ y \in \mbR^d$.
\end{definition}
The restricted smoothness, compared to standard smoothness, fixes one of the arguments to $x^*$, and is therefore a much weaker requirement.  
  The following assumption quantifies the distributed smoothness using either restricted smoothness or standard smoothness.

	\edef\oldassumption{\the\numexpr\value{assumption}+1}

    \setcounter{assumption}{0}
    \renewcommand{\theassumption}{\oldassumption\alph{assumption}}
    \begin{assumption}[Distributed Restricted Smoothness] 
      	\label{assum:f_restricted}
	    The deviation $f-f_k$ is $c_k$-restricted smooth with regard to $x^*$ for all $1\leq k\leq n$.
    \end{assumption}
    
    \begin{assumption}[Distributed Smoothness]
        \label{assum:f_nrestricted}
	    The deviation $f-f_k$ is $c_k$-smooth for all $1\leq k\leq n$.
    \end{assumption}
 
It is straightforward to check that $c_k \leq L$ for all $1\leq k\leq n$. If all the data samples are generated following certain statistical distribution in an i.i.d. fashion, one can further link the distributed smoothness to the local sample size $N/n$, where $c_k$ decreases with the increase of $N/n$, see e.g. \cite{Shamir2013,fan2019communication} for further discussion.

\begin{remark}
	We provide a toy example to illustrate the difference between Assumptions~\ref{assum:f_restricted} and \ref{assum:f_nrestricted}. Let $x \in \mathbb{R}$ and $\ell_z(x) = L_{\delta(z)}(x) + \frac{\sigma}{2}x^2$, where $L_{\delta(z)}(\cdot)$ denotes Huber loss with $\delta(z) > 0$. We set  $\delta(z)= 0.99$ for  half of $z\in\mathcal{M}$, and  $1$ for the rest. Therefore,  $f(x) = \frac12(L_{0.99}(x)+L_1(x))+\frac{\sigma}{2}x^2$	
with $x^* = 0$. Now, consider some local loss function
 $f_k(x) = (\frac12 + \epsilon_k) L_{0.99}(x) + (\frac12 - \epsilon_k) L_1(x) + \frac{\sigma}{2}x^2$ for some $\epsilon_k$. We can verify that Assumption \ref{assum:f_restricted} holds with $c_k = |\epsilon_k|/100$ while Assumption \ref{assum:f_nrestricted} holds with $c_k = | \epsilon_k|$. Therefore, the distributed smoothness parameter can be much smaller when invoking the weaker Assumption~\ref{assum:f_restricted}.
\end{remark}

    \let\theassumption\origtheassumption

		\section{Convergence in the Strongly Convex Case} \label{sec:main_results}

 In this section, we describe three variance-reduced routines for \texttt{LocalUpdate} used in Alg.~\ref{alg:general}, namely SVRG \cite{Johnson2013}, SARAH \cite{Nguyen2017,nguyen2019optimal}, and MiG \cite{zhou2018simple}, and analyze their convergence when $f(\cdot)$ is strongly convex, respectively.

	\subsection{Distributed SVRG (D-SVRG)}
	\label{sec:result_dsvrg}
 The \texttt{LocalUpdate} routine of D-SVRG is described in Alg.~\ref{alg:dissvrg_main}. Theorem~\ref{thm:dissvrgII} provides the convergence guarantee of D-SVRG as long as the distributed restricted smoothness parameter is small enough.
	\begin{theorem}[D-SVRG]\label{thm:dissvrgII}
	    Suppose that Assumptions \ref{assum:l_smooth}, \ref{assum:l_convex} and \ref{assum:f_sc} hold, and Assumption~\ref{assum:f_restricted} holds with $c_k \leq c < \sigma/4$. With $m   = O(\kappa (1-4c/\sigma)^{-2})$ and proper step size $\eta$, the iterates of D-SVRG satisfy
    	\[
    	    \ex{f(\tx^{t+1}) - f^*} <  \frac{\sigma- 2c}{2(\sigma-3c)}  \ex{f(\tx^t) - f^*}.
    	\]
The communication and runtime complexities of finding an $\epsilon$-optimal solution (in terms of function value) are $\mathcal{O}\paren{\zeta^{-1}\log(1/\epsilon)}$ and 
     $   	\mathcal{O}\paren{(N/n + 
    	\zeta^{-2}\kappa)\zeta^{-1}\log(1/\epsilon)}$ respectively, 
    	where $\zeta = 1 - 4c/\sigma$. %If Assumption \ref{assum:l_convex} does not hold, the time complexity degenerates to $\mathcal{O}\paren{(N/n + \zeta^{-2}\kappa^2)\zeta^{-1}\log(1/\epsilon)}$.
	\end{theorem}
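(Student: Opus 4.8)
The plan is to reduce the outer-loop contraction to a single-worker estimate. Since the parameter server sets $\tx^{t+1}$ equal to a uniformly random $y_k^{t+}$ and every worker satisfies $c_k \le c$, it suffices to prove the per-worker bound $\mathbb{E}[f(y_k^{t+}) - f^*] \le \rho\,\mathbb{E}[f(\tx^t)-f^*]$ with the common rate $\rho = \frac{\sigma-2c}{2(\sigma-3c)}$, and then average over $k$. Fix a worker $k$ and write its inner iterates as $w_0 = \tx^t$, $w_{s+1} = w_s - \eta v_s$, where $v_s = \nabla\ell_z(w_s) - \nabla\ell_z(\tx^t) + \nabla f(\tx^t)$ with $z$ sampled uniformly from $\cM_k$. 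Conditioned on $w_s$, local sampling gives $\mathbb{E}_z[v_s] = \nabla f(w_s) + b_s$, where $b_s = \nabla h_k(\tx^t) - \nabla h_k(w_s)$ and $h_k := f - f_k$. Unlike centralized SVRG the bias $b_s$ does not vanish, and controlling it is the heart of the argument.

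The key tool is that restricted smoothness of $h_k$ at $x^*$ (Assumption~\ref{assum:f_restricted}) yields two estimates used throughout. First, routing through $x^*$ and applying the definition twice gives $\norm{b_s} \le c_k\paren{\norm{\tx^t - x^*} + \norm{w_s - x^*}}$. Second, integrating the gradient bound along the segment to $x^*$ gives the Bregman-type estimate $\big| h_k(y) - h_k(x^*) - \langle \nabla h_k(x^*), y - x^*\rangle \big| \le \tfrac{c_k}{2}\norm{y-x^*}^2$. Because $\nabla f(x^*) = 0$, the second estimate lets me rewrite any \emph{local} quantity built from $f_k = f - h_k$ as the corresponding \emph{global} gap $f(\cdot)-f^*$ up to an additive $\tfrac{c_k}{2}\norm{\cdot - x^*}^2$, which strong convexity ($\tfrac\sigma2\norm{y-x^*}^2 \le f(y)-f^*$ from Assumption~\ref{assum:f_sc}) converts into a relative error of order $c_k/\sigma$.

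With these in hand I run the classical SVRG bookkeeping while tracking the $c_k$ corrections. For the variance I split $\mathbb{E}_z\norm{v_s}^2 \le 2\mathbb{E}_z\norm{\nabla\ell_z(w_s)-\nabla\ell_z(x^*)}^2 + 2\mathbb{E}_z\norm{\nabla\ell_z(\tx^t)-\nabla\ell_z(x^*)-\nabla f(\tx^t)}^2$ and bound each piece using $L$-smoothness and convexity of $\ell_z$ (Assumptions~\ref{assum:l_smooth}--\ref{assum:l_convex}); this produces local Bregman divergences of $f_k$, which the second estimate turns into $O\paren{L\,[\,(f(w_s)-f^*)+(f(\tx^t)-f^*)\,]}$ with $O(c_k/\sigma)$ slack. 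For the one-step progress I expand $\mathbb{E}_z\norm{w_{s+1}-x^*}^2 = \norm{w_s-x^*}^2 - 2\eta\,\langle \nabla f(w_s)+b_s,\, w_s-x^*\rangle + \eta^2\,\mathbb{E}_z\norm{v_s}^2$, lower-bound the principal inner product by $\langle \nabla f(w_s), w_s-x^*\rangle \ge f(w_s)-f^*$, and absorb the bias via Cauchy--Schwarz together with the first estimate and Young's inequality. This is exactly where $c$ enters linearly and where a portion of the strong convexity is spent so that the net coefficient of $f(w_s)-f^*$ stays favorable.

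Telescoping the one-step inequality over $s = 0,\dots,m-1$ cancels the $\norm{w_s-x^*}^2$ terms and leaves $\sum_s \mathbb{E}[f(w_s)-f^*]$ controlled by $\norm{w_0-x^*}^2 = \norm{\tx^t-x^*}^2 \le \tfrac2\sigma(f(\tx^t)-f^*)$ plus an $m(f(\tx^t)-f^*)$ term from the snapshot. Since $y_k^{t+}$ is chosen so that $\mathbb{E}[f(y_k^{t+})-f^*] \le \tfrac1m\sum_s\mathbb{E}[f(w_s)-f^*]$ (a random or averaged inner iterate), dividing by $m$ shrinks the initial-distance contribution to $O(\kappa/(\eta\sigma m))$; taking $m = O(\kappa(1-4c/\sigma)^{-2})$ with a suitable $\eta = O(1/L)$ drives it below the target, and collecting the $c$-dependent constants yields precisely $\rho = \frac{\sigma-2c}{2(\sigma-3c)}$, which is below $1$ exactly when $c<\sigma/4$. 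The complexity claims then follow from $1-\rho = \frac{\sigma-4c}{2(\sigma-3c)} \ge \tfrac12\zeta$ with $\zeta = 1-4c/\sigma$, giving $O(\zeta^{-1}\log(1/\epsilon))$ outer rounds, each costing $O(N/n)$ for the snapshot gradient plus $m = O(\zeta^{-2}\kappa)$ inner evaluations per worker. The main obstacle is precisely the biased term $b_s$ and the local-versus-global mismatch it creates: every unbiasedness and variance identity of centralized SVRG now carries an error proportional to $c_k$, and the whole difficulty is to show these errors are uniformly dominated by strong convexity, which is possible only for $c<\sigma/4$ and is what pins down the exact rate.
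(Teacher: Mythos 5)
Your proposal is correct and follows essentially the same route as the paper: the per-worker reduction, the identification of the bias $\nabla(f-f_k)(\tx^t)-\nabla(f-f_k)(w_s)$ controlled by routing restricted smoothness through $x^*$, the Bregman-type comparison of local and global gaps absorbed via strong convexity, the one-step expansion of $\|w_{s+1}-x^*\|^2$, telescoping, and the uniformly random inner iterate all match the paper's argument. The paper merely packages the same inequalities in the dissipativity/supply-rate formalism of Hu et al.\ (your variance split and inner-product bound correspond exactly to the supply rates $S_1,S_2,S_3$ with $\lambda_1=\lambda_2=2\eta^2$, $\lambda_3=\eta$), and your parameter choices and rate $\frac{\sigma-2c}{2(\sigma-3c)}$ coincide with theirs.
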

		
	Theorem \ref{thm:dissvrgII} establishes the linear convergence of function values in expectation for D-SVRG, as long as the parameter $c$ is sufficiently small, e.g. $c<\sigma/4$. From the expressions of communication and runtime complexities, it can be seen that the smaller $c$, the faster D-SVRG converges -- suggesting that the homogeneity of distributed data plays an important role in the efficiency of distributed optimization. When $c$ is set such that $c/\sigma$ is bounded above by a constant smaller than $1/4$, i.e. $\zeta=\mathcal{O}(1)$, the runtime complexity becomes $\mathcal{O}\paren{\paren{N/n+\kappa}\log(1/\epsilon)}$, which improves the counterpart of SVRG $\mathcal{O}\paren{\paren{N+\kappa}\log(1/\epsilon)}$ in the centralized setting.

\begin{remark}
The above \texttt{LocalUpdate} routine corresponds to the so-called Option II (w.r.t. setting $y_k^{t,+}$ as uniformly at random selected from previous updates) specified in \cite{Johnson2013}. Under similar assumptions, we also establish the convergence of D-SVRG using Option I, where the output $y_k^{t,+}$ is set as $y_k^{t,m}$. In addition, D-SVRG still converges linearly in the absence of Assumption~\ref{assum:l_convex}. We leave these extensions in the appendix. 
\end{remark}
 
 \begin{algorithm}[t]
	\caption{\texttt{LocalUpdate via SVRG/SARAH}}
	\label{alg:dissvrg_main}
		\begin{algorithmic}[1]
		\STATE {\textbf{Input:} local data $\mathcal{M}_k$, $\tx^{t}$, $\nabla f(\tx^t)$;}
		\STATE {\textbf{Parameters:} step size $\eta$, number of iterations $m$;}
\STATE{Set $\subx_{k}^{t, 0}= \tx^{t}$, $v_k^{t, 0}= \nabla f(\tx^t)$;}
         \FOR{$s=0,...,m-1$}
        \STATE{Sample $z$  from $\cM_k$ uniformly at random;}\\
        \STATE{Compute $$
        v_k^{t,s+1}  =
        \begin{cases}
        \nabla \ell_z(\subx_{k}^{t,s+1}) -\nabla \ell_z(\tx^{t}) + \nabla f(\tx^{t}) & \texttt{SVRG}\\
        \nabla \ell_z(\subx_{k}^{t,s+1}) -\nabla \ell_z(\subx_{k}^{t,s}) + v_{k}^{t,s} & \texttt{SARAH}\\
        \end{cases}
        $$}
        \STATE{$\subx_{k}^{t, s+1}  = \subx_{k}^{t,s} - \eta v_k^{t,s}$;} 
        \ENDFOR
        \STATE{Set $\subx_{k}^{t, +}$ uniformly at random from $\subx_{k}^{t, 1},\ldots, \subx_{k}^{t, m}$.}
        	\end{algorithmic}
\end{algorithm}

	\subsection{Distributed SARAH (D-SARAH)}
	\label{sec:result_dsarah}
	 The \texttt{LocalUpdate}  of D-SARAH is also described in Alg. \ref{alg:dissvrg_main}, which is different from SVRG in the update of stochastic gradient $ v_k^{t,s}$, by using a recursive formula proposed in \cite{Nguyen2017}. Theorem~\ref{thm:dissarah} provides the convergence guarantee of D-SARAH as long as the distributed restricted smoothness parameter is small enough.

\begin{theorem}[D-SARAH]\label{thm:dissarah}
    Suppose that Assumptions \ref{assum:l_smooth}, \ref{assum:l_convex} and \ref{assum:f_sc} hold, and Assumption~\ref{assum:f_restricted} holds with $c_k \leq c < \sqrt{2}\sigma/4$. With $m = O(\kappa (1-2\sqrt{2}c/\sigma)^{-2})$ and proper step size $\eta$, the iterates of D-SARAH satisfy
    \[
        \ex{\norm{\nabla f(\tx^{t+1})}^2} < \frac{1}{2-8c^2/\sigma^2} \ex{\norm{\nabla f(\tx^t)}^2}.
    \]
   The communication and runtime complexities of finding an $\epsilon$-optimal solution (in terms of gradient norm) are $\mathcal{O}\paren{\zeta^{-1}\log(1/\epsilon)}$  and
   $\mathcal{O}\paren{(N/n + \zeta^{-2}\kappa)\zeta^{-1}\log(1/\epsilon)}$ respectively,
    where $\zeta = 1 - 2\sqrt{2}c/\sigma$.
\end{theorem}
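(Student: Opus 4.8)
The plan is to run the convergence analysis through the \emph{global} objective $f$ rather than the local $f_k$, exploiting that the descent inequality only requires $f$ to be $L$-smooth (Assumption~\ref{assum:l_smooth}) while the strong-convexity restoring force (Assumption~\ref{assum:f_sc}) lives at the global level. Fixing a worker $k$ and an outer round $t$, abbreviate the inner iterates as $y_s := \subx_k^{t,s}$ and $v_s := v_k^{t,s}$, with $y_0 = \tx^t$ and $v_0 = \nabla f(\tx^t)$. The observation that makes the biased initialization harmless is that the recursive SARAH update in Alg.~\ref{alg:dissvrg_main} depends on the local data only through gradient \emph{differences} $\nabla\ell_z(y_{s})-\nabla\ell_z(y_{s-1})$; hence the inner loop is exactly standard SARAH applied to the tilted objective $\hat f_k(\cdot) = f_k(\cdot) + \innprod{\nabla(f-f_k)(\tx^t),\,\cdot}$, whose full gradient at $y_0$ equals $v_0 = \nabla f(\tx^t)$, so the SARAH variance identity remains available. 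Because $f$ is $L$-smooth, the update $y_{s+1}=y_s-\eta v_s$ yields the descent bound $f(y_{s+1}) \le f(y_s) - \tfrac{\eta}{2}\norm{\nabla f(y_s)}^2 - \tfrac{\eta}{2}(1-L\eta)\norm{v_s}^2 + \tfrac{\eta}{2}\norm{\nabla f(y_s)-v_s}^2$; telescoping, lower-bounding $f(y_{m+1})\ge f^*$, and invoking the global PL inequality $f(\tx^t)-f^* \le \tfrac{1}{2\sigma}\norm{\nabla f(\tx^t)}^2$ turns the inner loop into a bound on $\Gamma := \sum_{s}\ex{\norm{\nabla f(y_s)}^2}$ in terms of $\norm{\nabla f(\tx^t)}^2$, the residual $S := \sum_s\ex{\norm{\nabla f(y_s)-v_s}^2}$, and a negative $\sum_s\norm{v_s}^2$ term.

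The crux is then to control the residual $S$, i.e.\ the deviation of the \emph{local} recursive estimator $v_s$ from the \emph{global} gradient $\nabla f(y_s)$. I would track $e_s := \norm{\nabla f(y_s)-v_s}$ through its own recursion. The decisive simplification is that, measured against the global gradient, the initial error vanishes: $e_0 = \norm{\nabla f(\tx^t)-v_0}=0$. Expanding $e_s^2$ and taking the conditional expectation over the fresh sample produces three contributions: (i) the usual SARAH per-step variance $V_s = \ex{\norm{\nabla\ell_z(y_s)-\nabla\ell_z(y_{s-1})}^2}-\norm{\nabla f_k(y_s)-\nabla f_k(y_{s-1})}^2$, which by the co-coercivity of the convex $L$-smooth components (Assumption~\ref{assum:l_convex}) is $O(\eta L\,\norm{v_{s-1}}^2)$ and is absorbed by the negative term from the descent step exactly as in centralized SARAH; (ii) a bias increment $\beta_s := \nabla(f-f_k)(y_s)-\nabla(f-f_k)(y_{s-1})$; and (iii) a cross term $\innprod{\nabla f(y_{s-1})-v_{s-1},\,\beta_s}$ coupling the accumulated error with the bias.

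The distributed restricted smoothness (Assumption~\ref{assum:f_restricted}) enters precisely on the bias increment. Since $\beta_s$ is a \emph{difference} of $\nabla(f-f_k)$ at two points, I route it through $x^*$ to obtain $\norm{\beta_s}\le c_k\bigl(\norm{y_s-x^*}+\norm{y_{s-1}-x^*}\bigr)$, with \emph{no} constant offset $\norm{\nabla(f-f_k)(x^*)}$ --- this is exactly why tracking the global residual (so that $e_0=0$ and only increments appear) is essential, and why the weaker restricted notion suffices. Converting distances to gradients by the global strong convexity, $\norm{y_s-x^*}\le\sigma^{-1}\norm{\nabla f(y_s)}$, gives $B := \sum_s\ex{\norm{\beta_s}^2}\le 4c_k^2\sigma^{-2}\,\Gamma$, so the bias feeds back proportionally to the very quantity $\Gamma$ being bounded. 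Assembling the descent bound with the residual recursion yields a self-consistent inequality of the form $\Gamma\,(1 - O(c^2/\sigma^2)) \lesssim \tfrac{1}{\sigma\eta}\norm{\nabla f(\tx^t)}^2$; choosing $\eta=\Theta(1/L)$ and $m = O(\kappa\,\zeta^{-2})$ drives the clean SARAH factor $\tfrac{1}{\sigma\eta(m+1)}$ down so that the leading constant becomes $\tfrac12$, while the feedback term sharpens $1-O(c^2/\sigma^2)$ to $1-4c^2/\sigma^2$. Since the output $y_k^{t,+}$ is uniform over $y_1,\dots,y_m$, we have $\ex{\norm{\nabla f(y_k^{t,+})}^2}=\tfrac1m\sum_{s\ge1}\ex{\norm{\nabla f(y_s)}^2}$; averaging over the uniformly selected worker and using $c_k\le c$ gives $\ex{\norm{\nabla f(\tx^{t+1})}^2} < \tfrac{1}{2-8c^2/\sigma^2}\ex{\norm{\nabla f(\tx^t)}^2}$. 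The stated complexities follow from the per-round contraction ($O(\zeta^{-1}\log(1/\epsilon))$ rounds) together with $m=O(\kappa\zeta^{-2})$ inner steps and the $O(N/n)$ cost of each local full gradient.

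The main obstacle I anticipate is the cross term (iii), which is absent for the memoryless SVRG estimator of Theorem~\ref{thm:dissvrgII} but unavoidable for SARAH's recursive estimator: the bias does not reset each inner step but propagates through the recursion and couples with the accumulated error $e_{s-1}$. A naive Young split on $\innprod{\nabla f(y_{s-1})-v_{s-1},\,\beta_s}$ inflates the error by a factor $(1+\alpha)$ per step and risks an $e^{\Theta(\alpha m)}$ blow-up over the $m=O(\kappa\zeta^{-2})$ inner iterations. Keeping this under control --- by summing the recursion and applying Cauchy--Schwarz to $\sum_s e_{s-1}\norm{\beta_s}$ so that it closes against $\sqrt{S}\sqrt{B}$ with $B\lesssim c^2\sigma^{-2}\Gamma$, while simultaneously balancing the SARAH variance $V_s$ against the negative $\norm{v_s}^2$ budget --- is the delicate accounting that pins down the exact threshold $c<\sqrt2\sigma/4$ and the factor $8c^2/\sigma^2$.
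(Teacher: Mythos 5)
Your skeleton matches the paper's in every structural respect: the descent bound of Lemma~\ref{lem:grad_comp}, the observation that the initial residual $\nabla f(\tx^t)-v^0$ vanishes, restricted smoothness applied to differences of $\nabla(f-f_k)$ routed through $x^*$, and strong convexity to convert $\norm{y^s-x^*}$ into $\sigma^{-1}\norm{\nabla f(y^s)}$ are exactly the ingredients used. The gap is in how you close the residual recursion, and it sits precisely at the point you flag as delicate. Your plan controls the accumulated bias through the increments $\beta_j$ individually, via $B=\sum_j\ex{\norm{\beta_j}^2}$ plus Cauchy--Schwarz on $\sum_j\ex{\innprod{e_{j-1},\beta_j}}$. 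But the telescoped recursion already gives $\ex{\norm{e_s}^2}=\sum_{j\le s}\bigl[\ex{V_j}+\ex{\norm{\beta_j}^2}-2\ex{\innprod{e_{j-1},\beta_j}}\bigr]$, so summing over $s$ to form $S=\sum_{s}\ex{\norm{e_s}^2}$ produces $\sum_j(m-j)\ex{\norm{\beta_j}^2}\le \tfrac{4c^2m}{\sigma^2}\Gamma$, and the cross terms only worsen this ($S\lesssim mB+m\sqrt{SB}$ forces $S\lesssim m^2B$). The feedback coefficient on $\Gamma$ is then $O(c^2m/\sigma^2)=O(c^2\kappa/\sigma^2)$ rather than $4c^2/\sigma^2$, which would only tolerate $c=O(\sigma/\sqrt{\kappa})$ --- far short of the stated threshold $c<\sqrt{2}\sigma/4$.

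The missing idea is that the bias increments telescope: $\sum_{j\le s}\beta_j=\nabla(f-f_k)(y^s)-\nabla(f-f_k)(y^0)$ is a difference at just two points, bounded by $c\paren{\norm{y^s-x^*}+\norm{y^0-x^*}}$ with no dependence on $s$ or $m$. The paper therefore never forms your cross term (iii): it splits $\nabla f(y^s)-v^s$ into the \emph{total} bias $\nabla(f-f_k)(y^0)-\nabla(f-f_k)(y^s)$ plus the local residual $\nabla f(y^0)-\nabla f_k(y^0)+\nabla f_k(y^s)-v^s$. The latter is a pure martingale in $s$ (its increments have conditional mean zero against $\nabla f_k$, not $\nabla f$), so Lemma~\ref{lem:trace_concaten} yields an exact variance identity with no cross terms, and Lemma~\ref{lem:v_concaten} telescopes it down to $\tfrac{\eta L}{2-\eta L}\norm{v^0}^2$. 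A single application of $\norm{a+b}^2\le 2\norm{a}^2+2\norm{b}^2$ then produces the coefficient $4c^2/\sigma^2$ and the threshold $\sqrt{2}\sigma/4$. If you rewrite your recursion as $e_s=(\text{martingale part})+(\text{telescoped bias})$ \emph{before} taking norms, your argument becomes the paper's; as proposed, it does not reach the stated constants.
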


	Theorem \ref{thm:dissarah} establishes the linear convergence of the gradient norm in expectation for D-SARAH, as long as the parameter $c$ is small enough. Similar to D-SVRG, a smaller $c$ leads to faster convergence of D-SARAH. When $c$ is set such that $c/\sigma$ is bounded above by a constant smaller than $\sqrt{2}/4$, the runtime complexity becomes $\mathcal{O}\paren{\paren{N/n+\kappa}\log(1/\epsilon)}$, which improves the counterpart of SARAH $\mathcal{O}\paren{\paren{N+\kappa}\log(1/\epsilon)}$ in the centralized setting. In particular, Theorem \ref{thm:dissarah}  suggests that D-SARAH may allow a larger $c$, compared with D-SVRG, to guarantee convergence.

\subsection{Distributed MiG (D-MiG)}	\label{sec:result_dmig}
 The \texttt{LocalUpdate} of D-MiG is described in Alg.~\ref{alg:dismig_main}, which is inspired by the inner loop of the MiG algorithm \cite{zhou2018simple}, a recently proposed accelerated variance-reduced algorithm. Compared with D-SVRG and D-SARAH, D-MiG uses additional information from the previous updates, according to Line 3-6 in Alg.~\ref{alg:dismig_main}. Theorem~\ref{thm:dismig} provides the convergence guarantee of D-MiG, as long as the distributed smoothness parameter is small enough.

 %to $\Theta (N/n)$ and  proper 
 
	\begin{theorem}[D-MiG]\label{thm:dismig}
    Suppose that Assumptions \ref{assum:l_smooth}, \ref{assum:l_convex} and \ref{assum:f_sc} hold, and Assumption~\ref{assum:f_nrestricted} holds with $c_k \leq c < \sigma/8$. Let $w= (1+\eta \sigma)/(1+3\eta c)$. With $m = O(N/n) $ and proper step size $\eta$, the iterates of D-MiG achieve an $\epsilon$-optimal solution 
within a communication complexity of $\mathcal{O}((1+\sqrt{\kappa/(N/n)})\log(1/\epsilon))$ and runtime complexity of $\mathcal{O}((N/n + \sqrt{\kappa N/n})\log(1/\epsilon))$.
\end{theorem}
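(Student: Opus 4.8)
The plan is to analyze the accelerated inner loop executed by a single worker, establish a per-outer-iteration contraction of an accelerated Lyapunov function, and then handle the parameter server's random worker selection by observing that the contraction holds uniformly across workers. Throughout, the crux is to track the bias introduced by reusing local data, which is what distinguishes this from the centralized analysis of MiG in \cite{zhou2018simple}.

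First I would isolate the key consequence of reusing local data: the gradient estimator $v_k^{t,s} = \nabla\ell_z(\cdot) - \nabla\ell_z(\tx^t) + \nabla f(\tx^t)$, with $z$ drawn uniformly from $\cM_k$, is no longer unbiased. Conditioning on the current probe point $y$, its expectation is $\nabla f_k(y) - \nabla f_k(\tx^t) + \nabla f(\tx^t) = \nabla f(y) - \big[(\nabla f - \nabla f_k)(y) - (\nabla f - \nabla f_k)(\tx^t)\big]$, so the bias equals the increment of $\nabla(f-f_k)$, which Assumption~\ref{assum:f_nrestricted} bounds by $c_k\norm{y - \tx^t} \le c\norm{y-\tx^t}$. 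I would likewise record the variance bound for $v_k^{t,s}$, which under $L$-smoothness and convexity of each $\ell_z$ (Assumptions~\ref{assum:l_smooth}--\ref{assum:l_convex}) reduces to the standard SVRG-type estimate but stated in terms of the local function $f_k$; these local quantities I would then convert back to $f$ using $L$-smoothness together with $c$-distributed smoothness.

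Next I would set up the accelerated one-step analysis following the MiG template, using a potential function coupling the function-value gap and the squared distance of the momentum sequence to $x^*$, of the form $\Phi_s = \alpha\,(f(\cdot) - f^*) + \tfrac{1}{2\eta}\norm{z_s - x^*}^2$, with the aim of proving $\ex{\Phi_{s+1}} \le w^{-1}\ex{\Phi_s} + (\text{error})$ for $w = (1+\eta\sigma)/(1+3\eta c)$. The momentum parameter $w$ is precisely where strong convexity (helpful, contributing $\eta\sigma$) and the distributed-smoothness bias (harmful, contributing the $3\eta c$ in the denominator) enter; the factor $3$ arises from how the bound $c\norm{y-\tx^t}$ propagates through the three-term coupling $y = (1-\theta)\tx^t + \theta x_s$ and the subsequent completion-of-square steps. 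I expect the \emph{main obstacle} to be here: acceleration is fragile to gradient error, so I must show that the bias term $\innprod{(\nabla f-\nabla f_k)(y)-(\nabla f-\nabla f_k)(\tx^t),\, z_s - x^*}$ can be absorbed into the guaranteed decrease of $\Phi_s$ via Young's inequality rather than accumulating, which is possible exactly when $c < \sigma/8$ keeps $w$ safely above $1$.

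Finally I would telescope the one-step contraction over the $m$ inner iterations to obtain an outer-loop bound; since every worker obeys the identical recursion (with the common bound $c \ge c_k$), taking expectation over the server's uniform selection of $\tx^{t+1}$ among the $y_k^{t+}$ preserves the contraction. Choosing $m = O(N/n)$ balances the $1/m$ variance-reduction gain against the inner-loop cost and, with the accelerated step size, yields a per-epoch factor requiring $O\big((1+\sqrt{\kappa/(N/n)})\log(1/\epsilon)\big)$ outer iterations. Each outer iteration uses two communication rounds and $O(m + N/n)=O(N/n)$ local gradient evaluations per worker, giving communication complexity $O\big((1+\sqrt{\kappa/(N/n)})\log(1/\epsilon)\big)$ and runtime equal to that count times $O(N/n)$, namely $O\big((N/n + \sqrt{\kappa N/n})\log(1/\epsilon)\big)$, as claimed.
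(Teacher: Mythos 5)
Your plan follows essentially the same route as the paper's proof: adapt the MiG potential-function analysis of \cite{zhou2018simple}, bound the bias of the locally-sampled gradient estimator via distributed smoothness, absorb it into the strong-convexity decrease to obtain the weight $w=(1+\eta\sigma)/(1+3\eta c)$, telescope the inner loop with weights $w^s$, and close the outer recursion by averaging over the server's random selection. The only details to make explicit are that the distance term of the Lyapunov function telescopes across outer rounds because each worker warm-starts $x_k^{t+1,0}=x_k^{t,m}$ (lines 3--6 of Alg.~\ref{alg:dismig_main}), and that extracting the claimed $(1+\sqrt{\kappa/(N/n)})\log(1/\epsilon)$ complexity requires a case split on whether $m(\sigma-3c)/L$ is small or large, choosing $\theta=\sqrt{m(\sigma-3c)/(3L)}$ in the former case and $\theta=1/2$ in the latter.
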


Theorem~\ref{thm:dismig} establishes the linear convergence of D-MiG under standard smoothness of $f-f_k$, in order to fully harness the power of acceleration. While we do not make it explicit in the theorem statement, the time complexity of D-MiG also decreases as $c$ gets smaller. Furthermore, the runtime complexity of D-MiG is smaller than that of D-SVRG/D-SARAH when $\kappa=\Omega (N/n)$. 

 \begin{remark}\label{remark:dig}
Theorem~\ref{thm:dismig} continues to hold for regularized empirical risk minimization, where the loss function is given as  $F(x) =f(x) + g(x)$, and $g(x)$ is a convex and non-smooth regularizer. In this case, line 11 in Alg.~\ref{alg:dismig_main} is changed to 
$$x_k^{t,s+1} = \argmin_x \left\{\frac{1}{2\eta}\norm{x-x_k^{t,s}}^2 + \left\langle v_{k}^{t,s},x\right\rangle+g(x)\right\}. $$
\end{remark}
    
    \begin{algorithm}[t]
		\caption{\texttt{LocalUpdate via MiG}}
		\label{alg:dismig_main}
		\begin{algorithmic}[1]
		\STATE {\textbf{Input:} local data $\mathcal{M}_k$, $\tx^{t}$, $\nabla f(\tx^t)$;}
		\STATE {\textbf{Parameters:} step size $\eta$, number of iterations $m$, and $w$;}
		\IF{$t = 0$}
		\STATE{Set $x_{k}^{t,0}=  \tx^{t}$;}
		\ELSE
		\STATE{Set $x_{k}^{t,0}=  x_{k}^{t-1,m}$;}  
		\ENDIF
             \FOR{$s=0,...,m-1$}
             \STATE{Set $y_k^{t,s} = (1-\theta)\tx^t + \theta x_k^{t,s}$;}
            \STATE{Sample $z$ from $\mathcal{M}_k$ uniformly at random, and set
             $v_{k}^{t,s} = \nabla \ell_z(\subx_{k}^{t,s})-\nabla \ell_z(\tx^t) + \nabla f(\tx^t);$}
             \STATE{$x_k^{t,s+1} = x_k^{t,s} - \eta v_k^{t,s}$;}
            \ENDFOR
            \STATE{Set $y_k^{t+} = \paren{\sum_{j=0}^{m-1} w^j}^{-1}\sum_{j=0}^{m-1} w^j y_k^{t,j+1}$.}
		\end{algorithmic}
\end{algorithm}

	%Under i.i.d. and balanced data, this requirement can be justified for large enough data size, see Appendix~\ref{sec:discussions}. 

\section{Regularization Helps Unbalanced Data}
\label{sec:regSVRG}
 
 So far, we have established the convergence when the distributed smoothness is not too large. While it may be reasonable in certain  settings, e.g. in a data center where one has control over how to distribute data, it is increasingly harder to satisfy when the data are generated locally and heterogeneous across workers. However, when such conditions are violated, the algorithms might diverge. In this situation, adding a regularization term might ensure the convergence, at the cost of possibly slowing down the convergence rate. We consider regularizing the local gradient update of D-SVRG in Alg.~\ref{alg:dissvrg_main} as
\begin{equation}\label{eq:regularized_update}
v_k^{t,s+1}  = \nabla \ell_z(\subx_{k}^{t,s+1}) -\nabla \ell_z(\tx^{t}) + \nabla f(\tx^{t}) + \mu_k (\subx_{k}^{t,s+1} - \tx^t),
\end{equation}
where the last regularization term penalizes the proximity between the current iterates $\subx_{k}^{t,s+1}$ and the reference point $\tx^t$, where $\mu_k>0$ is the regularization parameter employed at the $\tth{k}$ worker. We have the following theorem.

\begin{theorem}[Distributed Regularized SVRG (D-RSVRG)]\label{thm:rsvrg}
Suppose that Assumptions \ref{assum:l_smooth}, \ref{assum:l_convex} and \ref{assum:f_sc} hold, and Assumption~\ref{assum:f_restricted} holds with $c_k < (\sigma+\mu_k)/4$. Let $\mu = \min_{1\leq k\leq n} \mu_k$. With proper $m$ and step size $\eta$, there exists some constant $0\le \nu < 1$ such that the iterates of D-RSVRG satisfy
\begin{align} \label{eq:rate_drsvrg}
\ex{f(\tx^{t+1}) - f^*} \le& \left(1- (1-\nu)\max\left\{\frac{\sigma}{L+\mu},1-\frac{\mu}{\sigma}\right\}\right) \nonumber \\
& \cdot \ex{f(\tx^t) - f^*},
\end{align}
and the runtime complexity of finding an $\epsilon$-optimal solution is bounded by
\small
\[
\mathcal{O}\paren{(N/n + 
	\zeta^{-2}\bar{\kappa})\zeta^{-1} \min\left\{\kappa + \frac{\mu}{\sigma},\frac{1}{\max\{ 1 -\mu/\sigma, 0 \} }\right\}\log(1/\epsilon)}, 
\]
\normalsize
where $\zeta = 1 - 4c/(\sigma+\mu)$ and $\bar{\kappa} = (L+\mu)/(\sigma +\mu)$.
\end{theorem}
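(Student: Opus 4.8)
The plan is to realize the regularized update \eqref{eq:regularized_update} as an instance of the \texttt{LocalUpdate via SVRG} routine applied to a shifted surrogate, so that the single-epoch machinery behind Theorem~\ref{thm:dissvrgII} can be reused with improved curvature constants. First I would fix the outer index $t$ and observe that, because the regularizer $\tfrac{\mu_k}{2}\norm{x-\tx^t}^2$ and its gradient both vanish at the reference point $\tx^t$, the update \eqref{eq:regularized_update} is exactly the D-SVRG inner loop run on the surrogate sample losses $\ell_z(x)+\tfrac{\mu_k}{2}\norm{x-\tx^t}^2$, whose average over $\cM_k$ is the surrogate local loss $\phi_k^t(x):=f_k(x)+\tfrac{\mu_k}{2}\norm{x-\tx^t}^2$, while the control variate $\nabla f(\tx^t)$ coincides with the gradient at $\tx^t$ of the surrogate global objective $\phi^t:=f+\tfrac{\mu_k}{2}\norm{\cdot-\tx^t}^2$. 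Taking the conditional expectation over the sampled $z$, the expected descent direction equals $\nabla\phi^t(x)-\big(\nabla(f-f_k)(x)-\nabla(f-f_k)(\tx^t)\big)$: the surrogate objective $\phi^t$ is now $(\sigma+\mu_k)$-strongly convex and $(L+\mu_k)$-smooth, whereas the bias term is still governed by the restricted smoothness $c_k$ of $f-f_k$ from Assumption~\ref{assum:f_restricted}.

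Second, I would re-run the inner-loop analysis of Theorem~\ref{thm:dissvrgII} verbatim under the substitutions $\sigma\mapsto\sigma+\mu_k$ and $L\mapsto L+\mu_k$. The variance/bias bound on $\E\norm{v_k^{t,s}-\nabla\phi^t(x_k^{t,s})}^2$ goes through unchanged, and the contraction condition $c<\sigma/4$ is replaced by $c_k<(\sigma+\mu_k)/4$, precisely the hypothesis of the theorem; this produces a per-epoch contraction constant $\nu$ (with $1-\nu=\Theta(\zeta)$) toward the minimizer of $\phi^t$ using $m=O(\bar\kappa\,\zeta^{-2})$ inner steps, where $\bar\kappa=(L+\mu)/(\sigma+\mu)$ and $\zeta=1-4c/(\sigma+\mu)$. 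Because the worker with the smallest $\mu_k=\mu$ has the least favorable effective curvature, and because the PS selects $\tx^{t+1}$ from a single worker, it suffices to control this worst worker, which is why $\mu$ and $\zeta$ enter through $\min_k\mu_k$; this is also what makes "larger regularization on less distributed-smooth machines" the right design.

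Third, the surrogate minimizer is not $x^*$, so I would convert the per-epoch guarantee into progress on the true objective through a proximal-point argument. Since the fixed point of the expected inner update is $\argmin_x\{f(x)+\tfrac{\mu_k}{2}\norm{x-\tx^t}^2\}$, the outer step behaves like an inexact proximal-point step on the $\sigma$-strongly convex $f$ with parameter $\mu_k$. Bounding the progress of this step in two complementary ways---by smoothness of $f$, yielding a factor $\sigma/(L+\mu)$, and by strong convexity, yielding a factor $1-\mu/\sigma$---and retaining the better of the two produces the $\max$ appearing in \eqref{eq:rate_drsvrg}; combining with the inner contraction $\nu$ gives the stated rate $1-(1-\nu)\max\{\sigma/(L+\mu),\,1-\mu/\sigma\}$. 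For the runtime I would multiply the per-outer-iteration cost $N/n+\zeta^{-2}\bar\kappa$ (one full local gradient plus $m$ inner steps) by the inverse rate; since $1-\nu=\Theta(\zeta)$ and $1/\max\{\sigma/(L+\mu),1-\mu/\sigma\}=\min\{\kappa+\mu/\sigma,\,1/\max\{1-\mu/\sigma,0\}\}$, the number of outer rounds is $O(\zeta^{-1}\min\{\kappa+\mu/\sigma,1/\max\{1-\mu/\sigma,0\}\}\log(1/\epsilon))$, reproducing the claimed complexity.

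The main obstacle I anticipate is the third step: cleanly tracking the discrepancy between the moving surrogate minimizer and $x^*$ while the regularization is heterogeneous across workers and the PS output is a single randomly selected worker. Establishing that the inexact proximal step still contracts toward $x^*$---and that the inner-loop bias, controlled only by $c_k$ relative to the boosted curvature $\sigma+\mu_k$, does not destroy this contraction---requires carefully coupling the surrogate suboptimality $\phi^t(\tx^{t+1})-\min\phi^t$ back to $f(\tx^{t+1})-f^*$. The two terms in the $\max$ correspond exactly to the ``large/small regularization'' regimes of Table~\ref{tab:dist-opt-algs}, and reconciling both inside a single Lyapunov bound, rather than arguing them as separate cases, is the delicate part.
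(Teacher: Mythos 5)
Your proposal matches the paper's proof essentially step for step: the paper likewise recasts the regularized update as unregularized D-SVRG on the auxiliary losses $\ell_z + \frac{\mu_k}{2}\|\cdot - \tx^t\|^2$ (which are $(L+\mu_k)$-smooth and $(\sigma+\mu_k)$-strongly convex while $f^t - f_k^t = f - f_k$ retains the same restricted smoothness), invokes Theorem~\ref{thm:dissvrgII} for the per-round contraction $\nu$, and then converts surrogate progress into progress on $f$ by lower-bounding $f(\tx^t) - f^{t*}$ both by $\frac{\sigma}{L+\mu}\,(f(\tx^t)-f^*)$ (via smoothness of $f^t$) and by $(1-\mu/\sigma)(f(\tx^t)-f^*)$ (via strong convexity), which is exactly your "two complementary bounds" yielding the $\max$. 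The coupling you flag as delicate is resolved in the paper by the simple identity $f(\tx^{t+1}) - f^* = \bigl(f^t(\tx^{t+1}) - f^{t*}\bigr) + \bigl(f^{t*} - f^*\bigr) - \frac{\mu}{2}\|\tx^{t+1}-\tx^t\|^2$ followed by dropping the nonpositive term, so no separate inexact proximal-point machinery is needed.
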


Compared with Theorem~\ref{thm:dissvrgII}, 
Theorem~\ref{thm:rsvrg} relaxes the assumption $c_k<\sigma/4$ to $c_k<(\sigma+\mu_k)/4$, which means that by inserting a larger regularization $\mu_k$ to local workers that are not distributed smooth, i.e. those with large $c_k$, one can still guarantee the convergence of D-RSVRG. However, increasing $\mu$ leads to a slower convergence rate: a large $\mu=8L$ leads to an iteration complexity $\mathcal{O}(\kappa \log(1/\epsilon))$, similar to gradient descent.  Compared with SCOPE \cite{zhao2017scope} which requires a uniform regularization $\mu>L-\sigma$, our analysis applies tailored regularization to local workers, and potentially allows much smaller regularization to guarantee the convergence, since $c_k$'s can be much smaller than the smoothness parameter $L$.

\section{Convergence in the Nonconvex Case}
\label{sec:extensions}

In this section, we extend the convergence analysis of D-SARAH to handle nonconvex loss functions, since SARAH-type algorithms are recently shown to achieve near-optimal performances for nonconvex problems \cite{wang2018spiderboost,nguyen2019optimal,fang2018spider}. As a modification that eases the analysis,  we make every worker return $y_k^{t+} = y_k^{t,m}$ in line 9 of Alg.~\ref{alg:dissvrg_main}.
Our result is summarized in the theorem below.

\begin{theorem}[D-SARAH for non-convex losses]\label{thm:dissarah_nonc}
    Suppose that Assumption~\ref{assum:l_smooth} and Assumption~\ref{assum:f_nrestricted} hold with $c_k\leq c$. With the step size $\eta\le \frac{2}{L\paren{1+\sqrt{1+8m+4m(m-1)c^2/L^2}}}$, D-SARAH satisfies
    \[
        \frac{1}{Tm}\sum_{t=0}^{T-1}\sum_{s=0}^{m-1} \ex{\norm{\nabla f(y^{t,s}_{k(t)})}^2} \le\frac{2}{\eta Tm}\paren{f(\tx^0) - f^*},
    \]
    where $k(t)$ is the agent index selected in the $\tth{t}$ round for parameter update, i.e. $\tx^{t+1} = y^{t+}_{k(t)}$ (c.f. line 8 of Alg.~\ref{alg:general}). To find an $\epsilon$-optimal solution, the communication complexity is $\mathcal{O}\paren{1+ (\sqrt{n/N}+c/L)L/\epsilon}$, and the runtime complexity is $\mathcal{O} (N/n + (\sqrt{N/n} + N/n\cdot c/L) L/\epsilon )$ by setting $m=\Theta(N/n)$.
\end{theorem}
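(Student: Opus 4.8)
The plan is to adapt the single-machine nonconvex SARAH analysis of \cite{Nguyen2017,nguyen2019optimal,wang2018spiderboost}, using the global objective $f$ as the potential function and isolating the single extra effect of sampling from the reused local data $\cM_k$, which is precisely what the distributed smoothness parameter $c$ controls. First I would fix an outer round $t$ and a worker $k$, abbreviate $y_s := y_k^{t,s}$, $v_s := v_k^{t,s}$, and let $\cF_{j-1}$ denote the history before the sample $z_j$ is drawn, so that $y_j = y_{j-1} - \eta v_{j-1}$ is $\cF_{j-1}$-measurable and $z_j$ is the only fresh randomness at step $j$. Since each $\ell_z$ is $L$-smooth (Assumption~\ref{assum:l_smooth}), so is $f$; combining the descent inequality with the polarization identity $-\innprod{\nabla f(y_s), v_s} = \tfrac12\big(\norm{\nabla f(y_s)-v_s}^2 - \norm{\nabla f(y_s)}^2 - \norm{v_s}^2\big)$ and summing over $s=0,\dots,m-1$ yields
\[
\ex{f(y_m)} \le f(y_0) - \frac{\eta}{2}\sum_{s=0}^{m-1}\ex{\norm{\nabla f(y_s)}^2} - \frac{\eta(1-L\eta)}{2}\sum_{s=0}^{m-1}\ex{\norm{v_s}^2} + \frac{\eta}{2}\sum_{s=0}^{m-1}\ex{\norm{\nabla f(y_s) - v_s}^2}.
\]

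The crux, and the step I expect to be the main obstacle, is bounding the estimation error $R_s := \ex{\norm{\nabla f(y_s) - v_s}^2}$. Because $v_0 = \nabla f(y_0)$, the SARAH recursion $v_j = v_{j-1} + \nabla\ell_{z_j}(y_j) - \nabla\ell_{z_j}(y_{j-1})$ telescopes into $\nabla f(y_s) - v_s = \sum_{j=1}^s \xi_j$ with $\xi_j = \big(\nabla f(y_j) - \nabla f(y_{j-1})\big) - \big(\nabla \ell_{z_j}(y_j) - \nabla \ell_{z_j}(y_{j-1})\big)$. In the single-machine case each $\xi_j$ is a martingale difference and the cross terms vanish; here, however, $z_j \sim \cM_k$ only gives $\ex{\nabla\ell_{z_j}(\cdot)\mid\cF_{j-1}} = \nabla f_k(\cdot)$, so the conditional mean $b_j := \ex{\xi_j\mid \cF_{j-1}} = \nabla(f-f_k)(y_j) - \nabla(f-f_k)(y_{j-1})$ is nonzero. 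This is exactly where distributed smoothness enters: Assumption~\ref{assum:f_nrestricted} gives $\norm{b_j} \le c_k\eta\norm{v_{j-1}}$, while $L$-smoothness of $\ell_{z_j}$ controls the martingale part $\ex{\norm{\xi_j - b_j}^2} \le L^2\eta^2\ex{\norm{v_{j-1}}^2}$. Splitting $\norm{\sum_j \xi_j}^2 \le 2\norm{\sum_j b_j}^2 + 2\norm{\sum_j (\xi_j - b_j)}^2$, applying Cauchy--Schwarz to the bias sum and orthogonality of martingale differences to the other, I would obtain
\[
R_s \le 2\eta^2\big(L^2 + s\,c_k^2\big)\sum_{j=0}^{s-1}\ex{\norm{v_j}^2}.
\]

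Summing $R_s$ over $s$, each $\ex{\norm{v_j}^2}$ gets weight $2\eta^2\sum_{s>j}(L^2 + s c_k^2) \le \eta^2\big(2mL^2 + m(m-1)c^2\big)$ (using $c_k\le c$), so $\sum_s R_s \le \eta^2(2mL^2 + m(m-1)c^2)\sum_j \ex{\norm{v_j}^2}$. Substituting back, the coefficient of $\sum_j\ex{\norm{v_j}^2}$ becomes $-\tfrac{\eta}{2}\big[1 - L\eta - \eta^2(2mL^2 + m(m-1)c^2)\big]$, and I would pick $\eta$ making it nonpositive. Solving $L\eta + \eta^2(2mL^2 + m(m-1)c^2) \le 1$ and rationalizing the positive root reproduces exactly $\eta \le 2/\big(L(1+\sqrt{1+8m+4m(m-1)c^2/L^2})\big)$, matching the statement (the factor $8m$ traces back to the $2$ in the bias/martingale split). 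With such $\eta$ the velocity term drops and I am left with the per-round descent $\frac{\eta}{2}\sum_{s=0}^{m-1}\ex{\norm{\nabla f(y_k^{t,s})}^2} \le f(\tx^t) - \ex{f(y_k^{t,m})}$, valid for every worker $k$ without any convexity.

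Finally I would remove the worker index by a tower-of-expectations argument: since all workers start from the common $\tx^t$ and $k(t)$ is uniform with $\tx^{t+1} = y_{k(t)}^{t,m}$, averaging the per-round descent over $k$ gives $\ex{f(\tx^{t+1})\mid \tx^t} \le f(\tx^t) - \frac{\eta}{2}\,\ex{\sum_{s}\norm{\nabla f(y_{k(t)}^{t,s})}^2 \mid \tx^t}$. Taking full expectations and telescoping over $t=0,\dots,T-1$, the left side collapses to $f(\tx^0) - \ex{f(\tx^T)} \le f(\tx^0) - f^*$, which is the claimed bound. For the complexity I would set $m = \Theta(N/n)$, so $\eta = \Theta(1/(L\sqrt m + mc))$; forcing the right-hand side below $\epsilon$ needs $T = \Theta((L\sqrt{n/N}+c)/\epsilon)$ communication rounds, and since each round costs one full local gradient plus $m$ stochastic gradients (i.e. $O(N/n)$ per worker), the runtime is $O\big(N/n + (\sqrt{N/n} + (N/n)c/L)L/\epsilon\big)$, as stated.
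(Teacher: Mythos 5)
Your proposal is correct and follows essentially the same route as the paper: the base inequality you derive from the descent lemma is exactly Lemma~\ref{lem:grad_comp}, and your bias-plus-martingale split of $\nabla f(y_s)-v_s=\sum_j \xi_j$ (with $\sum_j b_j = \nabla(f-f_k)(y_s)-\nabla(f-f_k)(y_0)$ controlled by distributed smoothness and the martingale part by Lemma~\ref{lem:trace_concaten}-type orthogonality) is the same decomposition the paper obtains by adding and subtracting $\nabla f(y^0)-\nabla f_k(y^0)+\nabla f_k(y^s)$, yielding the identical bound $2\eta^2(L^2+sc^2)\sum_j\ex{\norm{v_{j-1}}^2}$ and hence the same step-size condition, per-round descent, and complexities.
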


Theorem~\ref{thm:dissarah_nonc} suggests that D-SARAH converges as long as the step size is small enough. Furthermore, a smaller $c$ allows a larger step size $\eta$, and hence faster convergence. To gain further insights, assuming i.i.d. data at each worker, by concentration inequalities it is known that $c/L=\mathcal{O} (\sqrt{\log(N/n)/(N/n)})$ under mild conditions \cite{Mei2017}, and consequently, the runtime complexity of finding an $\epsilon$-accurate solution using D-SARAH is $\mathcal{O} (N/n + L \sqrt{\log(N/n)N/n}/\epsilon )$.  This is comparable to the best known result $\mathcal{O}(N + L\sqrt{N}/\epsilon)$ for the centralized SARAH-type algorithms in the nonconvex setting \cite{nguyen2019optimal,fang2018spider,wang2018spiderboost} up to logarithmic factors, where the data size is replaced from $N$ to the size of local data $N/n$ -- demonstrating again the benefit of data distribution.

	\section{Proofs of Main Theorems} \label{sec:outline}

In this section, we outline the convergence proofs of D-SVRG (Theorem~\ref{thm:dissvrgII}), D-RSVRG (Theorem~\ref{thm:rsvrg}), D-SARAH in the strongly convex (Theorem~\ref{thm:dissarah}) and nonconvex (Theorem~\ref{thm:dissarah_nonc}) settings, while leaving the details to Appendix~\ref{sec:proof_dmig}. The convergence proof of D-MiG (Theorem~\ref{thm:dismig}) is delegated to the supplemental materials due to space limits. Throughout this section, we simplify the notations $y_k^{t,s}$ and $v_k^{t,s}$ by dropping the $t$ superscript and the $k$ subscript, whenever the meaning is clear, since it is often sufficient to analyze the convergence of a specific worker $k$ during a single round.

\subsection{D-SVRG (Theorem~\ref{thm:dissvrgII})} \label{sec:proof_dsvrg}
We generalize the analysis of SVRG using the dissipativity theory in \cite{Hu2018} to the analysis of D-SVRG, which might be of independent interest. 	 Setting $\xi^s = y^s - x^*$, we can write the local update of D-SVRG via the following linear time-invariant system \cite{Hu2018}:
	\begin{equation*}
	\xi^{s+1} = \xi^s - \eta\left[\nabla \ell_{z}\paren{\subx^s}-\nabla \ell_z\paren{\subx^0} + \nabla f\paren{\subx^0}\right],
	\end{equation*}
where $z$ is selected uniformly at random from local data points $\mathcal{M}_k$, or equivalently,
	\begin{equation}
	\xi^{s+1} = A\xi^s + Bw^s,
	\end{equation}
	with $A = I_d$, $B = \begin{bmatrix}
	-\eta I_d & -\eta I_d
	\end{bmatrix}$, and
	\begin{equation*}
		w^s = \begin{bmatrix}
            	\nabla \ell_z\paren{\subx^s} - \nabla \ell_z\paren{x^*} \\
            	\nabla \ell_z\paren{x^*}-\nabla \ell_z\paren{\subx^0} + \nabla f\paren{\subx^0}
	           \end{bmatrix}.
	\end{equation*}
	Here, $I_d$ is the identity matrix of dimension $d$.

%The theory aims to build the following dissipation inequality:
%	\begin{equation} \label{eq:dissipation_eq}
%	V(\xi^{s+1}) \le \rho^2 V(\xi^{s}) + S(\xi^{s},w^{s}),
%	\end{equation}
%	where $\rho \in (0,1]$. The inequality indicates that at least a fraction $(1-\rho^2)$ of the internal energy will dissipate at every iteration. 

Dissipativity theory characterizes how ``the inputs'' $w^s$, $s = 0,1,2,\ldots$ drive the internal energy stored in the ``states'' $\xi^s$, $s = 0,1,2,\ldots$ via an energy function $V:\mbR^{d}\mapsto \mbR_+$ and a supply rate $S:\mbR^{d}\times \mbR^{2d}\mapsto \mbR$. For our purpose, it is sufficient to choose the energy function as 
$ V(\xi) = \| \xi\|_2^2$. By setting $S(\xi, w) = \sum_{j=1}^J \lambda_j S_j(\xi, w)$ as the supply rate, where
	\begin{equation}\label{eq:supply_rates}
	S_j(\xi, w) = [\xi^\top, w^\top]  X_j  [\xi^\top, w^\top]^\top ,
	\end{equation} 
	we have
	\begin{equation}\label{equ:dissip}
	V(\xi^{s+1}) \le \rho^2 V(\xi^k) + \sum_{j=1}^J \lambda_j S_j(\xi^s, w^s)
	\end{equation}
	for some $\rho \in (0,1]$ as long as there exist non-negative scalars $\lambda_j$ such that
	\begin{equation}\label{equ:lmi}
	\begin{bmatrix}
	A^\top A - \rho^2 I_d & A^\top  B \\
    	B^\top  A & B^\top  B \\
	\end{bmatrix}
	-\sum_{j=1}^J \lambda_j X_j \preceq 0.
	\end{equation}
	In fact, by left multiplying $[\xi_k^\top, w_k^\top]$ and right multiplying $[\xi_k^\top, w_k^\top]^\top$ to (\ref{equ:lmi}), we recover (\ref{equ:dissip}). 
	
	To invoke the dissipativity theory for convergence analysis, we take two steps. First, we  capture properties of the objective function such as strong convexity and co-coercivity using the supply rate functions by selecting proper matrices $X_j$ (c.f. Lemma~\ref{lem:svrgop2}). Next, we invoke \eqref{equ:lmi} to determine a valid combination of supply rate functions to guarantee the dissipativity inequality (c.f. Lemma~\ref{lem:SVRG_op2_pre}), which entails the rate of convergence. 
	
%	We can capture  and characterize the optimality of $\xi_k$ by $V(\xi_k)$, thus wrapping the convergence proof as the existence of $\rho\in (0,1)$ satisfying \eqref{equ:lmi}, which can be viewed as a generalized eigenvalue optimization problem \cite{lessard2016analysis}.

To proceed, we start by defining the supply rate functions in \eqref{eq:supply_rates}. Since $\xi^s \in \mbR^d$ and $w^s \in \mbR^{2d}$, we will write $X_j$ as $X_j = \bar{X}_j \otimes I_d$, where $\bar{X}_j\in \mbR^{3\times 3}$. Following \cite{Hu2018}, we consider the supply rates characterized by the following matrices:
		\begin{equation} \label{eq:supply_rates_optionII}
		\textstyle
    		\bar{X}_1 = 
    		\begin{bmatrix}
    		0 & 0 & 0\\
    		0 & 1 & 0\\
    		0 & 0 & 0\\
    		\end{bmatrix},
    		\bar{X}_2 = 
    		\begin{bmatrix}
    		0 & 0 & 0\\
    		0 & 0 & 0\\
    		0 & 0 & 1\\
    		\end{bmatrix},
    		\bar{X}_3 = 
    		\begin{bmatrix}
    		0 & -1 & -1\\
    		-1 & 0 & 0\\
    		-1 & 0 & 0\\
    		\end{bmatrix},
		\end{equation}
which correspond to the supply rates:
\begin{align*}
\ex{S_1}  &= \ex{\norm{\nabla \ell_z\paren{\subx^s}-\nabla \ell_z\paren{x^*}}^2}, \\
\ex{S_2}  &= \ex{\norm{\nabla \ell_z(x^*) - \nabla \ell_z(\subx^0) + \nabla f(\subx^0)}^2}, \\
\ex{S_3} & = - 2\ex{ \innprod{ \subx^s - x^*, \nabla \ell_z(\subx^s)-\nabla \ell_z(\subx^0) + \nabla f(\subx^0)}}.
\end{align*}
The following lemma, proved in Appendix \ref{proof:lem:svrgop2}, bounds the supply rates when $z$ is drawn from the local data $\mathcal{M}_k$.

	\begin{lemma}
		\label{lem:svrgop2}
		Suppose that Assumption \ref{assum:l_smooth}, \ref{assum:l_convex}, \ref{assum:f_sc} and \ref{assum:f_restricted} hold with $c_k\leq c$.
	For the supply rates defined in \eqref{eq:supply_rates_optionII}, we have 
		\begin{equation*}
		\begin{cases}
    		\ex{S_1} \le 2L\ex{f(\subx^s) - f^*} + 2cL\ex{\norm{\subx^s-x^*}^2};\\
    		\ex{S_2} \le 4L\ex{f(\subx^0) - f^*} + c(4L + 2c)\ex{\norm{\subx^0 - x^*}^2};\\ 
  \ex{S_3} \le -2\ex{f(\subx^s) - f^*} + 3c\ex{\norm{\subx^s - x^*}^2} + c\ex{\norm{\subx^0 - x^*}^2}. 
		\end{cases}
		\end{equation*}
	\end{lemma}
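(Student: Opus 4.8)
The plan is to reduce everything to two ingredients: the fact that uniform sampling of $z$ from the local set $\cM_k$ yields $\ex{\nabla \ell_z(\cdot)} = \nabla f_k(\cdot)$, the \emph{local} gradient rather than the global one, and an elementary Bregman bound for restricted-smooth functions. Writing $D_g(u,x^*) := g(u) - g(x^*) - \innprod{\nabla g(x^*), u - x^*}$, I would first record that restricted smoothness of $f-f_k$ with regard to $x^*$ gives, upon integrating $\nabla g$ along the segment $[x^*,u]$, the bound $|D_{f-f_k}(u,x^*)| \le \tfrac{c}{2}\norm{u - x^*}^2$; and since $\nabla f(x^*)=\bzero$ we have $D_f(u,x^*) = f(u) - f^*$. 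Combined with co-coercivity $\norm{\nabla \ell_z(u) - \nabla \ell_z(v)}^2 \le 2L\,D_{\ell_z}(u,v)$ for $L$-smooth convex $\ell_z$ (Assumptions~\ref{assum:l_smooth}--\ref{assum:l_convex}), these drive all three estimates. Throughout I condition on the history determining $\subx^s,\subx^0$ and average over the fresh draw $z$, then take an outer expectation by the tower rule; this is legitimate because $\subx^s$ is independent of the current $z$.

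For $S_1$, apply co-coercivity with $u=\subx^s$, $v=x^*$ and take $\ex{\cdot}_z$ to get $\ex{S_1} \le 2L\,D_{f_k}(\subx^s,x^*)$, then split $D_{f_k}=D_f + D_{f_k-f}$ and invoke the Bregman bound, giving $\ex{S_1} \le 2L(f(\subx^s)-f^*) + cL\norm{\subx^s-x^*}^2$, which is subsumed by the stated bound. For $S_2$, I would use the exact bias--variance identity $\ex{S_2} = \norm{\ex{g_z}}^2 + \ex{\norm{g_z - \ex{g_z}}^2}$ with $g_z = \nabla\ell_z(x^*) - \nabla\ell_z(\subx^0) + \nabla f(\subx^0)$. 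The conditional mean is $\ex{g_z} = \nabla(f_k - f)(x^*) - \nabla(f_k - f)(\subx^0)$ (again using $\nabla f(x^*)=\bzero$), so $\norm{\ex{g_z}} \le c\norm{\subx^0-x^*}$ directly by restricted smoothness; the variance is dominated by the second moment $\ex{\norm{\nabla\ell_z(\subx^0)-\nabla\ell_z(x^*)}^2}$, handled by co-coercivity exactly as in $S_1$. Combining yields a bound of the claimed form.

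For $S_3$, I would first replace $\nabla\ell_z$ by $\nabla f_k$ under $\ex{\cdot}_z$ (the inner product is linear and $\subx^s-x^*$ is history-measurable), obtaining $\ex{S_3} = -2\ex{\innprod{\subx^s - x^*,\ \nabla f_k(\subx^s) - \nabla f_k(\subx^0) + \nabla f(\subx^0)}}$. Decomposing the bracket as $\nabla f(\subx^s) + \bigl[\nabla(f_k-f)(\subx^s) - \nabla(f_k-f)(\subx^0)\bigr]$, the first piece gives $-2\innprod{\subx^s-x^*,\nabla f(\subx^s)} \le -2(f(\subx^s)-f^*)$ by convexity of $f$. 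For the second, I insert $\nabla(f_k-f)(x^*)$, bound each difference by restricted smoothness ($c\norm{\subx^s-x^*}$ and $c\norm{\subx^0-x^*}$), then apply Cauchy--Schwarz and Young's inequality $\norm{\subx^s-x^*}\norm{\subx^0-x^*} \le \tfrac12\norm{\subx^s-x^*}^2 + \tfrac12\norm{\subx^0-x^*}^2$, producing exactly $3c\norm{\subx^s-x^*}^2 + c\norm{\subx^0-x^*}^2$ and matching the claim.

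The conceptual crux, and the only place this departs from textbook SVRG, is that the local stochastic gradient is \emph{biased}: its conditional mean is $\nabla f_k$, not $\nabla f$. This bias vanishes in the centralized case but here resurfaces as the nonzero mean of $g_z$ in $S_2$ and the cross term in $S_3$. The technical heart of the lemma is thus to isolate precisely the discrepancy $f-f_k$ and to show that restricted smoothness \emph{with respect to $x^*$} — rather than full smoothness — already controls it, keeping the distributed-smoothness parameter $c$ (possibly $\ll L$) as the governing quantity. I expect the most delicate bookkeeping to be in $S_2$, where one must split the contribution of $\nabla\ell_z(\subx^0)-\nabla\ell_z(x^*)$ cleanly between the bias and variance terms without double counting.
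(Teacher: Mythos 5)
Your proof is correct and follows essentially the same route as the paper's: average co-coercivity of $\ell_z$ to get a bound in terms of $D_{f_k}$, transfer to $D_f$ via the restricted smoothness of $f-f_k$, isolate the bias $\nabla(f_k-f)(x^*)-\nabla(f_k-f)(\subx^0)$ in $S_2$, and handle $S_3$ by convexity of $f$ plus Cauchy--Schwarz and Young. Your two minor deviations --- the exact bias--variance identity for $S_2$ in place of $\norm{a+b}^2\le 2\norm{a}^2+2\norm{b}^2$, and the one-argument Bregman bound $|D_{f-f_k}(u,x^*)|\le \tfrac{c}{2}\norm{u-x^*}^2$ in place of the paper's two-argument Lemma~\ref{lem:rsp666} --- only tighten the constants and still imply the stated inequalities.
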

	 
The next lemma, proved in Appendix~\ref{proof:thm:SVRG_op2_pre}, details the dissipativity inequality when \eqref{equ:lmi} holds when setting $\rho=1$.
	\begin{lemma}
	    \label{lem:SVRG_op2_pre}
		Suppose that Assumption \ref{assum:l_smooth}, \ref{assum:l_convex}, \ref{assum:f_sc} and \ref{assum:f_restricted} hold with $c_k\leq c$. If there exist non-negative scalars $\lambda_j,\ j = 1, 2, 3$, such that 
		\begin{equation}\label{eq:svrg_inequality}
		\lambda_3 - L\lambda_1-(2L\lambda_1+3\lambda_3)c/\sigma > 0
		\end{equation} 
		and
		\begin{equation}
		\label{LMI:SVRG2}
		\begin{bmatrix}
		0 & \lambda_3 - \eta & \lambda_3 - \eta \\
		\lambda_3 - \eta & \eta^2 - \lambda_1 & \eta^2 \\
		\lambda_3 - \eta & \eta^2 & \eta^2 - \lambda_2
		\end{bmatrix}
		\preceq 0
		\end{equation}
		hold. Then D-SVRG satisfies  
		\begin{align}	\label{equ:svrg2}
		& \ex{f (\subx^+) - f^*}   \le \paren{\lambda_3 - L\lambda_1-(2L\lambda_1+3\lambda_3)c/\sigma}^{-1} \left[\frac{1}{\sigma m}+\frac{c}{\sigma}\paren{ (4L+2c)\lambda_2 + \lambda_3}+ 2L\lambda_2\right] \ex{f(\subx^0) - f^*} , 
		\end{align} 
		where the final output $\subx^+$ is selected from $y^1, \cdots, y^{m}$ uniformly at random.
	\end{lemma}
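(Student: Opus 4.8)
The plan is to run the dissipativity machinery just assembled. First I would verify that the matrix inequality \eqref{LMI:SVRG2} is nothing but the certificate \eqref{equ:lmi} specialized to the present system with $\rho = 1$. Indeed, for $A = I_d$ and $B = \begin{bmatrix} -\eta I_d & -\eta I_d\end{bmatrix}$ one has $A^\top A - I_d = 0$, $A^\top B = B$, and $B^\top B = \eta^2 \begin{bmatrix} I_d & I_d \\ I_d & I_d \end{bmatrix}$, while $\sum_{j} \lambda_j X_j = \paren{\sum_j \lambda_j \bar X_j}\otimes I_d$ with $\sum_j \lambda_j \bar X_j$ read off from \eqref{eq:supply_rates_optionII}. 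Subtracting the latter from the former reproduces \eqref{LMI:SVRG2} entry by entry (modulo the factor $I_d$). Hence, left- and right-multiplying \eqref{equ:lmi} by $[\xi^\top,w^\top]$ as in the text yields the per-step dissipativity inequality $V(\xi^{s+1}) \le V(\xi^s) + \sum_{j=1}^3 \lambda_j S_j(\xi^s, w^s)$, valid for every inner iteration $s = 0,\dots,m-1$.

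Next I would take expectations and substitute the three supply-rate bounds from Lemma~\ref{lem:svrgop2}. Writing $a_s = \ex{\norm{y^s - x^*}^2} = \ex{V(\xi^s)}$ and $b_s = \ex{f(y^s) - f^*}$, the inequality becomes $a_{s+1} \le a_s + (2L\lambda_1 - 2\lambda_3)\,b_s + (2cL\lambda_1 + 3c\lambda_3)\,a_s + 4L\lambda_2\, b_0 + \big[c(4L+2c)\lambda_2 + c\lambda_3\big]\,a_0$, where the last two terms, coming from $S_2$ and the $\norm{y^0 - x^*}^2$ part of $S_3$, depend only on the reference point $y^0 = \tx$. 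The decisive step is to invoke $\sigma$-strong convexity (Assumption~\ref{assum:f_sc}) in the form $a_s \le \tfrac{2}{\sigma} b_s$ to absorb each quadratic term $a_s$, whose coefficient $2cL\lambda_1 + 3c\lambda_3$ is nonnegative, into a multiple of $b_s$. After this substitution the net coefficient of $b_s$ equals $2L\lambda_1 - 2\lambda_3 + \tfrac{2c}{\sigma}(2L\lambda_1 + 3\lambda_3) = -2\big[\lambda_3 - L\lambda_1 - (2L\lambda_1 + 3\lambda_3)c/\sigma\big]$, which is strictly negative by hypothesis \eqref{eq:svrg_inequality}.

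Finally I would telescope over $s = 0,\dots,m-1$: the state terms $a_s$ cancel in pairs, the nonnegative terminal term $a_m$ is discarded, and one is left with $2\Delta \sum_{s=0}^{m-1} b_s \le a_0 + 4Lm\lambda_2\,b_0 + m\big[c(4L+2c)\lambda_2 + c\lambda_3\big]\,a_0$, where $\Delta := \lambda_3 - L\lambda_1 - (2L\lambda_1+3\lambda_3)c/\sigma > 0$. Applying strong convexity once more at the reference point, $a_0 \le \tfrac{2}{\sigma} b_0$, converts the right-hand side into a multiple of $b_0 = \ex{f(y^0)-f^*}$. Dividing by $2m\Delta$ and using that the uniformly random output obeys $\ex{f(y^+)-f^*} = \tfrac1m\sum_s b_s$ then produces exactly \eqref{equ:svrg2}, the bracketed factor being $\tfrac1{\sigma m} + \tfrac{c}{\sigma}\big((4L+2c)\lambda_2+\lambda_3\big) + 2L\lambda_2$.

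I expect the only genuine obstacle to be the sign bookkeeping in the middle step: one must collect the coefficient of the function-gap sum $\sum_s b_s$ after the strong-convexity substitutions and recognize that its negativity is precisely the assumed inequality \eqref{eq:svrg_inequality}; the remaining work --- the LMI identification and the telescoping --- is routine linear algebra. A minor point to handle carefully is the indexing of the averaged iterate, since the supply rates $S_1, S_3$ are evaluated at $y^0,\dots,y^{m-1}$, so the output average must be aligned with the corresponding inner iterates to match \eqref{equ:svrg2}.
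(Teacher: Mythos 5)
Your proposal is correct and follows essentially the same route as the paper's own proof: identify \eqref{LMI:SVRG2} as the certificate \eqref{equ:lmi} with $\rho=1$, plug the supply-rate bounds of Lemma~\ref{lem:svrgop2} into the dissipation inequality, absorb the quadratic terms via $\ex{\norm{\subx^s-x^*}^2}\le\tfrac{2}{\sigma}\ex{f(\subx^s)-f^*}$ so that the resulting coefficient of the function gap is $-2$ times the quantity in \eqref{eq:svrg_inequality}, telescope, and apply strong convexity once more at $\subx^0$. The indexing caveat you flag at the end (the telescoped sum runs over $s=0,\dots,m-1$ while the output is drawn from $\subx^1,\dots,\subx^m$) is present in the paper's argument as well and is glossed over there in the same way.
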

		
We are now ready to prove Theorem~\ref{thm:dissvrgII}. Set $\lambda_1 = \lambda_2 = 2\eta^2$, and $\lambda_3 = \eta$. We have \eqref{eq:svrg_inequality} holds with the step size $\eta < \dfrac{\sigma - 3c}{L(2\sigma+4c)} $, and (\ref{LMI:SVRG2}) holds since
$$	\begin{bmatrix}
	0 & 0 & 0 \\
	0 & -\eta^2 & \eta^2 \\
	0 & \eta^2 & -\eta^2
	\end{bmatrix}
	\preceq 0. $$
   	Applying Lemma~\ref{lem:SVRG_op2_pre} with the above choice of parameters, \eqref{equ:svrg2} can be written as
	\begin{align*} 
		\ex{f (\subx^+) - f^*} \le & \Bigg[  \dfrac{1/(\eta m) + c\paren{(8L+4c)\eta+1}+4L\eta}{\sigma(1-2L\eta )- c(4L\eta + 3) } \Bigg]   \cdot\ex{f(\subx^0) - f^*}.
	\end{align*}
	
When $c < \sigma /4$, by choosing $\eta = (1-4c/\sigma)(40L)^{-1}$, $m = 160\kappa(1-4c/\sigma)^{-2}$, a convergence rate no more than $\nu:= 1-\frac{1}{2}\cdot\frac{\sigma-4c}{\sigma-3c}$ is obtained. Therefore, after line 8 of Alg.~\ref{alg:general}, we have for D-SVRG,
\begin{align*}
		    \ex{f(\tx^{t+1})-f^*} &\le \frac{1}{n}\sum_{k=1}^n \ex{f(y_k^{t+})-f^*} \\
		    &\le \paren{1-\dfrac{1}{2}\cdot\dfrac{\sigma-4c}{\sigma-3c}}\ex{f(\tx^t)-f^*}.
\end{align*}
To obtain an $\epsilon$-optimal solution in terms of function value, we need $\mathcal{O}(\xi^{-1}\log(1 / \epsilon))$ communication rounds, where $\zeta = 1 - 4c/\sigma$. Per round, the runtime complexity at each worker is $\mathcal{O}(N/n + m)=\mathcal{O}(N/n + \zeta^{-2}\kappa )$, where the first term corresponds to evaluating the batch gradient over the local data in parallel, and the second term corresponds to evaluating the stochastic gradients in the inner loop. Multiplying this with the communication rounds leads to the overall runtime.

%Note that in the distributed setting, calculating the full-batch gradient only needs $\mathcal{O}(N/n)$ time instead of $\mathcal{O}(N)$ time due to parallelized computation, so the overall time complexity to find an $\epsilon$-optimal solution is
%		\[
%		\mathcal{O}\paren{(N/n + \zeta^{-2}\kappa)\zeta^{-1}\log(1/\epsilon)},
%		\] 

\subsection{D-RSVRG (Theorem~\ref{thm:rsvrg})}
\label{proof:rsvrg}

Consider an auxiliary sample function at the  $\tth{t}$ round and the $\tth{k}$ worker as
\[
\ell_{\mu_k}^t(x;z) = \ell(x;z) + \frac{\mu_k}{2}\norm{x - \tx^t}^2.
\]
This leads to the auxiliary local and global loss functions, respectively,
\[
f^t_i(x) = \frac{1}{|\mathcal{M}_i|}\sum_{z\in \mathcal{M}_i}\ell^t_{\mu_k}(x;z) = f_i(x) + \frac{\mu_k}{2}\norm{x - \tx^t}^2,  
\]
for $1\leq i \leq n$, and
\begin{equation}\label{eq:auxiliary_loss}
f^t(x) = \frac{1}{n}\sum_{i=1}^n f^t_i(x) = f(x) + \frac{\mu_k}{2}\norm{x - \tx^t}^2.
\end{equation}
Moreover, we have 
\begin{align*}
& \quad \nabla \ell_{\mu_k}^t(\subx_{k}^{t,s+1};z) -\nabla \ell_{\mu_k}^t(\tx^{t};z) + \nabla f^t(\tx^{t})\\
&=\nabla \ell(\subx_{k}^{t,s+1};z) -\nabla \ell(\tx^{t};z) + \nabla f(\tx^{t}) + \mu_k (\subx_{k}^{t,s+1} - \tx^t),
\end{align*}
which means that D-RSVRG performs in exactly the same way as the unregularized D-SVRG with the auxiliary loss functions $\ell_{\mu_k}^t$ in the $\tth{t}$ round. Note that $\ell_{\mu_k}^t$ is $(\mu_k + L)$-smooth and that $f^t$ is $(\mu_k + \sigma)$-strongly-convex, while the restricted smoothness of the $\tth{k}$ worker, $f^t - f_k^t = f - f_k$ remains unchanged. Applying Theorem~\ref{thm:dissvrgII}, we have 
 \begin{equation}\label{eq:auxiliary_descent}
\ex{f^t(\tx^{t+1}) - f^{t*}} < \nu \ex{f^t(\tx^t) - f^{t*}},
\end{equation}
where $f^{t*}$ is the optimal value of $f^t$, when Assumptions \ref{assum:l_smooth}, \ref{assum:l_convex} and \ref{assum:f_sc} hold, and Assumption~\ref{assum:f_restricted} holds, as long as $c_k < (\sigma+\mu_k)/4$, along with proper $m$ and step size $\eta$.

However, the definition of the regularized loss functions $\ell^t_{\mu}$ and ${f}^t$ rely on $\tx^t$, which changes over different rounds. Our next step is to relate the descent of $f^t$ to $f$. To this end, we have
\begin{align*}
& \ex{f(\tx^{t+1}) - f^*} \\
=& \ex{f^t(\tx^{t+1}) - f^{t*}} + f^{t*} - \ex{\frac{\mu}{2}\norm{\tx^t - \tx^{t+1}}^2} - f^* \\
<& \ex{ f^t(\tx^t) - f^{t*} } - (1-\nu)\ex{f^t(\tx^t) - f^{t*}} + f^{t*} - f^* \\
=& \ex{f(\tx^t) - f^*} - (1-\nu)\ex{f(\tx^t) - f^{t*} },
\end{align*}
where the second line uses \eqref{eq:auxiliary_loss}, the third line uses \eqref{eq:auxiliary_descent}, and the last line follows from $f(\tx^t) = f^t(\tx^t)$.

We can continue to bound $f(\tx^t) - f^{t*}$ in two manners. First, $f(\tx^t) - f^{t*} = f^t(\tx^t) - f^{t*} > \frac{1}{2(L+\mu)}\norm{\nabla f (\tx^t)}^2 > \frac{\sigma}{L+\mu}(f(\tx^t) - f^*)$. 
On the other hand, we have $f(\tx^t) -f^{t*} \ge f(\tx^t) - f^t(x^*) = f(\tx^t) - f(x^*) - \frac{\mu}{2}\norm{\tx^t - x^*}^2 \ge (1-\mu/\sigma)(f(\tx^t) - f^*)$. Thus \eqref{eq:rate_drsvrg} follows immediately by combining the above two bounds.

\subsection{D-SARAH in the Strongly Convex Case (Theorem~\ref{thm:dissarah})}

We motivate the convergence analysis by citing the following lemma from \cite{Nguyen2017}, which bounds the sum of gradient norm of iterations on a specific worker during $\tth{t}$ round:

\begin{lemma}\cite{Nguyen2017}
	\label{lem:grad_comp}
	Suppose that Assumption \ref{assum:l_smooth} holds, then 
	\begin{align*}
	& \sum_{s=0}^{m-1} \ex{\norm{\nabla f(y^s)}^2} \le \frac{2}{\eta} \ex{f(y^0) - f(y^{m})}  + \sum_{s=0}^{m-1} \ex{\norm{\nabla f(y^s) - v^s}^2} - (1-L\eta)\sum_{s=0}^{m-1}\ex{\norm{v^s}^2}.
	\end{align*}
\end{lemma}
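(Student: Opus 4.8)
The plan is to derive this purely from the $L$-smoothness of the global objective $f$ together with the SARAH update rule $y^{s+1}=y^s-\eta v^s$; no probabilistic structure beyond taking expectations at the very end is needed, because the governing inequality holds pathwise for each fixed realization of the sampled indices. First I observe that $f$ itself is $L$-smooth: it is the average $f=\frac1n\sum_k f_k$ of the local losses, each of which averages the $L$-smooth sample losses $\ell_z$ (Assumption~\ref{assum:l_smooth}), so $\nabla f$ is $L$-Lipschitz and the descent lemma applies to $f$ directly.

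The core of the argument is the smoothness descent lemma applied to consecutive iterates,
\[
f(y^{s+1}) \le f(y^s) + \innprod{\nabla f(y^s),\, y^{s+1}-y^s} + \frac{L}{2}\norm{y^{s+1}-y^s}^2 .
\]
Substituting $y^{s+1}-y^s=-\eta v^s$ turns this into
\[
f(y^{s+1}) \le f(y^s) - \eta\innprod{\nabla f(y^s),\, v^s} + \frac{L\eta^2}{2}\norm{v^s}^2 .
\]
The crucial manipulation is to rewrite the cross term using the polarization identity $2\innprod{a,b}=\norm{a}^2+\norm{b}^2-\norm{a-b}^2$ with $a=\nabla f(y^s)$ and $b=v^s$, which converts the inner product into exactly the three squared-norm quantities appearing in the target bound. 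After substituting and collecting the $\norm{v^s}^2$ contributions, one reaches
\[
f(y^{s+1}) \le f(y^s) - \frac{\eta}{2}\norm{\nabla f(y^s)}^2 + \frac{\eta}{2}\norm{\nabla f(y^s)-v^s}^2 - \frac{\eta}{2}\paren{1-L\eta}\norm{v^s}^2 .
\]

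It then remains to isolate $\norm{\nabla f(y^s)}^2$, multiply through by $2/\eta$, take expectation, and sum over $s=0,\dots,m-1$. The leading term telescopes to $\frac{2}{\eta}\ex{f(y^0)-f(y^m)}$, while the other two terms accumulate directly into $\sum_{s}\ex{\norm{\nabla f(y^s)-v^s}^2}$ and $-(1-L\eta)\sum_{s}\ex{\norm{v^s}^2}$, giving the claimed inequality. I do not expect a genuine obstacle: the only points that require care are (i) confirming that $f$, and not merely each $\ell_z$, inherits $L$-smoothness so the descent lemma is legitimately applied to $f$, and (ii) recognizing that the polarization identity is the precise algebraic device that exposes the $\norm{\nabla f(y^s)-v^s}^2$ term, which is exactly the estimator-deviation quantity that the subsequent analysis of D-SARAH will control. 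Everything else is telescoping and linearity of expectation.
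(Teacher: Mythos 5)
Your proof is correct and is precisely the standard argument behind this lemma, which the paper does not reprove but imports directly from \cite{Nguyen2017}: apply the descent lemma to the $L$-smooth average $f$ along the update $y^{s+1}=y^s-\eta v^s$, expand the cross term $\innprod{\nabla f(y^s),v^s}$ via the polarization identity to surface $\norm{\nabla f(y^s)-v^s}^2$, then rearrange, telescope over $s$, and take expectations. No gaps; the two points you flag for care (that $f$ inherits $L$-smoothness from the sample losses, and that polarization is what produces the estimator-deviation term) are exactly the right ones.
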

We note that the first term on RHS can be neglected when $m$ is large; the second term measures the effect of biased gradient estimator and can thus be controlled with the step size $\eta$ and distributed smoothness; the third term can be dropped when $\eta \le 1/L$. A careful analysis leads to the following theorem, whose proof can be found in Appendix \ref{proof:thm:SARAH_pre}.

\begin{theorem}
    \label{thm:SARAH_pre}
	Suppose that Assumption \ref{assum:l_smooth}, \ref{assum:l_convex}, \ref{assum:f_sc} and \ref{assum:f_restricted} hold with $c_k\leq c$.
	 Then D-SARAH satisfies
	\begin{align*}
		     \paren{1-\frac{4c^2}{\sigma^2}} & \ex{\norm{\nabla f(\tx^{t+1})}^2}  	\le\paren{\frac{1}{\sigma\eta m} + \frac{4c^2}{\sigma^2}+\frac{2\eta L}{2-\eta L}}\ex{\norm{\nabla f(\tx^t)}^2}.
	\end{align*}
\end{theorem}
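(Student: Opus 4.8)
The plan is to start from the cited gradient-comparison bound (Lemma~\ref{lem:grad_comp}), control each of its three terms, and then convert the resulting bound on the averaged inner-loop gradients into a one-round contraction using the output rule. Because the inner output is drawn uniformly from $y^1,\dots,y^m$ and the server then selects a worker, $\ex{\norm{\nabla f(\tx^{t+1})}^2}$ equals, modulo the boundary terms $\norm{\nabla f(y^0)}^2$ and $\norm{\nabla f(y^m)}^2$, the average $\tfrac1m\sum_{s=0}^{m-1}\ex{\norm{\nabla f(y^s)}^2}$, so it suffices to bound this average for a single worker $k$ (all workers start the round at $\tx^t$, and $c_k\le c$, so averaging over the selected worker is harmless). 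The first term $\tfrac{2}{\eta}\ex{f(y^0)-f(y^m)}$ is dispatched by $f(y^m)\ge f^*$ together with the gradient-domination inequality $f(\tx^t)-f^*\le \tfrac{1}{2\sigma}\norm{\nabla f(\tx^t)}^2$ from $\sigma$-strong convexity; dividing by $m$ gives the $\tfrac{1}{\sigma\eta m}$ coefficient. The third term $-(1-L\eta)\sum_s\ex{\norm{v^s}^2}$ is nonpositive for $\eta\le 1/L$ and is discarded.

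The heart of the proof is the estimator-error term $\sum_s\ex{\norm{\nabla f(y^s)-v^s}^2}$. Unrolling the SARAH recursion, the error telescopes as $v^s-\nabla f(y^s)=M^s+\Delta^s$, where $M^s=\sum_{j=1}^s m^j$ with $m^j = \big(\nabla\ell_{z_j}(y^j)-\nabla\ell_{z_j}(y^{j-1})\big)-\big(\nabla f_k(y^j)-\nabla f_k(y^{j-1})\big)$ the increment minus its conditional mean (recall $z_j$ is drawn from $\cM_k$), and $\Delta^s=\sum_{j=1}^s\big[(\nabla f-\nabla f_k)(y^{j-1})-(\nabla f-\nabla f_k)(y^j)\big]$ the accumulated bias from sampling local rather than global data. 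Using $\norm{u+v}^2\le 2\norm{u}^2+2\norm{v}^2$ separates the two. Crucially, the bias telescopes to the single endpoint difference $\Delta^s=(\nabla f-\nabla f_k)(y^0)-(\nabla f-\nabla f_k)(y^s)$; bounding it through $x^*$ by the triangle inequality and invoking the \emph{restricted} smoothness of $f-f_k$ (Assumption~\ref{assum:f_restricted}), then the strong-convexity estimate $\norm{y-x^*}\le\sigma^{-1}\norm{\nabla f(y)}$, yields $\ex{\norm{\Delta^s}^2}\le \tfrac{2c^2}{\sigma^2}\big(\ex{\norm{\nabla f(y^0)}^2}+\ex{\norm{\nabla f(y^s)}^2}\big)$. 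This is exactly what produces the two $\tfrac{4c^2}{\sigma^2}$ contributions, one attached to $\ex{\norm{\nabla f(\tx^t)}^2}$ and one attached to the averaged iterate that is moved to the left-hand side.

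For the martingale part, orthogonality of the increments gives $\ex{\norm{M^s}^2}=\sum_{j=1}^s\ex{\norm{m^j}^2}\le\sum_{j=1}^s\ex{\norm{v^j-v^{j-1}}^2}$, using that centering only decreases the norm and that $v^j-v^{j-1}=\nabla\ell_{z_j}(y^j)-\nabla\ell_{z_j}(y^{j-1})$. Convexity and $L$-smoothness of $\ell_z$ give co-coercivity, which with $y^j-y^{j-1}=-\eta v^{j-1}$ yields the \emph{deterministic} telescoping inequality $\norm{v^j-v^{j-1}}^2\le\tfrac{L\eta}{2-L\eta}\big(\norm{v^{j-1}}^2-\norm{v^j}^2\big)$ for $\eta<2/L$. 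Summing collapses the telescope to $\ex{\norm{M^s}^2}\le\tfrac{L\eta}{2-L\eta}\ex{\norm{v^0}^2}=\tfrac{L\eta}{2-L\eta}\ex{\norm{\nabla f(\tx^t)}^2}$, and the factor $2$ from the split gives the $\tfrac{2\eta L}{2-\eta L}$ coefficient. Collecting all three terms in Lemma~\ref{lem:grad_comp}, dividing by $m$, identifying the average with $\ex{\norm{\nabla f(\tx^{t+1})}^2}$, and moving the $\tfrac{4c^2}{\sigma^2}$ term left yields the stated inequality.

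The step I expect to be the main obstacle is the bias term, precisely because the distributed estimator is no longer unbiased — the very difficulty flagged in the introduction. The insight that makes it tractable is that, although each per-step bias $(\nabla f-\nabla f_k)(y^j)-(\nabla f-\nabla f_k)(y^{j-1})$ involves two arbitrary points, the accumulated bias $\Delta^s$ telescopes so that only the endpoints $y^0$ and $y^s$ survive, and each can be compared to $(\nabla f-\nabla f_k)(x^*)$. This is why \emph{restricted} smoothness about $x^*$, rather than full smoothness of $f-f_k$, already suffices for D-SARAH, matching the hypothesis of the theorem. A secondary, purely bookkeeping issue is reconciling the index range $\sum_{s=0}^{m-1}$ in Lemma~\ref{lem:grad_comp} with the output range $y^1,\dots,y^m$, which perturbs only the constants and not the structure of the bound.
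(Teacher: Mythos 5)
Your proposal is correct and follows essentially the same route as the paper's proof: your bias/martingale decomposition of $v^s-\nabla f(y^s)$ is exactly the paper's Lemma~\ref{lem:trace_concaten} combined with the restricted-smoothness bound \eqref{eq:term2}, your telescoping co-coercivity step is Lemma~\ref{lem:v_concaten}, and the treatment of the boundary terms, the dropped $-(1-L\eta)\sum_s\ex{\norm{v^s}^2}$ term, and the uniform-output averaging all match. The index-range issue you flag at the end is real but is present in the paper itself (the appendix proof averages over $y^0,\dots,y^{m-1}$ while Alg.~\ref{alg:dissvrg_main} outputs from $y^1,\dots,y^m$), and as you say it does not affect the structure of the bound.
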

When $c < \frac{\sqrt{2}}{4} \sigma$, we can choose $\eta = \dfrac{2(1-8c^2/\sigma^2)}{(9-8c^2/\sigma^2)L}$ and $m = 2\kappa\dfrac{9-8c^2/\sigma^2}{(1-8c^2/\sigma^2)^2}$ in Theorem~\ref{thm:SARAH_pre}, leading to the following convergence rate:
	\[
\ex{\norm{\nabla f(\tx^{t+1})}^2}\\
		\le\frac{1}{2-8c^2/\sigma^2}\ex{\norm{\nabla f(\tx^t)}^2}.
	\]
    Consequently, following similar discussions as D-SVRG, the communication complexity of finding an $\epsilon$-optimal solution is $\mathcal{O}(\zeta^{-1}\log(1/\epsilon))$, and the runtime complexity is
$    \mathcal{O}\paren{(N/n + \zeta^{-2}\kappa)\zeta^{-1}\log(1/\epsilon)}$,    where $\zeta = 1 - 2\sqrt{2}c/\sigma$.

\subsection{D-SARAH in the Nonconvex Case (Theorem \ref{thm:dissarah_nonc})}

The convergence analysis in the nonconvex case is also based upon Lemma \ref{lem:grad_comp}. Due to lack of convexity, the tighter bound of the second term on RHS adopted in the analysis of Theorem~\ref{thm:SARAH_pre} is not available, so a smaller step size $\eta$ is needed to cancel out the second term and the third term, i.e., to make $$\sum_{s=0}^{m-1} \ex{\norm{\nabla f(y^s) - v^s}^2} - (1-L\eta)\sum_{s=0}^{m-1}\ex{\norm{v^s}^2} \leq 0.$$ Formally speaking, we have the following result, proved in Appendix~\ref{proof:thm:SARAH_nonc}.

\begin{theorem}
    \label{thm:SARAH_nonc}
    Suppose that Assumption \ref{assum:l_smooth} and \ref{assum:f_nrestricted} hold with $c_k\leq c$. By setting the step size
    \[
    \eta \le \frac{2}{L\paren{1+\sqrt{1+8(m-1)+4m(m-1)c^2/L^2}}},
    \]
     For a single outer loop of D-SARAH, it satisfies:
    \[
    	\sum_{s=0}^{m-1} \ex{\norm{\nabla f(y^s)}^2} \le \frac{2}{\eta} \ex{f(y^0) - f(y^{m})}.
    \]
\end{theorem}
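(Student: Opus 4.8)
The plan is to build directly on Lemma~\ref{lem:grad_comp}, which already isolates the quantity we must control. Since $v^0 = \nabla f(y^0)$ at the start of each inner loop and the stated step-size bound forces $\eta \le 1/L$ (the square root is at least $1$ for $m\ge 1$), the term $-(1-L\eta)\sum_{s}\ex{\norm{v^s}^2}$ is a nonpositive reservoir. Thus it suffices to establish the single inequality
\[
\sum_{s=0}^{m-1}\ex{\norm{\nabla f(y^s) - v^s}^2} \le (1-L\eta)\sum_{s=0}^{m-1}\ex{\norm{v^s}^2},
\]
after which the theorem follows by dropping the residual in Lemma~\ref{lem:grad_comp}.

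First I would write the gradient-estimation error as a telescoping sum. Using $v^0 = \nabla f(y^0)$ and the SARAH recursion $v^j - v^{j-1} = \nabla\ell_{z^j}(y^j) - \nabla\ell_{z^j}(y^{j-1})$, one gets $\nabla f(y^s) - v^s = -\sum_{j=1}^s \xi^j$ with $\xi^j := (\nabla\ell_{z^j}(y^j) - \nabla\ell_{z^j}(y^{j-1})) - (\nabla f(y^j) - \nabla f(y^{j-1}))$. Let $\mathcal{F}_{j-1}$ be the $\sigma$-field generated by $z^1,\dots,z^{j-1}$, so that $y^{j-1},y^j$ are $\mathcal{F}_{j-1}$-measurable. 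I would then split $\xi^j = \bar\xi^j + \delta^j$ into its martingale-difference part $\bar\xi^j := \xi^j - \ex{\xi^j\mid\mathcal{F}_{j-1}}$ and its conditional mean $\delta^j := \ex{\xi^j\mid\mathcal{F}_{j-1}}$. Two bounds drive the argument. Smoothness of $\ell_z$ (Assumption~\ref{assum:l_smooth}) gives $\ex{\norm{\bar\xi^j}^2}\le \ex{\norm{\nabla\ell_{z^j}(y^j)-\nabla\ell_{z^j}(y^{j-1})}^2}\le L^2\eta^2\ex{\norm{v^{j-1}}^2}$, since a variance is bounded by the raw second moment and $y^j - y^{j-1} = -\eta v^{j-1}$. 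The bias is where distributed smoothness enters: as $z^j$ is drawn from the active worker's local set, $\ex{\nabla\ell_{z^j}(\cdot)\mid\mathcal{F}_{j-1}} = \nabla f_k(\cdot)$, so Assumption~\ref{assum:f_nrestricted} yields $\norm{\delta^j} = \norm{(\nabla f_k - \nabla f)(y^j) - (\nabla f_k - \nabla f)(y^{j-1})} \le c\,\norm{y^j - y^{j-1}} \le c\eta\norm{v^{j-1}}$.

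Next I would assemble these via orthogonality of martingale differences for the centered part and Cauchy--Schwarz for the bias part. Applying $\norm{a+b}^2 \le 2\norm{a}^2 + 2\norm{b}^2$ to $\nabla f(y^s)-v^s = -\sum_j\bar\xi^j - \sum_j\delta^j$ gives
\[
\ex{\norm{\nabla f(y^s) - v^s}^2} \le 2\sum_{j=1}^s \ex{\norm{\bar\xi^j}^2} + 2s\sum_{j=1}^s\ex{\norm{\delta^j}^2} \le 2L^2\eta^2\sum_{j=1}^s\ex{\norm{v^{j-1}}^2} + 2c^2\eta^2 s\sum_{j=1}^s\ex{\norm{v^{j-1}}^2}.
\]
Summing over $s = 0,\dots,m-1$ (the $s=0$ term vanishes), bounding each partial sum $\sum_{j=1}^s\ex{\norm{v^{j-1}}^2}$ by $V := \sum_{s=0}^{m-1}\ex{\norm{v^s}^2}$, and using $\sum_{s=1}^{m-1}s = m(m-1)/2$, yields $\sum_{s}\ex{\norm{\nabla f(y^s)-v^s}^2}\le \eta^2\big(2L^2(m-1) + c^2m(m-1)\big)V$. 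The target inequality then reduces to the scalar quadratic $\eta^2\big(2L^2(m-1)+c^2m(m-1)\big) + L\eta - 1\le 0$; taking its positive root and rationalizing reproduces exactly $\eta \le 2/\big(L(1+\sqrt{1+8(m-1)+4m(m-1)c^2/L^2})\big)$.

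The hard part will be handling the bias $\delta^j$. In centralized SARAH the error is a pure sum of orthogonal martingale differences, so variances accumulate only linearly in $m$; reusing local data makes $\ex{\xi^j\mid\mathcal{F}_{j-1}}$ nonzero, and because these biases do not cancel, the Cauchy--Schwarz step forces a quadratic $m(m-1)$ accumulation --- precisely the $4m(m-1)c^2/L^2$ term in the step-size bound. Keeping the crude $2\norm{a}^2+2\norm{b}^2$ split (rather than chasing cross terms between $\sum_j\bar\xi^j$ and $\sum_j\delta^j$) is what makes the resulting quadratic match the stated $\eta$ with clean constants; the remaining manipulations are routine.
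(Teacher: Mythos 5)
Your proof is correct and is essentially the paper's own argument: your split $\xi^j=\bar\xi^j+\delta^j$ is exactly the paper's decomposition into $\nabla f(y^0)-\nabla f_k(y^0)+\nabla f_k(y^s)-v^s$ (whose martingale orthogonality is the content of Lemma~\ref{lem:trace_concaten}) plus the telescoped bias $(\nabla f-\nabla f_k)(y^0)-(\nabla f-\nabla f_k)(y^s)$, and both routes land on the same bound $\eta^2\paren{2L^2(m-1)+c^2m(m-1)}\sum_s\ex{\norm{v^s}^2}$ and the same scalar quadratic in $\eta$. No gaps.
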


By setting $\tx^{t+1} = y^{m}$, from the above theorem, we have 
\[
\sum_{s=0}^{m-1} \ex{\norm{\nabla f(y^s)}^2} \le \frac{2}{\eta} \ex{f(\tx^t) - f(\tx^{t+1})}.
\]
Hence, with $T$ outer loops, we have
\[
\frac{1}{Tm}\sum_{t=0}^{T-1}\sum_{s=0}^{m-1} \ex{\norm{\nabla f(y^{t,s}_{k(t)})}^2} \le\frac{2}{\eta Tm}\paren{f(\tx^0) - f^*}.
\]
where  $k(t)$ is the worker index selected in the $\tth{t}$ round for parameter update.
The communication complexity to achieve an $\epsilon$-optimal solution is
\begin{align*}
T =& \mathcal{O}\paren{1+ \frac{1}{\eta m \epsilon}} = \mathcal{O}\paren{1+\frac{\sqrt{m+m^2c^2/L^2}}{m}\cdot\frac{L}{\epsilon}} \\
=& \mathcal{O}\paren{1+\paren{\frac{1}{\sqrt{m}}+\frac{c}{L}}\cdot \frac{L}{\epsilon}},
\end{align*}
with the choice $\eta = \Theta\paren{\frac{1}{L\sqrt{m+m^2c^2/L^2}}}$. Per round, the runtime complexity at each worker is $\mathcal{O}(N/n + m)$. By choosing $m = \mathcal{O}(N/n)$,
we achieve the runtime complexity 
$$\tiny \mathcal{O}\paren{N/n + \paren{\sqrt{N/n} + N/n\cdot \frac{c}{L}}\frac{L}{\epsilon}}.$$

\begin{figure*}[!ht]
	\centering
	\begin{tabular}{ccc}
		\includegraphics[width=0.3\textwidth]{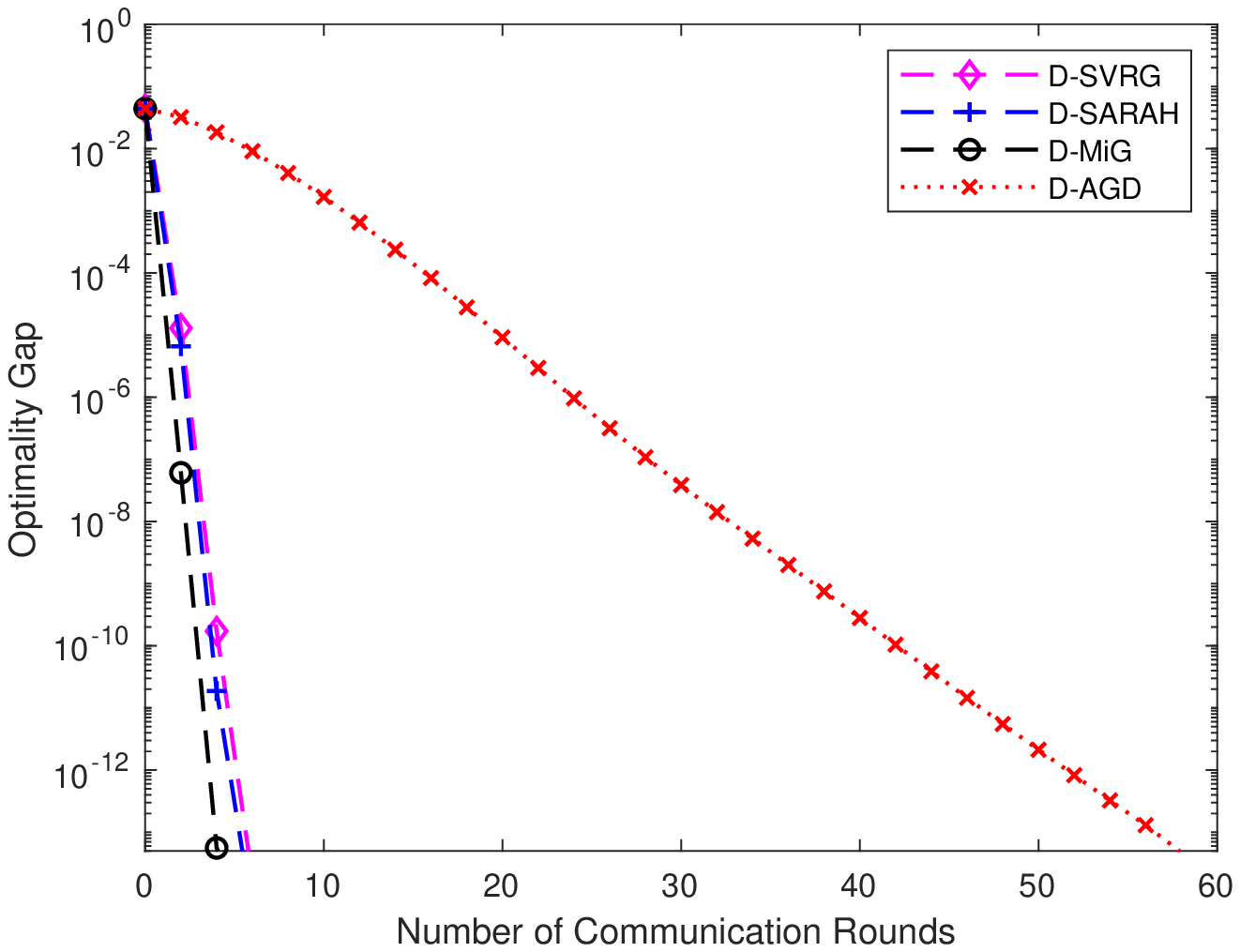} &
		\includegraphics[width=0.3\textwidth]{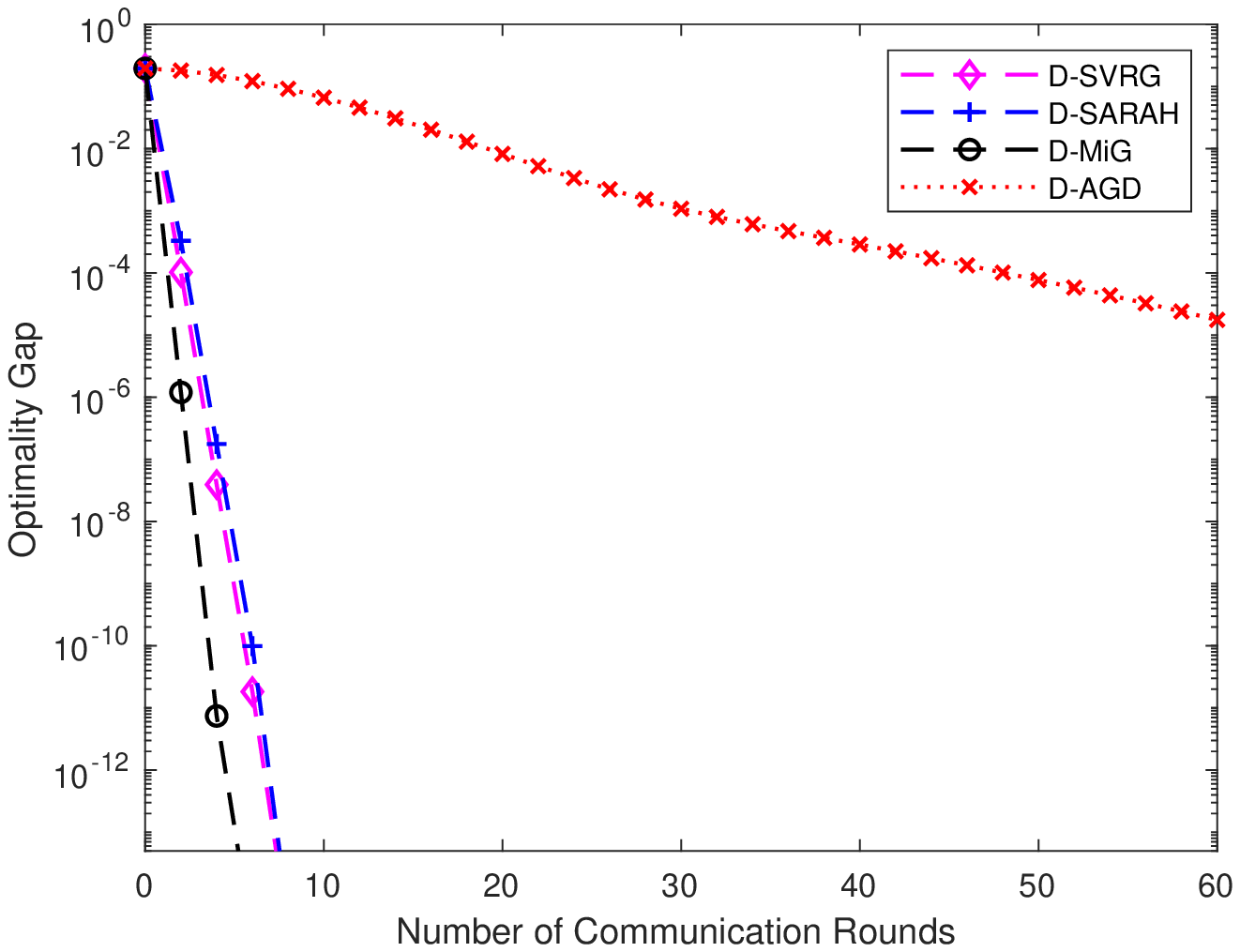} &
		\includegraphics[width=0.3\textwidth]{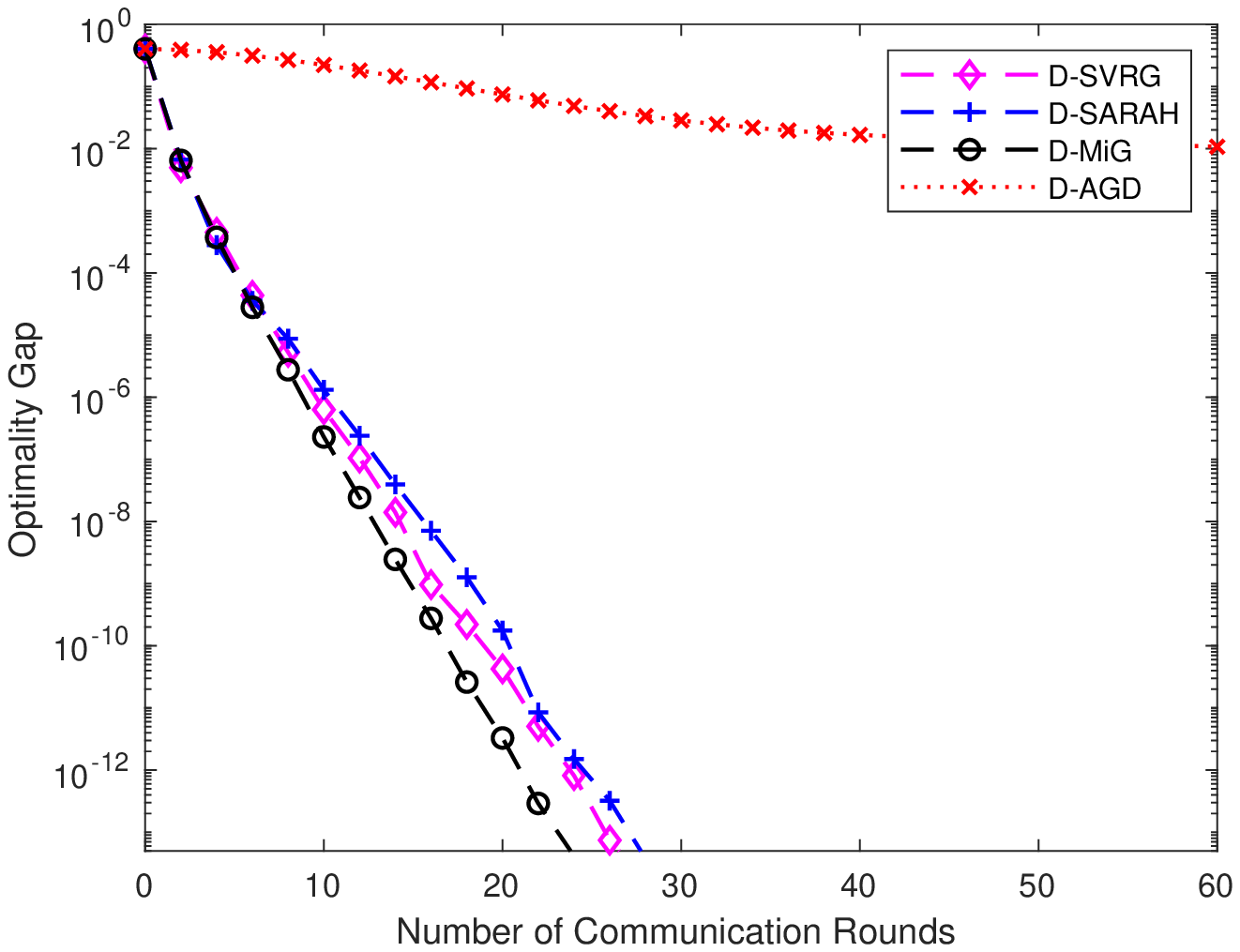}\\
		(a) $\lambda = N^{-0.5}$ (gisette) & (b) $\lambda =  N^{-0.75}$ (gisette) & (c) $\lambda =  N^{-1}$ (gisette) \vspace{0.02in}\\ 
		\includegraphics[width=0.3\textwidth]{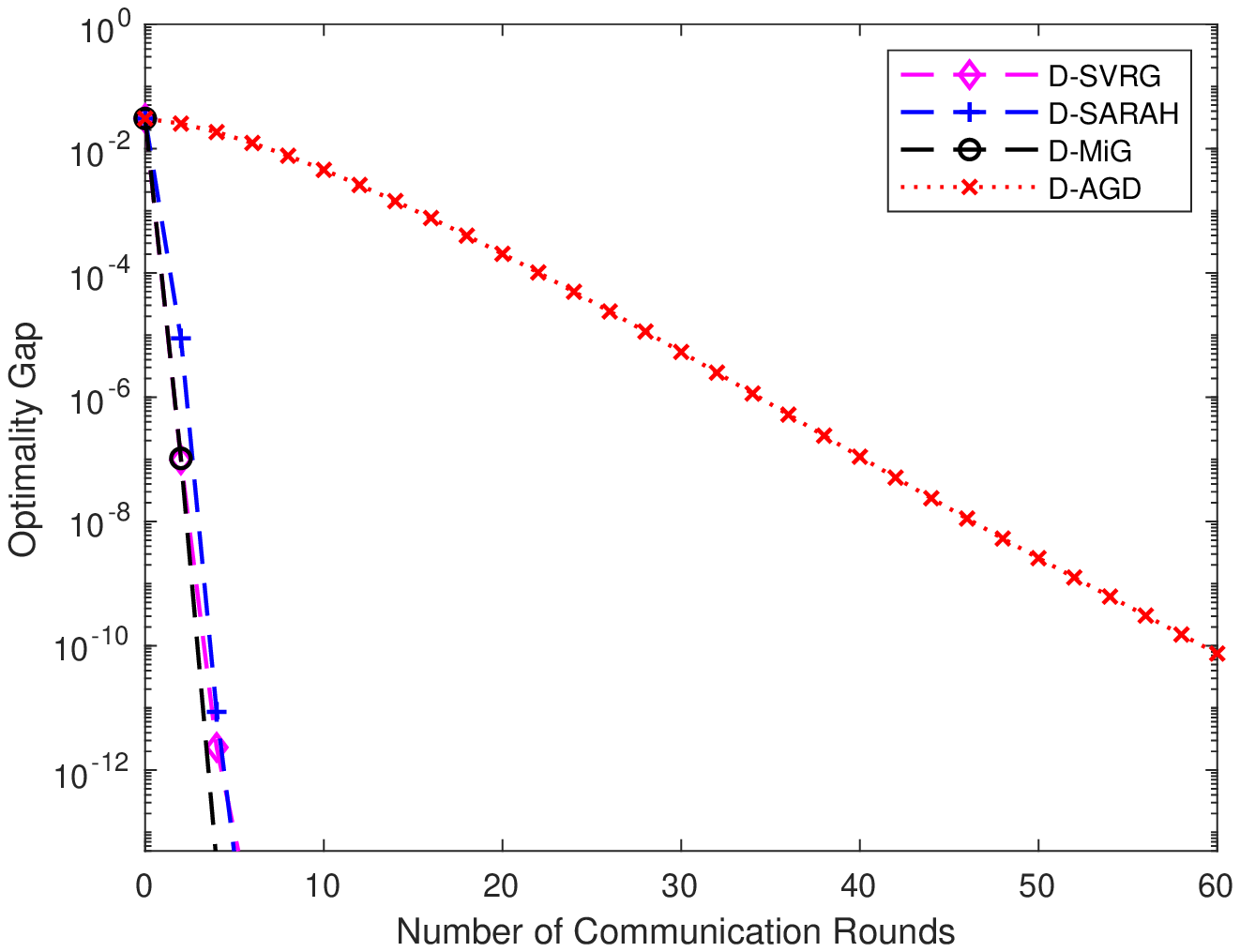} &
		\includegraphics[width=0.3\textwidth]{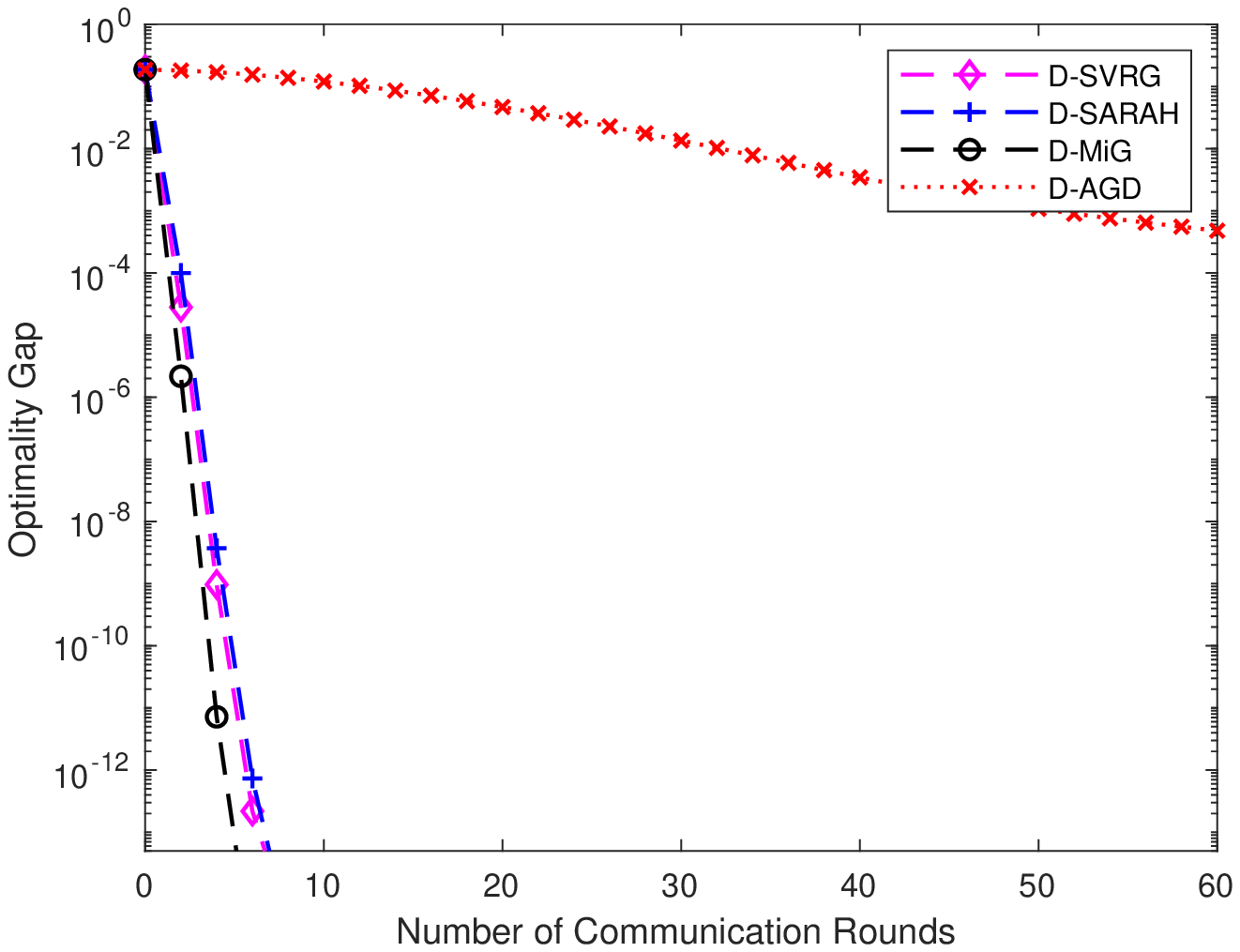} &
		\includegraphics[width=0.3\textwidth]{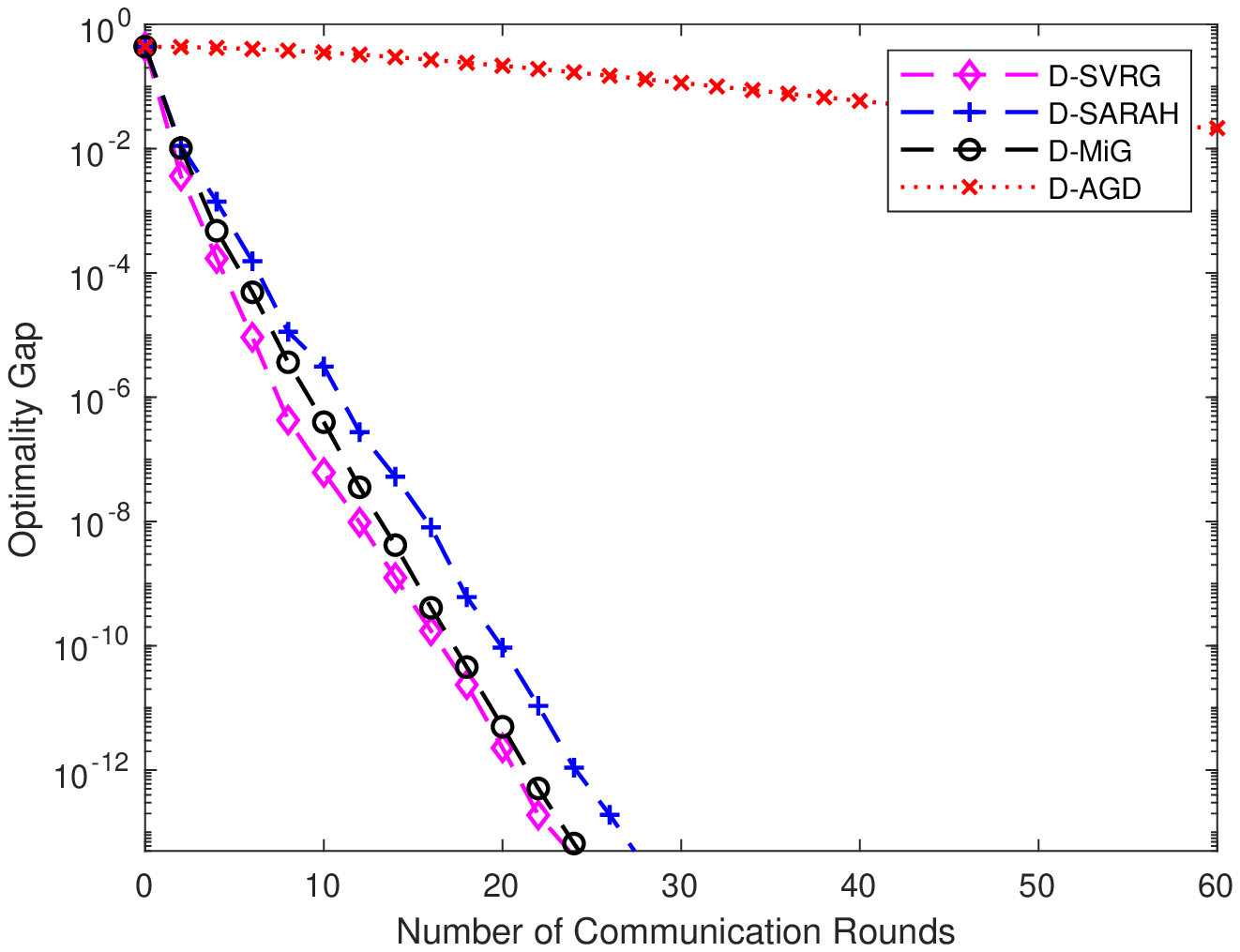} \\
		(d) $\lambda = N^{-0.5}$ (rcv1) & (e) $\lambda =  N^{-0.75}$ (rcv1) & (f) $\lambda =  N^{-1}$ (rcv1)
	\end{tabular}
	\caption{The optimality gap on $\ell_2$-regularized logistic regression with respect to the number of communication rounds with 4 workers using the gisette dataset (first row) and the rcv1 dataset (second row) under different conditioning for different algorithms. }
	\label{fig:kappa_data}
\end{figure*}

\section{Numerical Experiments}
\label{sec:numerical}

Thought the focus of this paper is theoretical, we illustrate the performance of the proposed distributed stochastic variance reduced algorithms in various settings as a proof-of-concept.

\subsection{Logistic regression in the strongly convex setting} 

Consider $\ell_2$-regularized logistic regression, where the sample loss is defined as
\begin{equation} \label{eq:logistic_loss}
\ell(x;z_i) = \log\paren{1 + \exp\paren{-b_i a_{i}^{\top} x}} + \frac{\lambda}{2}\norm{x}^2,
\end{equation} 
with the data $z_i = (a_i, b_i)\in \mathbb{R}^d \times \{\pm 1 \}$. We evaluate the performance on the gisette dataset \cite{guyon2005result} and the rcv1 dataset \cite{rcv1} by splitting the data equally to all workers. We scale the data according to $\max_{i\in [N]}\norm{a_{i}}^2 = 1$, so that the smoothness parameter is estimated as $L = 1/4 + \lambda$. We choose $\lambda = N^{-0.5}$, $N^{-0.75}$ and $N^{-1}$ respectively to illustrate the performance under different condition numbers. We use the optimality gap, defined as $f(\tx^t)-f^*$, to illustrate the convergence behavior.

%	\begin{figure*}[t]
%	\centering
%	\begin{tabular}{c|ccc}
%		& $\lambda = N^{-0.5}$ & $\lambda =  N^{-0.75}$ &  $\lambda =  N^{-1}$ \\
%		\hline
%		{\rotatebox{90}{gisette}} &
%		\raisebox{-.5\totalheight}{\includegraphics[width=0.3\textwidth]{{gisette_0.5_4agents}.eps}} &
%		\raisebox{-.5\totalheight}{\includegraphics[width=0.3\textwidth]{{gisette_0.75_4agents}.eps}} &
%		\raisebox{-.5\totalheight}{\includegraphics[width=0.3\textwidth]{{gisette_1_4agents}.eps}} \\
%		{\rotatebox{90}{rcv1}} &
%		\raisebox{-.5\totalheight}{\includegraphics[width=0.3\textwidth]{{rcv1_0.5_4agents}.eps}} &
%		\raisebox{-.5\totalheight}{\includegraphics[width=0.3\textwidth]{{rcv1_0.75_4agents}.eps}} &
%		\raisebox{-.5\totalheight}{\includegraphics[width=0.3\textwidth]{{rcv1_1_4agents}.eps}} \\
%	\end{tabular}
%	\caption{The optimality gap on $\ell_2$-regularized logistic regression with respect to the number of communication rounds with 4 workers using the gisette dataset {\color{red} and the rcv1 dataset} under different conditioning for different algorithms. }
%	\label{fig:kappa_data}
%\end{figure*}

\begin{figure*}[t]
	\centering
	\begin{tabular}{ccc}
		\includegraphics[width=0.3\textwidth]{{gisette_1_4agents}.eps} &
		\includegraphics[width=0.3\textwidth]{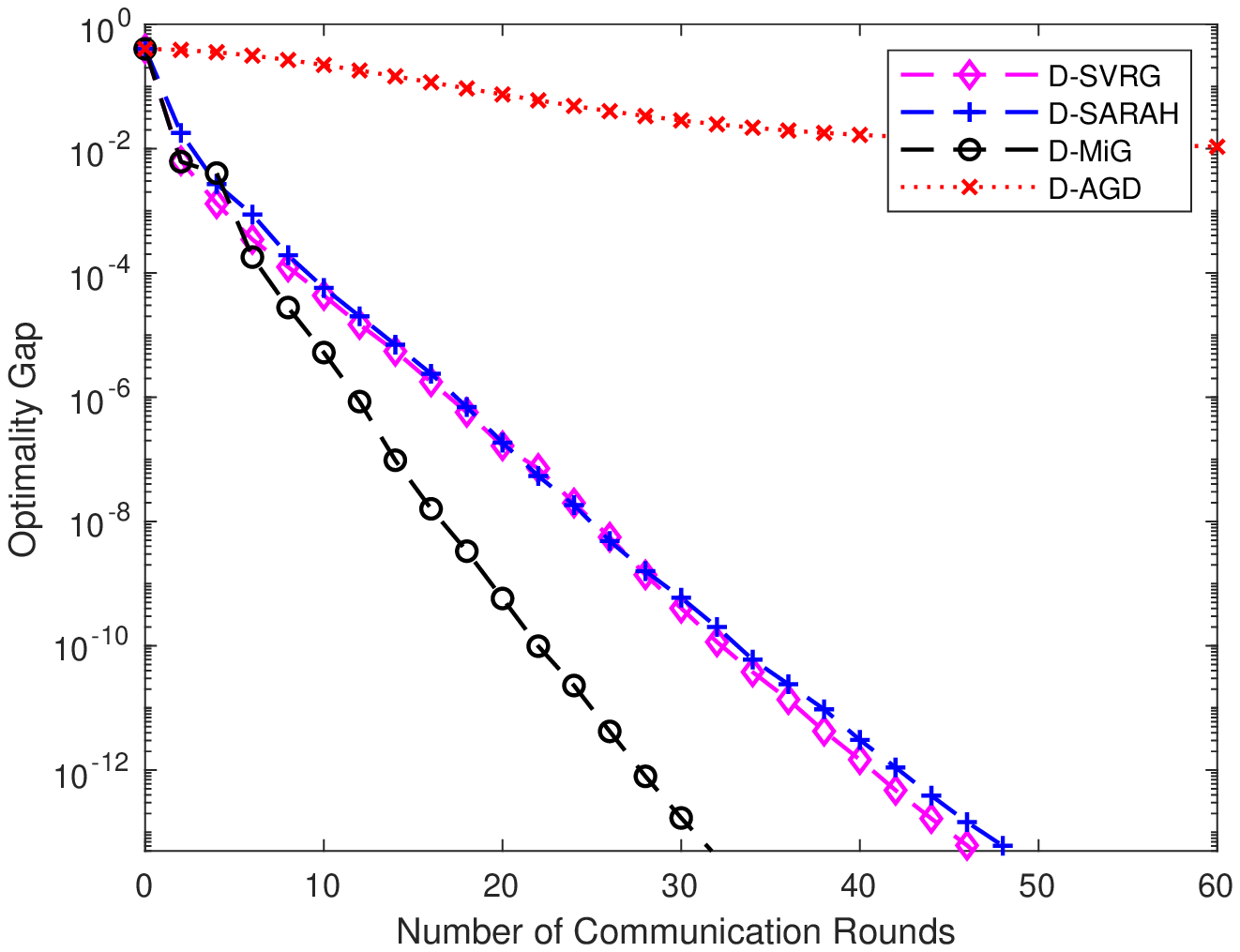} &
		\includegraphics[width=0.3\textwidth]{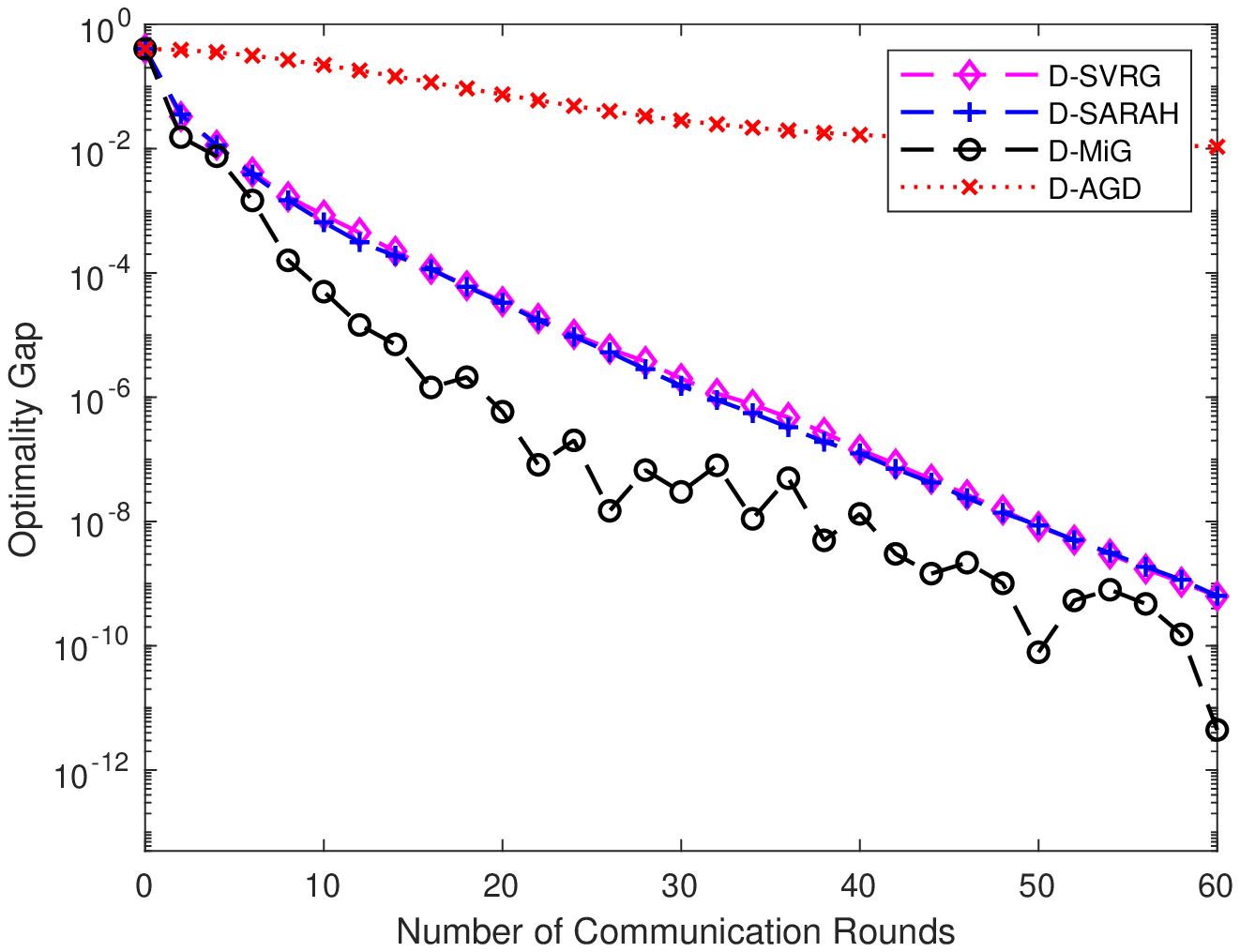}\\
		(a) $n=4$ (gisette) & (b) $n=8$ (gisette) & (c) $n=16$ (gisette)  \vspace{0.02in}\\ 
		\includegraphics[width=0.3\textwidth]{{rcv1_1_4agents}.eps} &
		\includegraphics[width=0.3\textwidth]{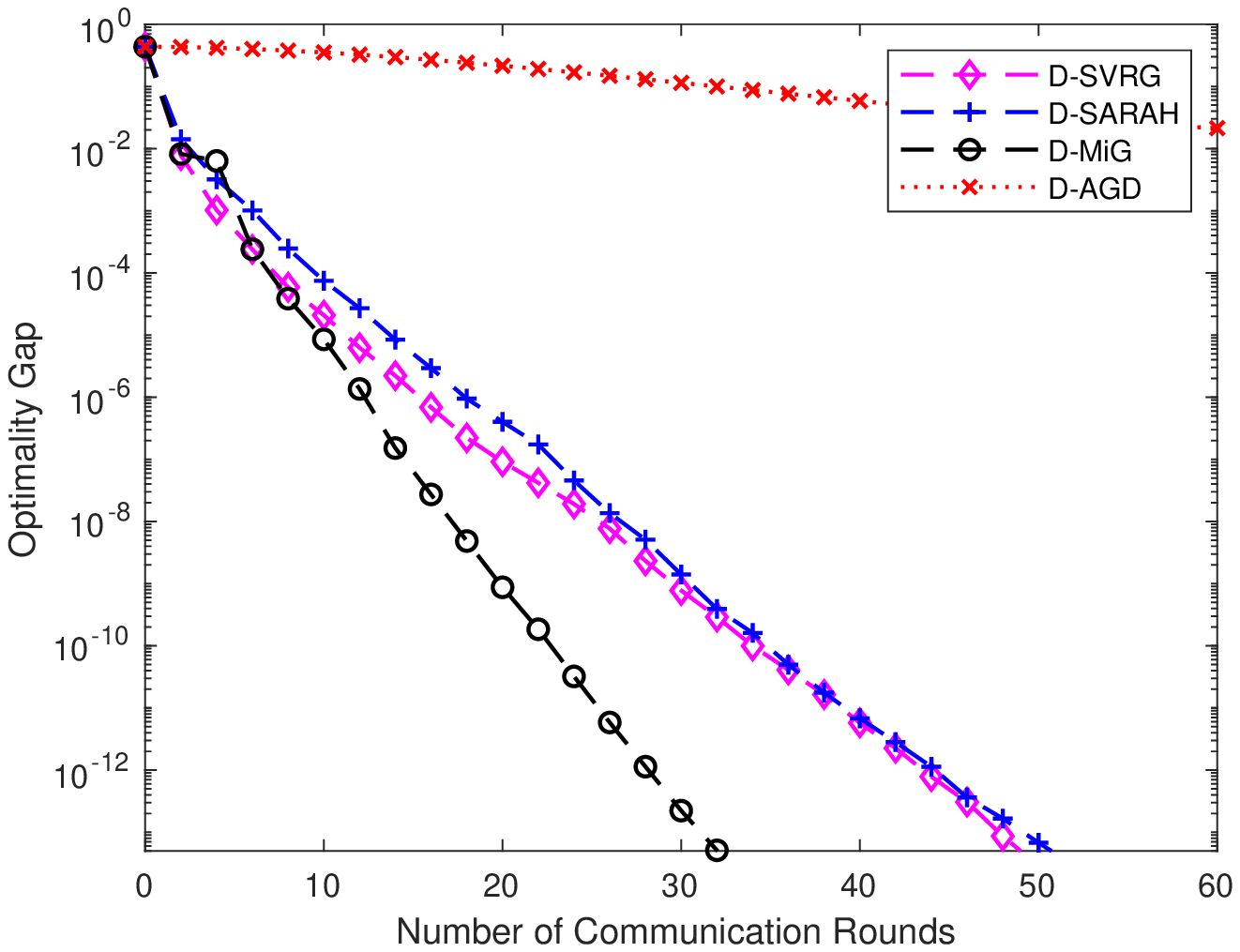} &
		\includegraphics[width=0.3\textwidth]{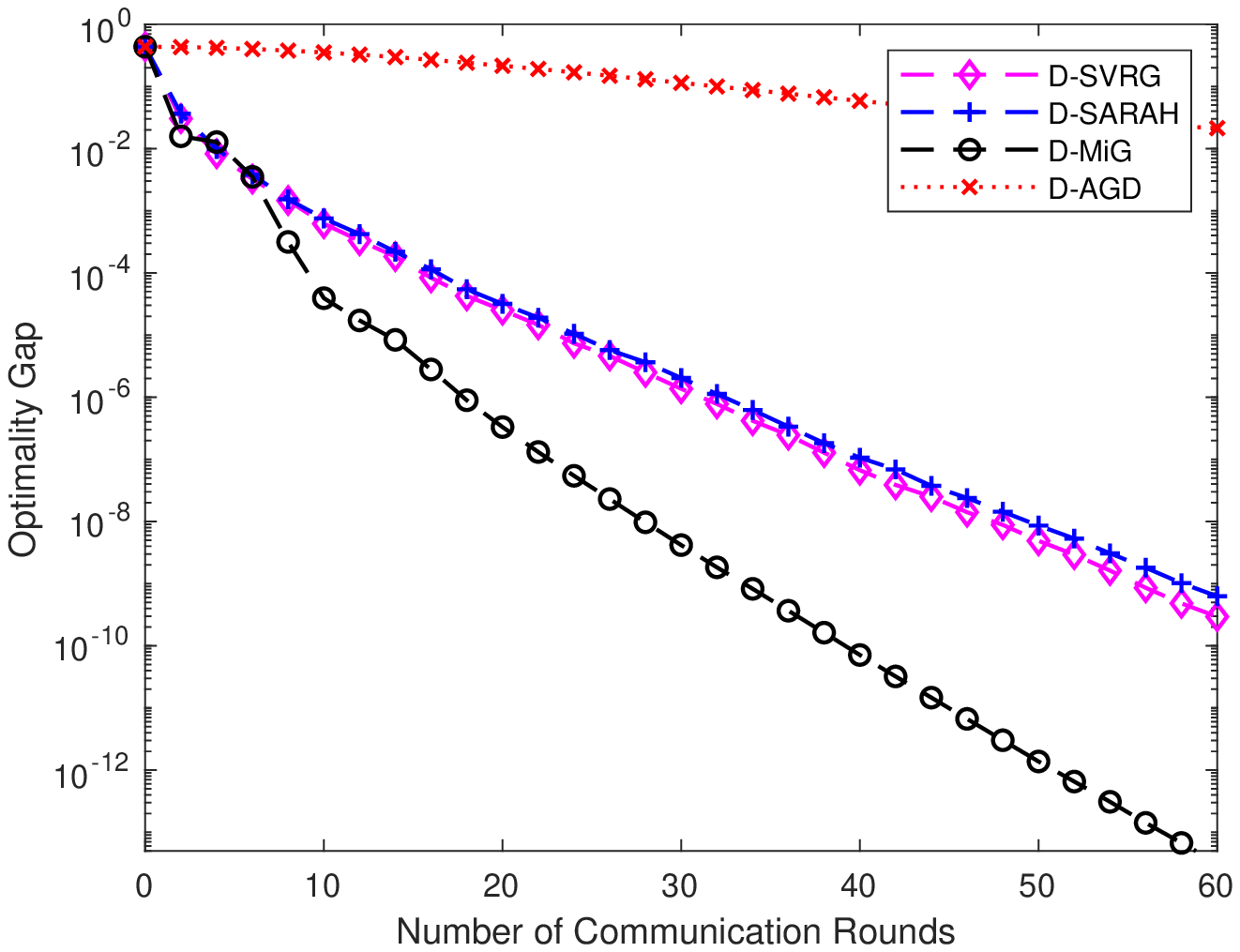} \\
		(d) $n=4$ (rcv1) & (e) $n=8$ (rcv1) & (f) $n=16$ (rcv1)
	\end{tabular}
%	\begin{tabular}{c|ccc}
%	& $n=4$ & $n=8$ & $n=16$  \\
%	\hline
%	{\rotatebox{90}{gisette}} &
%	\raisebox{-.5\totalheight}{\includegraphics[width=0.3\textwidth]{{gisette_1_4agents}.eps}} &
%	\raisebox{-.5\totalheight}{\includegraphics[width=0.3\textwidth]{{gisette_1_8agents}.eps}} &
%	\raisebox{-.5\totalheight}{\includegraphics[width=0.3\textwidth]{{gisette_1_16agents}.eps}} \\
%	{\rotatebox{90}{rcv1}} &
%	\raisebox{-.5\totalheight}{\includegraphics[width=0.3\textwidth]{{rcv1_1_4agents}.eps}} &
%	\raisebox{-.5\totalheight}{\includegraphics[width=0.3\textwidth]{{rcv1_1_8agents}.eps}} &
%	\raisebox{-.5\totalheight}{\includegraphics[width=0.3\textwidth]{{rcv1_1_16agents}.eps}} \\
%\end{tabular}
	\caption{The optimality gap on $\ell_2$-regularized logistic regression with respect to the number of communication rounds with different number of workers using the gisette dataset (first row) and the rcv1 dataset (second row) for different algorithms when $\lambda=N^{-1}$.}
	\label{fig:n_kappa}
\end{figure*}

For D-SVRG and D-SARAH, the step size is set as $\eta = 1/(2L)$.
For D-MiG, although the choice of $w$ in the theory requires knowledge of $c$, we simply ignore it and set $w = 1+\eta \sigma$, $\theta = 1/2$ and the step size $\eta = 1/(3\theta L)$ to reflect the robustness of the practical performance to parameters. We further use $\tx^{t+1} = \frac{1}{n}\sum_{k=1}^n y_k^{t+}$ at the PS, which provides better empirical performance than the random selection rule in Alg.~\ref{alg:general}, as seen in Fig.~\ref{fig:gap_agents}. For D-AGD, the step size is set as $\eta = 1/L$ and the momentum parameter is set as $\frac{\sqrt{\kappa}-1}{\sqrt{\kappa}+1}$.
Following \cite{Johnson2013}, which sets the number of inner loop iterations as $m=2N$, we set $m \approx 2N/n$ to ensure the same number of total inner iterations. We note that such parameters can be further tuned to achieve better trade-off between communication cost and computation cost in practice.
\begin{figure}[h]
	\centering
	\vspace{-2ex}
	\includegraphics[width=.45\textwidth]{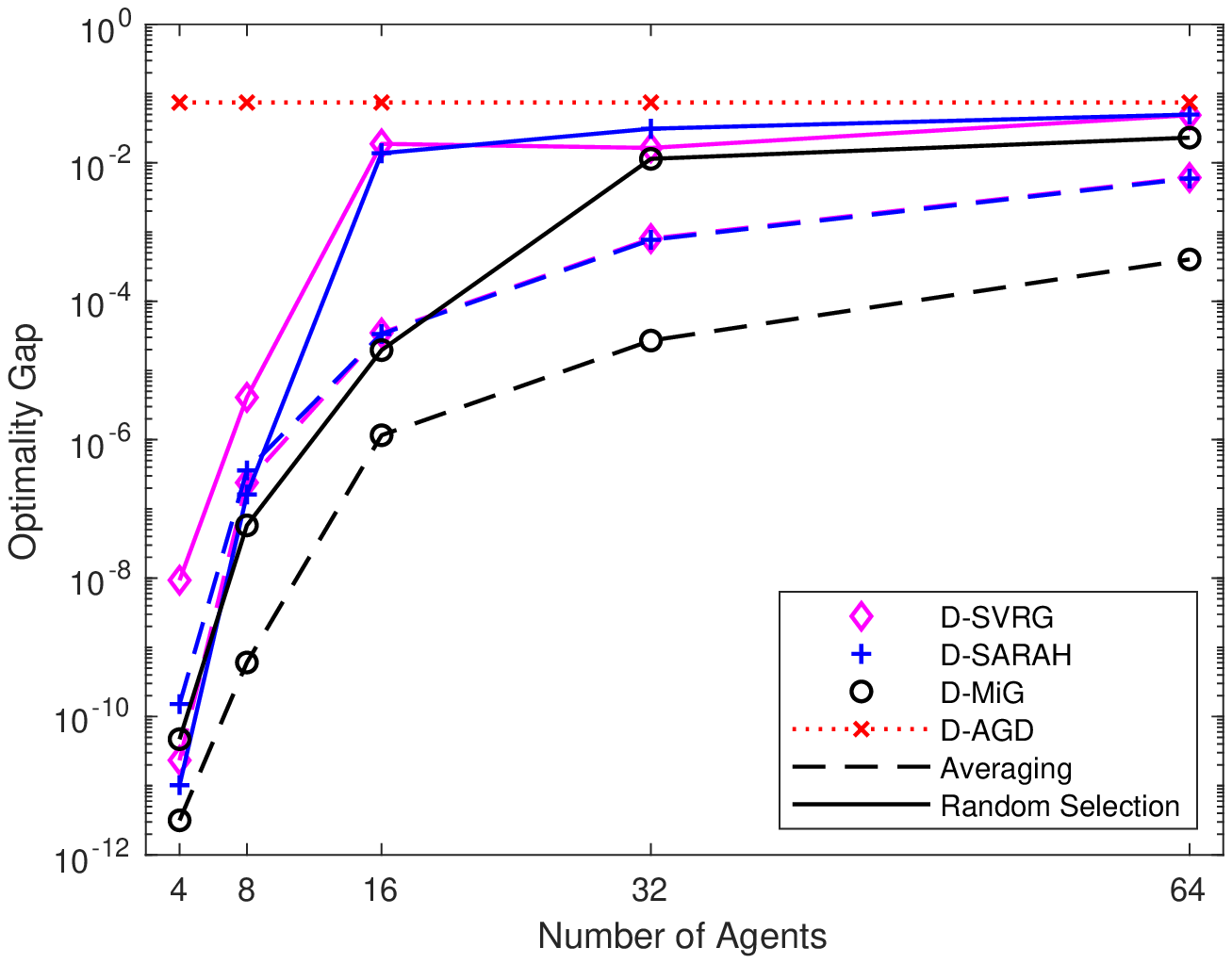}
	\caption{The optimality gap with respect to the number of agents after 20 communication rounds on the gisette dataset when $\lambda = N^{-1}$.}
	\label{fig:gap_agents}
\end{figure}

Fig.~\ref{fig:kappa_data}
illustrates the optimality gap of various algorithms with respect to the number of communication rounds with 4 local workers under different conditioning, and Fig.~\ref{fig:n_kappa} shows the corresponding result with different numbers of local workers when $\lambda=N^{-1}$. The distributed stochastic variance-reduced algorithms outperform  distributed AGD significantly. In addition, D-MiG outperforms D-SVRG and D-SARAH when the condition number is large. Fig.~\ref{fig:gap_agents} shows the optimality gap after a fixed  $20$ communication rounds with respect to the number of agents on the gisette dataset with $\lambda = N^{-1}$. We observe that the performance of distributed variance-reduced theorems degenerates as the number of agents grows, due to the shrinking size of local dataset, which leads to a larger distributed smoothness parameter $c$.

\subsection{Dealing with Unbalanced data.}
We justify the benefit of regularization by evaluating the proposed algorithms under unbalanced data allocation. We assign 50\%, 30\%, 19.9\%, 0.1\% percent of data to four workers, respectively, and set $\lambda=N^{-1}$ in the logistic regression loss \eqref{eq:logistic_loss}. To deal with unbalanced data, we perform the regularized update, given in \eqref{eq:regularized_update}, on the worker with the least amount of data, and keep the update on the rest of the workers unchanged. A similar regularized update can be conceived for D-SARAH and D-MiG, resulting in regularized variants, D-RSARAH and D-RMiG. While our theory does not cover them, we still evaluate their numerical performance. We properly set $\mu$ according to the amount of data on this worker as $\mu = 0.1/(0.1\%\cdot N)^{0.5}$. We set the number of iterations at workers $m=2N$ on all agents. Fig.~\ref{fig:nonbalance} shows the optimality gap with respect to the number of communication rounds for all algorithms. It can be seen that all unregularized methods fail to converge, and the regularized algorithms still converge, verifying the role of regularization in addressing unbalanced data.  
It is also worth mentioning that the regularization can be flexibly imposed depending on the local data size, rather than homogeneously across all workers.

\begin{figure}[h]
	\centering 
		\includegraphics[width=0.45\textwidth]{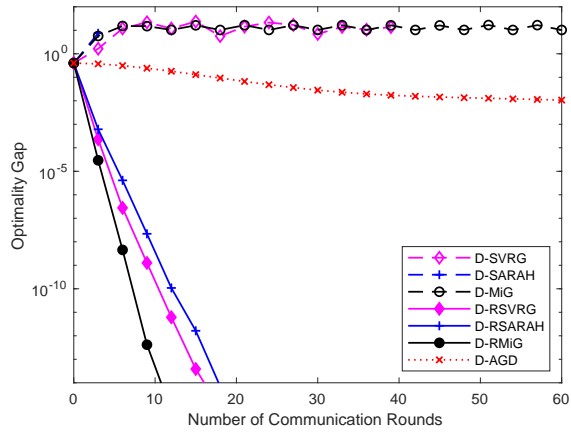}
		\caption{The optimality gap with respect to the number of communication rounds for  highly unbalanced data allocation. It can be seen that the regularized variants of distributed stochastic variance-reduced algorithms still converge will the unregularized ones no longer converge. }
		\label{fig:nonbalance}
\end{figure}

\subsection{Performance of D-SARAH in the nonconvex setting}

We follow the same setting as \cite{wang2018spiderboost} to evaluate D-SARAH and Distributed Gradient Descent (D-GD) on the gisette dataset with a nonconvex sample loss  function:
\[
\ell_{\mathrm{ncvx}}(x;z_i) = \log(1+\exp(-b_i a_i^\top x))+\lambda\sum_{j=1}^{d}x_j^2/(1+x_j^2),
\]
which consists of the logistic loss and a non-convex regularizer, where $x_j$ is the $j$th entry of $x$. The smoothness parameter of $\ell_{\mathrm{ncvx}}(x;z_i) $ can be estimated as $L = 1/4 + 2\lambda$. Fig.~\ref{fig:nonconvex} plots the squared norm the gradient $\| \nabla f(\tilde{x}^t)\|^2$ of D-SARAH and D-GD with respect to the number of communication rounds. It can be seen that D-SARAH achieves a much lower gradient norm than D-GD with the same number of communication rounds.

\begin{figure}[h] 
\centering
		\includegraphics[width=0.45\textwidth]{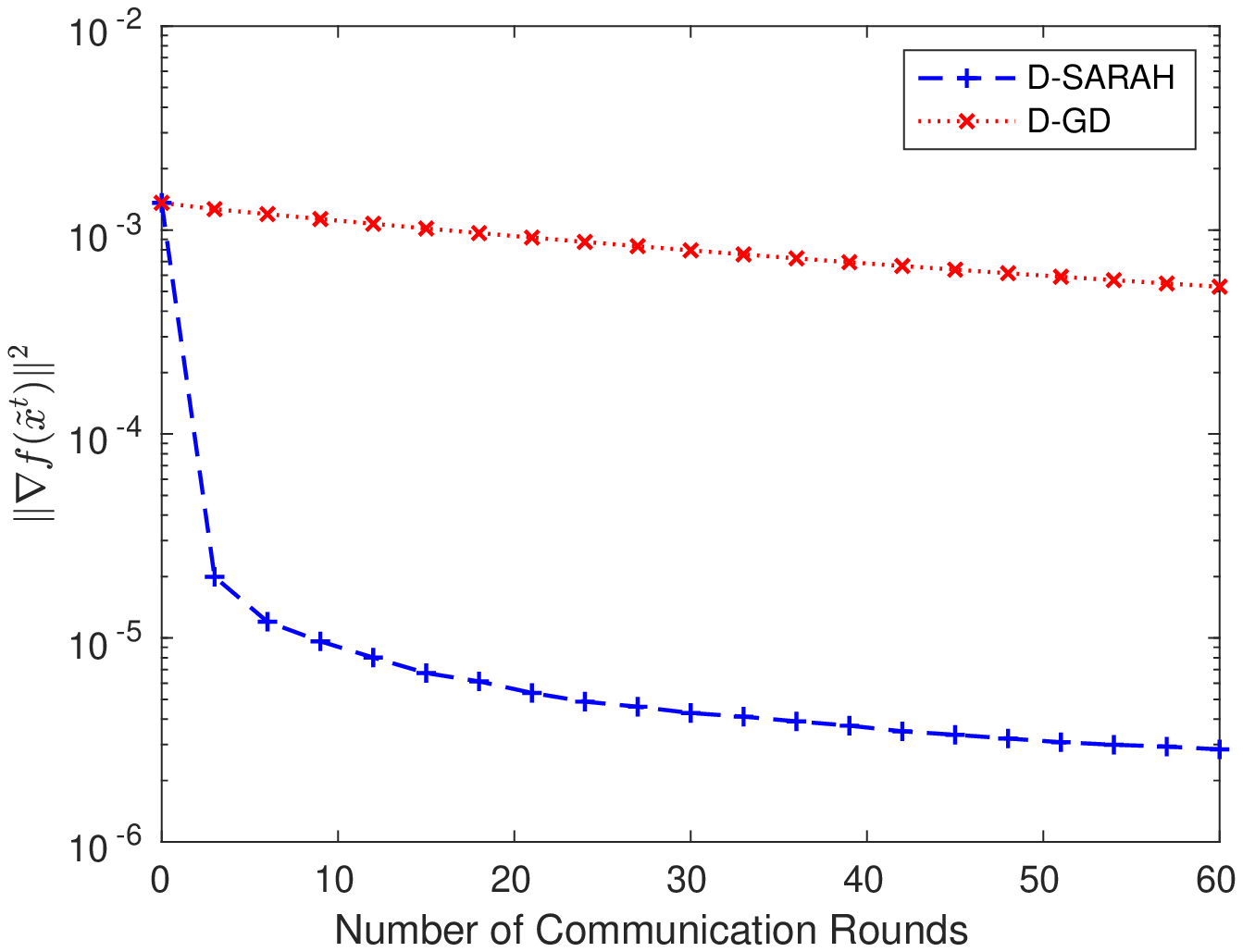}
		\caption{The squared norm of the gradient with respect to the number of communication rounds on the gisette dataset with 4 workers using a nonconvex loss function.}
		\label{fig:nonconvex}
\end{figure}

\section{Conclusions} \label{sec:conclusions}

In this paper, we have developed a convergence theory for a family of distributed stochastic variance reduced methods without sampling extra data, under a mild distributed smoothness assumption that measures the discrepancy between the local and global loss functions. Convergence guarantees are obtained for distributed stochastic variance reduced methods using accelerations and recursive gradient updates, and for minimizing both strongly convex and nonconvex losses. We also suggest regularization as a means of ensuring convergence when the local data are unbalanced and heterogeneous. We believe the analysis framework is useful for studying distributed variants of other stochastic variance-reduced methods such as Katyusha \cite{allen2017katyusha}, and proximal variants such as \cite{xiao2014proximal}.

	%% Acknowledgements should only appear in the accepted version.
\section*{Acknowledgements}
The results in this paper have been presented in part at the 2019 NeuriPS Workshop on Federated Learning for Data Privacy and Confidentiality. The work of S. Cen was partly done when visiting Microsoft Research Asia. The work of S. Cen and Y. Chi is supported in part by National Science Foundation under the grants CCF-1806154, CCF-1901199, and CCF-2007911, Office of Naval Research under the grants N00014-18-1-2142 and N00014-19-1-2404, and Army Research Office under the grant W911NF-18-1-0303.

%\section*{References}
 
\bibliography{distributed}
\bibliographystyle{alphaabbr}

\appendix

\section{Preliminary}

We first establish a lemma which will be useful later.
\begin{lemma}
\label{lem:rsp666}
When Assumptions~\ref{assum:l_smooth},\ref{assum:l_convex} and one of the distributed smoothness (Assumption \ref{assum:f_restricted} or \ref{assum:f_nrestricted}) hold, we have
\begin{align*}
&\mathbb{E}_z\left[\norm{\nabla \ell_z(x_1) - \nabla \ell_z(x_2)}^2\right]  \le 2LD_f(x_1, x_2)  + 
\begin{cases}
2cL\paren{\norm{x_1 - x^* }^2 + \norm{x_2- x^*}^2} &  \mbox{Assumption}~\ref{assum:f_restricted} \\
cL \norm{x_1 - x_2 }^2   &  \mbox{Assumption}~\ref{assum:f_nrestricted}
\end{cases},
\end{align*} 
for any $\tx$, where the expectation is evaluated over $z$, and $D_f(x_1, x_2) = f(x_1) - f(x_2) - \innprod{\nabla f(x_2), x_1 - x_2}$.
\end{lemma}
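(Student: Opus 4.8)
The starting point is the co-coercivity inequality for $L$-smooth convex functions: since each $\ell_z$ is convex (Assumption~\ref{assum:l_convex}) and $L$-smooth (Assumption~\ref{assum:l_smooth}), the standard smooth-convex bound gives $\norm{\nabla \ell_z(x_1) - \nabla \ell_z(x_2)}^2 \le 2L\, D_{\ell_z}(x_1,x_2)$ for every realization of $z$, where $D_{\ell_z}$ denotes the Bregman divergence of $\ell_z$. I would then take the expectation over $z$ drawn uniformly from $\cM_k$. Because $\mathbb{E}_z \ell_z = f_k$ and $\mathbb{E}_z \nabla \ell_z = \nabla f_k$, the right-hand side collapses to the Bregman divergence of the \emph{local} loss, $\mathbb{E}_z D_{\ell_z}(x_1,x_2) = D_{f_k}(x_1,x_2)$, so that $\mathbb{E}_z \norm{\nabla \ell_z(x_1)-\nabla \ell_z(x_2)}^2 \le 2L\,D_{f_k}(x_1,x_2)$. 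The remaining task is to trade the local $D_{f_k}$ for the \emph{global} $D_f$ appearing in the claim, at the expense of a term governed by the distributed smoothness parameter $c$.

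To do this I would exploit that the Bregman divergence is linear in its generating function. Writing $f_k = f - (f - f_k)$ yields the exact decomposition $D_{f_k}(x_1,x_2) = D_f(x_1,x_2) - D_{f-f_k}(x_1,x_2)$, so it suffices to control the magnitude of $D_{f-f_k}(x_1,x_2)$. Under Assumption~\ref{assum:f_nrestricted} this is immediate: a $c$-smooth (not necessarily convex) function obeys the quadratic two-sided bound $|D_{f-f_k}(x_1,x_2)| \le \tfrac{c}{2}\norm{x_1-x_2}^2$, which plugged back produces exactly the $cL\norm{x_1-x_2}^2$ term. The restricted-smoothness case (Assumption~\ref{assum:f_restricted}) is the more delicate branch and is where the real work lies.

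For the restricted case the plan is to first convert the gradient control into a Bregman bound anchored at $x^*$: integrating $\norm{\nabla(f-f_k)(x^* + t(y-x^*)) - \nabla(f-f_k)(x^*)} \le c\,t\norm{y-x^*}$ along the segment from $x^*$ to $y$ yields $|D_{f-f_k}(y,x^*)| \le \tfrac{c}{2}\norm{y-x^*}^2$ for every $y$. To reach a general second argument I would invoke the three-point identity $D_{f-f_k}(x_1,x_2) = D_{f-f_k}(x_1,x^*) - D_{f-f_k}(x_2,x^*) - \innprod{\nabla(f-f_k)(x_2) - \nabla(f-f_k)(x^*),\, x_1-x_2}$, bound the first two terms by the anchored estimate just derived, and control the cross term with Cauchy--Schwarz followed by restricted smoothness and Young's inequality. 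The main obstacle is precisely this cross term: restricted smoothness only constrains gradient deviations measured \emph{against} $x^*$, so there is no route to a clean $\norm{x_1-x_2}^2$ bound, and splitting $\norm{x_1-x_2}$ through $x^*$ is what forces the symmetric $\norm{x_1-x^*}^2 + \norm{x_2-x^*}^2$ form on the right-hand side; the quadratic-in-distance-to-$x^*$ structure is therefore intrinsic to the weaker assumption rather than an artifact of bookkeeping. In the applications of interest (e.g.\ the supply rate $S_1$ in Lemma~\ref{lem:svrgop2}) the second argument is taken to be $x^*$, in which case the anchored estimate alone already delivers the stated bound.
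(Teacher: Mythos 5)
Your proposal is correct and follows essentially the same route as the paper's proof: co-coercivity of each $\ell_z$ gives $\mathbb{E}_z\norm{\nabla\ell_z(x_1)-\nabla\ell_z(x_2)}^2\le 2L\,D_{f_k}(x_1,x_2)$, and the local-to-global passage is handled by bounding $|D_{f-f_k}(x_1,x_2)|$ via a three-point expansion anchored at $x^*$ in the restricted case and via the standard quadratic bound for a $c$-smooth function in the smooth case. The only differences are cosmetic (the paper reaches the smooth-case bound by re-running the anchored argument with $\tx=(x_1+x_2)/2$ rather than quoting the quadratic bound directly, and uses an equivalent variant of the three-point identity), and your observation that the $\norm{x_1-x^*}^2+\norm{x_2-x^*}^2$ form is intrinsic to restricted smoothness matches the paper's usage.
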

 
 \begin{proof}
Given $f$ is $L$-smooth and convex, the Bregman divergence $D_f(x_1, x_2)$
 is $L$-smooth and convex as a function of $x_1$. When Assumptions \ref{assum:l_smooth} and \ref{assum:l_convex} hold, we have
\begin{align*}
0 &= D_{\ell_z}(x_2, x_2)  \\ %= \min_x D_{\ell_z} (x,y_2)
&\le D_{\ell_z} (x_1, x_2) - \frac{1}{2L}\norm{\nabla_{x_1} D_{\ell_z} (x_1,x_2)}^2\\
&= D_{\ell_z} (x_1,x_2) - \frac{1}{2L}\norm{\nabla \ell_z(x_1) - \nabla \ell_z(x_2)}^2.
\end{align*}
Averaging the above inequality over $z\in \cM_k$ gives
\begin{align} \label{equ:lem1step1}
2L \cdot & D_{f_k}(x_1, x_2)  \ge\mathbb{E}_z\left[\norm{\nabla \ell_z(x_1) - \nabla \ell_z(x_2)}^2\right]  .
\end{align}
To further bound the left-hand side, Assumption \ref{assum:f_restricted} allows us to compare $D_f$ and $D_{f_k}$:
\begin{align*}
	& |D_{f_k}(x_1, x_2) - D_{f}(x_1, x_2)| \\
%	= &|D_{f-f_k}(x_1, x_2)|\\
	 =& \Big|D_{f-f_k}( x_1, x^*) + D_{f-f_k}(x^*, x_2) + \innprod{\nabla(f-f_k)(x^* - x_2), x_1 - x^*} \Big| \\
	 \le& \frac{c}{2}\norm{x_1 - x^*}^2 + \frac{c}{2}\norm{x^* - x_2}^2 + c\norm{x^*- x_2}\norm{ x_1 - x^*}\\
	 \le& c\paren{\norm{ x_1 - x^*}^2+\norm{x^* -x_2}^2}.
\end{align*}
Following similar arguments, using Assumption \ref{assum:f_nrestricted} we obtain a tighter bound by replacing $x^*$ with any $\tx$. In particular, setting $\tx = (x_1+x_2)/2$ we have
$$   |D_{f_k}(x_1, x_2) - D_{f}(x_1, x_2)| \leq c\norm{ x_1 - x_2}^2/2. $$
Combining the above estimates with  (\ref{equ:lem1step1}) proves the lemma.
\end{proof}

\section{Proof for D-SVRG}

\subsection{Proof of Lemma \ref{lem:svrgop2}}
\label{proof:lem:svrgop2}

For $\ex{S_1}$, we apply Lemma~\ref{lem:rsp666} directly:
\begin{align*} 
\ex{S_1}  &= \ex{\norm{\nabla \ell_z\paren{\subx^s}-\nabla \ell_z\paren{x^*}}^2}  \\
& \le 2L \ex{f (\subx^s) - f (x^*)} + 2cL \ex{\norm{\subx^s - x^*}^2},
\end{align*}
where the inequality follows from the definition of $D_f(y^s, x^*)$, and $\nabla f(x^*)=0$.
For $\ex{S_2}$, we have
\begin{align*}
&\ex{S_2} = \ex{\norm{\nabla \ell_z(x^*) - \nabla \ell_z(\subx^0) + \nabla f(\subx^0)}^2} \\
\le& 2 \ex{\norm{\nabla \ell_z(x^*) - \nabla \ell_z(\subx^0) - (\nabla f_k (x^*) - \nabla f_k(\subx^0))}^2}+2\ex{\norm{(\nabla f_k (x^*) - \nabla f_k(\subx^0)) + \nabla f(\subx^0)}^2} \\
\le& 2 \ex{\norm{\nabla \ell_z(x^*) - \nabla \ell_z(\subx^0)}^2}+2c^2\ex{\norm{\subx^0-x^*}^2} \\
\le& 4L \ex{f(\subx^0) - f(x^*)} + c(4L + 2c)\ex{\norm{\subx^0 - x^*}^2},
\end{align*} 
where the first inequality is due to $\norm{a+b}^2\le 2\norm{a}^2+2\norm{b}^2$, the second inequality follows from evaluating the expectation and Assumption~\ref{assum:f_restricted}, and the last step uses Lemma~\ref{lem:rsp666} again. 

For $\ex{S_3}$, we have
\begin{align*}
& \ex{S_3} = - 2\ex{ \innprod{ \subx^s - x^*, \nabla \ell_z(\subx^s)-\nabla \ell_z(\subx^0) + \nabla f(\subx^0)}} \\
=& 2\ex{ - \innprod{ \subx^s - x^*, \nabla f(\subx^s)}} -2\ex{\innprod{ \subx^s - x^*, \nabla (f - f_k)(\subx^0)-\nabla (f - f_k)(\subx^s)}}\\
\le& -2\ex{f(\subx^s) - f(x^*)} + 2c\ex{\norm{\subx^s-x^*}\paren{\norm{\subx^s-x^*}+\norm{\subx^0-x^*}}} \\
\le&  -2\ex{f(\subx^s) - f(x^*)} + 3c\ex{\norm{\subx^s - x^*}^2} + c\ex{\norm{\subx^0 - x^*}^2},
\end{align*}
 where the first inequality is obtained by applying Assumption~\ref{assum:l_convex}, Cauchy-Schwarz inequality and Assumption \ref{assum:f_restricted}.

\subsection{Proof of Lemma~\ref{lem:SVRG_op2_pre}}
\label{proof:thm:SVRG_op2_pre}
 	Setting $\rho = 1$ in \eqref{equ:lmi}, we have that it becomes equivalent to \eqref{LMI:SVRG2}, and consequently the dissipation inequality \eqref{equ:dissip} holds. In view of Lemma~\ref{lem:svrgop2}, it can be written as
\begin{align*}
	\ex{\norm{\subx^{s+1} - x^*}^2} 
		\le& (1+2cL\lambda_1+3c\lambda_3)\ex{\norm{\subx^s - x^*}^2}  - (2\lambda_3 - 2L\lambda_1)\ex{f(\subx^s) - f^*} \\
			& + 4L\lambda_2 \ex{f(\subx^0) - f^*} + c\left[(4L+2c)\lambda_2+ \lambda_3\right]\ex{\norm{\subx^0 - x^*}^2}.
\textsl{}		\end{align*}
	Since $f$ is $\sigma$-strongly convex, $\norm{y^s - x^*}^2 < \frac{2}{\sigma}\paren{f(y^s)-f^*}$, then the above 
\begin{align*}
&\ex{\norm{\subx^{s+1} - x^*}^2} \leq \ex{\norm{\subx^s - x^*}^2}  - \gamma_1 \ex{f(\subx^s) - f^*} + 4L\lambda_2 \ex{f(\subx^0) - f^*} + \gamma_2 \ex{\norm{\subx^0 - x^*}^2},
		\end{align*}
where $\gamma_1 = (2\lambda_3 - 2L\lambda_1 - (4L\lambda_1+6\lambda_3)c/\sigma)$ and $\gamma_2 = c\left[(4L+2c)\lambda_2+ \lambda_3\right]$	are introduced as short-hand notations. In addition, $\gamma_1>0$ by assumption. Telescoping the above inequality by summing over $s=0,\ldots, m-1$, we have
	\begin{align*}
\gamma_1 \sum_{s=0}^{m-1}\ex{f(\subx^s) - f^*} 	\le &  \paren{1 + \gamma_2 m}\ex{\norm{\subx^0 - x^*}^2} +4L\lambda_2 m \ex{f(\subx^0) - f^*}.
	\end{align*}
	 Note that the choice of $y^+$ implies
	\[
	\ex{f (\subx^+) - f^*} = \frac{1}{m}\sum_{s=0}^{m-1}\ex{f(\subx^s) - f^*}.
	\]
	Therefore,
	\begin{align*}
	\gamma_1 \ex{f(\subx^+)-f^*} \le & \paren{1/m + \gamma_2}\ex{\norm{\subx^0 - x^*}^2}  + 4L\lambda_2\ex{f(\subx^0) - f^*}.
	\end{align*}
	We obtain the final result by substituting $\ex{\norm{\subx^0 - x^*}^2} \le \frac{2}{\sigma}\ex{f\paren{\subx^0}-f^*}$ into the above inequality.

\subsection{Convergence of D-SVRG without Assumption \ref{assum:l_convex}}

When Assumption \ref{assum:l_convex} does not hold, we can still use similar arguments as the proof of Theorem~\ref{thm:dissvrgII} and establish the convergence of D-SVRG, though at a slower rate. Using the same supply rates \eqref{eq:supply_rates_optionII}, Lemma~\ref{lem:svrgop2} can be modified as below.
\begin{lemma}
	\label{lem:svrgop2_nc}
	Suppose that Assumption \ref{assum:l_smooth}, \ref{assum:f_sc} and \ref{assum:f_restricted} hold.
For the supply rates defined in \eqref{eq:supply_rates_optionII}, we have 
	\begin{equation*}
	\begin{cases}
		\ex{S_1} \le 2L^2\sigma^{-1} \ex{f(y^s) - f^*}\\
		\ex{S_2} \le 4(L^2 + c^2)\sigma^{-1}\ex{f(y^0) - f^*}\\ 
		    \ex{S_3} \le -2\ex{f(\subx^s) - f^*} + 3c\ex{\norm{\subx^s - x^*}^2}+ c\ex{\norm{\subx^0 - x^*}^2}
	\end{cases}
	\end{equation*}
\end{lemma}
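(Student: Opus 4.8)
The plan is to re-derive the three supply-rate bounds by mirroring the proof of Lemma~\ref{lem:svrgop2}, but replacing every appeal to Lemma~\ref{lem:rsp666} with a direct use of $L$-smoothness (Assumption~\ref{assum:l_smooth}) combined with the strong convexity of $f$ (Assumption~\ref{assum:f_sc}). The crucial point is that Lemma~\ref{lem:rsp666} rests on the co-coercivity inequality $0=D_{\ell_z}(x_2,x_2)\le D_{\ell_z}(x_1,x_2)-\frac{1}{2L}\norm{\nabla\ell_z(x_1)-\nabla\ell_z(x_2)}^2$, whose first equality requires $D_{\ell_z}\ge 0$, i.e.\ the convexity of each $\ell_z$. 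Without Assumption~\ref{assum:l_convex} that estimate is unavailable, and the only safe substitute is the cruder Lipschitz-gradient bound $\norm{\nabla\ell_z(x_1)-\nabla\ell_z(x_2)}^2\le L^2\norm{x_1-x_2}^2$. This substitution is exactly what injects the extra factor $L/\sigma$ into $S_1$ and $S_2$, accounting for the slower rate advertised in the surrounding text.

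For $\ex{S_1}$ I would bound $S_1=\norm{\nabla\ell_z(y^s)-\nabla\ell_z(x^*)}^2\le L^2\norm{y^s-x^*}^2$ by smoothness, and then convert to function values through $\sigma$-strong convexity, $\norm{y^s-x^*}^2\le\frac{2}{\sigma}(f(y^s)-f^*)$, giving $\ex{S_1}\le 2L^2\sigma^{-1}\ex{f(y^s)-f^*}$. For $\ex{S_2}$ I would retain the same decomposition used in the convex proof, splitting via $\norm{a+b}^2\le 2\norm{a}^2+2\norm{b}^2$ into a zero-mean stochastic part $\nabla\ell_z(x^*)-\nabla\ell_z(y^0)-(\nabla f_k(x^*)-\nabla f_k(y^0))$ and the deterministic remainder $(\nabla f_k(x^*)-\nabla f_k(y^0))+\nabla f(y^0)$. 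The stochastic part is controlled by variance-$\le$-second-moment and then smoothness, $2\ex{\norm{\nabla\ell_z(x^*)-\nabla\ell_z(y^0)}^2}\le 2L^2\norm{y^0-x^*}^2$; the deterministic part equals $\nabla(f-f_k)(y^0)-\nabla(f-f_k)(x^*)$ and is bounded by $2c^2\norm{y^0-x^*}^2$ via restricted smoothness (Assumption~\ref{assum:f_restricted}). Summing and applying strong convexity once more yields $\ex{S_2}\le 4(L^2+c^2)\sigma^{-1}\ex{f(y^0)-f^*}$.

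For $\ex{S_3}$ the bound is \emph{identical} to the convex case, because the $S_3$ argument in Lemma~\ref{lem:svrgop2} never invoked convexity of $\ell_z$. I would re-verify the conditional-mean identity $\ex{\nabla\ell_z(y^s)-\nabla\ell_z(y^0)+\nabla f(y^0)}=\nabla f(y^s)+\nabla(f-f_k)(y^0)-\nabla(f-f_k)(y^s)$, bound the term $-2\innprod{y^s-x^*,\nabla f(y^s)}\le -2\paren{f(y^s)-f^*}$ using convexity of $f$ (which follows from Assumption~\ref{assum:f_sc}, \emph{not} Assumption~\ref{assum:l_convex}), and bound the remaining inner product by Cauchy--Schwarz, the triangle inequality with restricted smoothness, and Young's inequality, reproducing $3c\norm{y^s-x^*}^2+c\norm{y^0-x^*}^2$ exactly.

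The main obstacle is conceptual rather than computational: one must recognize precisely where convexity of $\ell_z$ was used in the original proof (only inside Lemma~\ref{lem:rsp666}, hence only in $S_1$ and $S_2$) and confirm that the cruder $L$-Lipschitz replacement still closes the argument downstream. In particular, the substantive check is that the inflated coefficients $L^2/\sigma$ and $(L^2+c^2)/\sigma$ in $S_1,S_2$ still admit non-negative multipliers $\lambda_j$ satisfying the dissipativity condition of Lemma~\ref{lem:SVRG_op2_pre}, so that a (slower) contraction is retained; everything else is bookkeeping that runs in parallel with the convex proof of Lemma~\ref{lem:svrgop2}.
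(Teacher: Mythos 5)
Your proposal is correct and follows essentially the same route as the paper's proof: bound $S_1$ by the Lipschitz-gradient estimate $\norm{\nabla\ell_z(y^s)-\nabla\ell_z(x^*)}^2\le L^2\norm{y^s-x^*}^2$ and convert via $\sigma$-strong convexity, reuse the same variance-plus-restricted-smoothness decomposition for $S_2$ to get $2(L^2+c^2)\norm{y^0-x^*}^2$, and keep $S_3$ unchanged since that bound only needs convexity of $f$ (implied by Assumption~\ref{assum:f_sc}) rather than of each $\ell_z$. Your closing remark about verifying the inflated coefficients downstream is beyond the lemma itself, but the lemma's proof as you describe it matches the paper's.
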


\begin{proof}
    With $L$-smoothness of $\ell_z$ we have the following estimate:
    \[
        \mathbb{E}_z\left[\norm{\nabla \ell_z(y_1) - \nabla \ell_z(y_2)}^2\right] \le L^2\norm{y_1-y_2}^2.
    \]
    So we have
    \begin{equation*}
    \ex{S_1} = \ex{\norm{\nabla \ell_z\paren{\subx^s}-\nabla \ell_z\paren{x^*}}^2} \le L^2 \ex{\norm{\subx^s - x^*}^2} \le 2L^2\sigma^{-1} \ex{f(y^s) - f^*} 
    \end{equation*}
    and
    \begin{equation*}
    \begin{aligned}
    \ex{S_2} =& \ex{\norm{\nabla \ell_z(x^*) - \nabla \ell_z(\subx^0) + \nabla f(\subx^0)}^2} \\
    \le& 2 \ex{\norm{\nabla \ell_z(x^*) - \nabla \ell_z(\subx^0) - (\nabla f_k (x^*) - \nabla f_k(\subx^0))}^2}+2\ex{\norm{(\nabla f_k (x^*) - \nabla f_k(\subx^0)) + \nabla f(\subx^0)}^2} \\
    \le& 2 \ex{\norm{\nabla \ell_z(x^*) - \nabla \ell_z(\subx^0)}^2}+2c^2\ex{\norm{y^0-x^*}^2} \\
    \le& 2(L^2 + c^2)\ex{\norm{\subx^0 - x^*}^2} \le 4(L^2 + c^2)\sigma^{-1}\ex{f(y^0) - f^*}.
    \end{aligned}
    \end{equation*}
    The estimate of $\ex{S_3}$ is identical to that in Lemma~\ref{lem:svrgop2}.
\end{proof}

Following the same process in the proof of Lemma~\ref{lem:SVRG_op2_pre}  in Appendix~\ref{proof:thm:SVRG_op2_pre}, we have the following inequality with proper choices of $\lambda_1$, $\lambda_2$ and $\lambda_3$:
	\[
		\begin{aligned}
			\ex{\norm{\subx^{s+1} - x^*}^2} 
			&\le (1+3c\lambda_3)\ex{\norm{\subx^s - x^*}^2} - (2\lambda_3 - 2L^2\sigma^{-1}\lambda_1)\ex{f(\subx^s) - f^*} \\
			&\quad+ 4(L^2+c^2)\sigma^{-1}\lambda_2 \ex{f(\subx^0) - f^*} + c\lambda_3\ex{\norm{\subx^0 - x^*}^2} \\
			&\le \ex{\norm{\subx^s - x^*}^2}- (2\lambda_3 - 2L^2\sigma^{-1}\lambda_1 - 6c\lambda_3\sigma^{-1})\ex{f(\subx^s) - f^*} \\
			&\quad+ 4(L^2+c^2)\sigma^{-1}\lambda_2 \ex{f(\subx^0) - f^*} + c\lambda_3\ex{\norm{\subx^0 - x^*}^2} .
		\end{aligned}
	\]
By summing the inequality and letting $\lambda_1 = \lambda_2 = 2\eta^2$, $\lambda_3  = \eta$, we have
\[
	\begin{aligned}
	&2\eta(1 - 2L^2\sigma^{-1}\eta - 3c\sigma^{-1})\sum_{s=0}^{m-1}\ex{f(\subx^s) - f^*} \\
	&\quad\quad\le \paren{1 +  c\eta m}\ex{\norm{\subx^0 - x^*}^2} +8(L^2+c^2)\sigma^{-1}\eta^2 m \ex{f(\subx^0) - f^*} \\
	&\quad\quad\le \paren{2\sigma^{-1} + 2c\eta m\sigma^{-1}+8(L^2+c^2)\sigma^{-1}\eta^2 m}\ex{f(\subx^0) - f^*}.
	\end{aligned}
\]
Therefore, with $1 - 2L^2\sigma^{-1}\eta - 3c\sigma^{-1} > 0$ the following convergence bound can be established:
\[
    \ex{f\paren{\subx^+} - f^*} \le \frac{(\eta \sigma m)^{-1} + c \sigma^{-1}+4(L^2+c^2)\sigma^{-1}\eta}{1 - 2L^2\sigma^{-1}\eta - 3c\sigma^{-1}}\ex{f(\subx^0) - f^*}.
\]
By choosing $\eta = \dfrac{1-4c\sigma^{-1}}{40\kappa L}$, $m = \dfrac{160\kappa^2}{(1-4c\sigma^{-1})^2}$, we get a convergence rate no more than $1- \dfrac{1}{2}\cdot\dfrac{\sigma-4c}{\sigma-3c}$. Hence the overall time complexity to find an $\epsilon$-optimal solution is
\[
    \mathcal{O}\paren{\paren{N/n + \zeta^{-2}\kappa^2}\zeta^{-1}\log(1/\epsilon)}
\]
where $\zeta = 1-4c/\sigma$.

\subsection{Convergence of D-SVRG with Option I} \label{sec:proof_svrg_option1}

Another option for the output of the inner loops of SVRG, or the output of the local workers, is to output the last iterates, i.e. $y_k^{t+}=y_k^{t,m}$, which is called ``Option 1'' in  \cite{Johnson2013}. Here, we establish the convergence of D-SVRG using Option I in the following theorem. 
 
	\begin{theorem}[D-SVRG with Option I]
	\label{thm:DSVRG_op1}
	Suppose that Assumptions \ref{assum:l_smooth}, \ref{assum:l_convex} and \ref{assum:f_sc} hold, and Assumption~\ref{assum:f_restricted} holds with $c < \sigma/2$. With sufficiently large $m$ and sufficiently small step size $\eta$, there exists $0\le \nu < 1$ such that
	\[
	    \ex{\norm{\tx^{t+1} - x^*}^2} < \nu \ex{\norm{\tx^t-x^*}^2}.
	\]
	\end{theorem}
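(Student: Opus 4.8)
The plan is to reuse the dissipativity machinery already set up for Option~II almost verbatim. With $\xi^s = \subx^s - x^*$, the inner loop of D-SVRG is the same linear time-invariant system $\xi^{s+1} = A\xi^s + Bw^s$ with $A = I_d$ and $B = [-\eta I_d\ \ -\eta I_d]$, the same supply-rate matrices $\bar{X}_1,\bar{X}_2,\bar{X}_3$ in \eqref{eq:supply_rates_optionII}, and the same bounds in Lemma~\ref{lem:svrgop2} (which already require Assumptions~\ref{assum:l_smooth}, \ref{assum:l_convex}, \ref{assum:f_sc} and \ref{assum:f_restricted}). The only structural change is that the output is now the last iterate $\subx^m$ rather than a uniformly sampled one, so instead of telescoping-and-averaging as in Lemma~\ref{lem:SVRG_op2_pre}, I will track the squared distance $\ex{\norm{\subx^s - x^*}^2}$ directly and establish a genuine per-inner-step contraction.

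First I would instantiate the dissipation inequality \eqref{equ:dissip} with energy $V(\xi)=\norm{\xi}^2$ and $\rho = 1$ (which is all \eqref{LMI:SVRG2} permits, since its zero $(1,1)$ block forces $\lambda_3 = \eta$ and $\rho<1$ would violate negative semidefiniteness), and substitute Lemma~\ref{lem:svrgop2}. This reproduces the displayed recursion of Appendix~\ref{proof:thm:SVRG_op2_pre}, bounding $\ex{\norm{\subx^{s+1}-x^*}^2}$ in terms of $\ex{\norm{\subx^s-x^*}^2}$, $\ex{f(\subx^s)-f^*}$, and the reference-point quantities $\ex{f(\subx^0)-f^*}$ and $\ex{\norm{\subx^0-x^*}^2}$. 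The key step is to \emph{not} average but to convert the favorable term $-(2\lambda_3 - 2L\lambda_1)\ex{f(\subx^s)-f^*}$ into contraction via strong convexity, $f(\subx^s)-f^* \ge \tfrac{\sigma}{2}\norm{\subx^s-x^*}^2$ (valid once $\lambda_3 > L\lambda_1$), collapsing the recursion to the geometric form
\[
\ex{\norm{\subx^{s+1}-x^*}^2} \le \tilde{\alpha}\,\ex{\norm{\subx^s-x^*}^2} + \kappa_0\,\ex{\norm{\subx^0 - x^*}^2},
\]
where $\tilde{\alpha} = 1 - \big[(\lambda_3 - L\lambda_1)\sigma - 2cL\lambda_1 - 3c\lambda_3\big]$ so that $\tilde{\alpha}<1$ is exactly condition \eqref{eq:svrg_inequality}, and $\kappa_0\,\ex{\norm{\subx^0-x^*}^2}$ collects the reference-point terms after bounding $f(\subx^0)-f^* \le \tfrac{L}{2}\norm{\subx^0-x^*}^2$ by $L$-smoothness. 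Unrolling over $s = 0,\dots,m-1$ with $\subx^0 = \tx^t$ fixed gives $\ex{\norm{\subx^m - x^*}^2} \le \big(\tilde{\alpha}^m + \tfrac{\kappa_0}{1-\tilde{\alpha}}\big)\ex{\norm{\tx^t - x^*}^2}$. Choosing $\eta$ small and $m$ large turns the bracketed factor into a constant $\nu<1$; since this bound holds uniformly for every worker (each with $c_k \le c$), averaging over the worker index selected in line~8 of Alg.~\ref{alg:general} yields $\ex{\norm{\tx^{t+1}-x^*}^2} < \nu\,\ex{\norm{\tx^t - x^*}^2}$.

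The hard part is controlling the persistent forcing term $\kappa_0\,\ex{\norm{\subx^0-x^*}^2}$. Unlike Option~II, where averaging the inner iterates produces the benign $1/m$ factor that cleanly decouples the reference-point bias, here this term does not decay along the inner loop, so its steady-state contribution $\kappa_0/(1-\tilde{\alpha})$ must be driven strictly below the contraction budget $1-\tilde{\alpha}^m$. This forces a joint calibration: $\eta$ small enough to shrink the bias and variance contributions feeding $\kappa_0$ yet still compatible with $\tilde{\alpha}<1$, and $m$ large enough that $\tilde{\alpha}^m$ is negligible. The admissible heterogeneity threshold on $c$ emerges precisely from requiring $\kappa_0/(1-\tilde{\alpha})<1$; the crude estimate obtained by replacing $f(\tx^t)-f^*$ with $\tfrac{L}{2}\norm{\tx^t-x^*}^2$ already yields a threshold of the form $c < c_0\sigma$, and sharpening the handling of the reference-point terms beyond this crude bound is what secures the stated $c < \sigma/2$. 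The remaining work—verifying \eqref{LMI:SVRG2} for the chosen $\lambda_j$ and carrying explicit constants through the geometric sum—is routine bookkeeping paralleling the Option~II proof.
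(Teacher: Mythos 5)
Your overall architecture --- a per-inner-step recursion $\ex{\norm{\subx^{s+1}-x^*}^2}\le\tilde\alpha\,\ex{\norm{\subx^s-x^*}^2}+\kappa_0\,\ex{\norm{\subx^0-x^*}^2}$, unrolled geometrically and then balanced so that $\tilde\alpha^m+\kappa_0/(1-\tilde\alpha)<1$ --- is exactly the shape of the paper's argument, and your observation that the Option~II supply rates force $\rho=1$ in \eqref{equ:lmi} is correct. But there is a genuine gap in how you obtain the contraction, and it costs you the stated threshold $c<\sigma/2$. By reusing Lemma~\ref{lem:svrgop2} verbatim and then converting $-(2\lambda_3-2L\lambda_1)\ex{f(\subx^s)-f^*}$ via $f(\subx^s)-f^*\ge\tfrac{\sigma}{2}\norm{\subx^s-x^*}^2$, you pick up only $-\sigma(\lambda_3-L\lambda_1)\norm{\subx^s-x^*}^2$ of contraction. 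With the admissible choices $\lambda_3=\eta$, $\lambda_1=O(\eta^2)$, the leading-order balance as $\eta\to 0$ and $m\to\infty$ is $\kappa_0/(1-\tilde\alpha)\to c/(\sigma-3c)$, which is below $1$ only for $c<\sigma/4$. The paper instead reaches the limit $c/(2\sigma-3c)$, hence $c<\sigma/2$, because it injects strong convexity through the monotonicity inequality $\innprod{\subx^s-x^*,\nabla f(\subx^s)}\ge\sigma\norm{\subx^s-x^*}^2$: this is encoded in the $(1,1)$ entry $2\sigma$ of the supply-rate matrix $\bar X_2$ in \eqref{eq:supply_rates_optionI}, which lets the LMI be satisfied with $\rho^2=1-2\sigma(\eta-L\eta^2)<1$ and delivers a $-2\sigma$ (rather than $-\sigma$) per-step coefficient. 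Your route loses a factor of $2$ because you first downgrade the inner product to $-(f(\subx^s)-f^*)$ by plain convexity and only then apply the quadratic lower bound.

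Crucially, your proposed remedy --- ``sharpening the handling of the reference-point terms'' --- targets the wrong term: the numerator $c$ in the limiting ratio comes from the irreducible $c\,\lambda_3\,\ex{\norm{\subx^0-x^*}^2}$ contribution of the distributed-smoothness error in $S_3$ and survives even as $\eta\to0$, so no refinement of the $\subx^0$ terms can move the threshold from $\sigma/4$ to $\sigma/2$. The fix is to change the supply rates (or, equivalently, to strengthen the $S_3$ bound to $\ex{S_3}\le-2\sigma\ex{\norm{\subx^s-x^*}^2}+3c\,\ex{\norm{\subx^s-x^*}^2}+c\,\ex{\norm{\subx^0-x^*}^2}$ using strong monotonicity of $\nabla f$ directly), after which your unrolling argument goes through and reproduces the paper's rate $\nu\to c/(2\sigma-3c)$. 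As written, your proposal proves a correct but strictly weaker statement.
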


	Theorem \ref{thm:DSVRG_op1} indicates that the iterates $\tx^{t}$ of D-SVRG with Option I converge to the minimizer $x^*$ linearly in expectation as long as $c$ is sufficiently small.  The proof is outlined in Section~\ref{sec:proof_dsvrg}. By taking $m\to\infty$ and $\eta\to 0$, the rate approaches $\nu: = \frac{c}{2\sigma -3c} $, which suggests the algorithm admits faster convergence with the decrease of $c$, as expected.
	
Following \cite{Hu2018}, we consider the following four supply rates:
		\begin{equation} \label{eq:supply_rates_optionI}
		\begin{aligned}
    		&\bar{X}_1 = 
    		\begin{bmatrix}
        		0 & 0 & 0\\
        		0 & 0 & 0\\
        		0 & 0 & 1\\
    		\end{bmatrix}
    		,
    		\bar{X}_2 = 
    		\begin{bmatrix}
        		2\sigma & -1 & -1\\
        		-1 & 0 & 0\\
        		-1 & 0 & 0\\
    		\end{bmatrix},
    		\bar{X}_3 = 
    		\begin{bmatrix}
        		0 & -L & 0\\
        		-L & 2 & 0\\
        		0 & 0 & 0\\
    		\end{bmatrix}
    		,
    		\bar{X}_4 = -
    		\begin{bmatrix}
        		0 & 0 & 1\\
        		0 & 0 & 0\\
        		1 & 0 & 0\\
    		\end{bmatrix}.
		\end{aligned}
		\end{equation}
		
We have the following lemma which is proved at the end of this subsection.
	\begin{lemma}
		\label{lem:svrgop1}
		Suppose that Assumptions \ref{assum:l_smooth}, \ref{assum:l_convex}, \ref{assum:f_sc} and \ref{assum:f_restricted} hold.
	For the supply rates defined in \eqref{eq:supply_rates_optionI}, we have 
		\begin{equation*}
		\begin{cases}
    		\ex{S_1} \le (L^2 + 2cL -\sigma^2)\ex{\norm{\subx^0 - x^*}^2}\\
    		\ex{S_2} \le 3c\ex{\norm{\subx^s - x^*}^2} + c\ex{\norm{\subx^0 - x^*}^2}\\ 
    		\ex{S_3} \le 0\\
    		\ex{S_4} \le c\ex{\norm{\subx^s - x^*}^2} + c\ex{\norm{\subx^0 - x^*}^2}\\
		\end{cases}
		\end{equation*}
	\end{lemma}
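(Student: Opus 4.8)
The plan is to evaluate each of the four quadratic forms $S_j = [\xi^\top, w^\top]\,(\bar X_j\otimes I_d)\,[\xi^\top, w^\top]^\top$ explicitly, writing everything in terms of the three $d$-dimensional blocks $\xi = \subx^s - x^*$, $w_1 = \nabla\ell_z(\subx^s) - \nabla\ell_z(x^*)$, and $w_2 = \nabla\ell_z(x^*) - \nabla\ell_z(\subx^0) + \nabla f(\subx^0)$, and then to bound the $z$-expectation of each using the relevant assumption. Two structural identities drive the whole computation: first, $w_1 + w_2 = \nabla\ell_z(\subx^s) - \nabla\ell_z(\subx^0) + \nabla f(\subx^0) = v^s$ is exactly the SVRG estimator; second, since $\nabla f(x^*) = 0$ we have $w_2 = \nabla f(\subx^0) - (\nabla\ell_z(\subx^0) - \nabla\ell_z(x^*))$, so $\E_z[w_2] = \nabla(f-f_k)(\subx^0) - \nabla(f-f_k)(x^*)$, whose norm is at most $c\norm{\subx^0 - x^*}$ by Assumption~\ref{assum:f_restricted}.

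I would dispatch the two easy bounds first. The form $\bar X_3$ gives $S_3 = 2\norm{w_1}^2 - 2L\innprod{\xi, w_1}$; since each $\ell_z$ is convex and $L$-smooth (Assumptions~\ref{assum:l_smooth} and \ref{assum:l_convex}), co-coercivity yields $\innprod{\xi, w_1} = \innprod{\subx^s - x^*, \nabla\ell_z(\subx^s) - \nabla\ell_z(x^*)} \ge \tfrac1L\norm{w_1}^2$, so $S_3 \le 0$ holds pointwise in $z$ and hence in expectation. The form $\bar X_4$ gives $S_4 = -2\innprod{\xi, w_2}$; taking expectation and using the identity for $\E_z[w_2]$ above gives $\E_z[S_4] = -2\innprod{\xi, \E_z[w_2]} \le 2c\norm{\subx^s - x^*}\,\norm{\subx^0 - x^*}$, and Young's inequality splits this into $c\norm{\subx^s - x^*}^2 + c\norm{\subx^0 - x^*}^2$.

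The main work is the bound on $\E_z[S_1] = \E_z\norm{w_2}^2$. The naive triangle-inequality route only yields $(L^2 + c^2)\norm{\subx^0-x^*}^2$, which does not match the claim; the $-\sigma^2$ improvement is the crux. I would keep the deterministic part $u := \nabla f(\subx^0)$ separate and write $w_2 = u - h_z$ with $h_z := \nabla\ell_z(\subx^0) - \nabla\ell_z(x^*)$, so that $\E_z\norm{w_2}^2 = \norm{u}^2 - 2\innprod{u, \E_z[h_z]} + \E_z\norm{h_z}^2$. Substituting $\E_z[h_z] = u - \E_z[w_2]$ collapses the first two terms to $-\norm{u}^2 + 2\innprod{u, \E_z[w_2]}$; then $\norm{u}^2 = \norm{\nabla f(\subx^0) - \nabla f(x^*)}^2 \ge \sigma^2\norm{\subx^0-x^*}^2$ by strong convexity, $2\innprod{u, \E_z[w_2]} \le 2cL\norm{\subx^0-x^*}^2$ by smoothness of $f$ together with the restricted-smoothness bound on $\E_z[w_2]$, and $\E_z\norm{h_z}^2 \le L^2\norm{\subx^0-x^*}^2$ by $L$-smoothness of $\ell_z$, giving the claimed coefficient $L^2 + 2cL - \sigma^2$.

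Finally, for $\bar X_2$ I would use $S_2 = 2\sigma\norm{\xi}^2 - 2\innprod{\xi, w_1 + w_2} = 2\sigma\norm{\xi}^2 - 2\innprod{\xi, v^s}$ and lower-bound $\innprod{\xi, \E_z[v^s]}$. Writing $\E_z[v^s] = \nabla f(\subx^s) - \big(\nabla(f-f_k)(\subx^s) - \nabla(f-f_k)(\subx^0)\big)$, strong convexity gives $\innprod{\xi, \nabla f(\subx^s)} \ge \sigma\norm{\xi}^2$, while the deviation term is controlled by Assumption~\ref{assum:f_restricted} as $\norm{\nabla(f-f_k)(\subx^s) - \nabla(f-f_k)(\subx^0)} \le c\norm{\subx^s - x^*} + c\norm{\subx^0 - x^*}$; Cauchy-Schwarz and Young then bound its inner product with $\xi$ by $\tfrac{3c}{2}\norm{\subx^s-x^*}^2 + \tfrac{c}{2}\norm{\subx^0-x^*}^2$. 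After cancelling the $2\sigma\norm{\xi}^2$ terms this yields $\E_z[S_2] \le 3c\norm{\subx^s-x^*}^2 + c\norm{\subx^0-x^*}^2$. The only delicate points are recognizing these structural identities and, for $S_1$, resisting the lossy triangle-inequality bound in favor of the exact expansion that exposes the $-\sigma^2$ term.
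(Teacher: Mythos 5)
Your proposal is correct and follows essentially the same route as the paper's proof in Appendix B.4: expand each quadratic form, use co-coercivity for $S_3$, restricted smoothness plus Cauchy--Schwarz/Young for $S_2$ and $S_4$, and for $S_1$ the exact expansion of the square so that the cross term contributes $-\norm{\nabla f(\subx^0)}^2 + 2\innprod{\nabla(f-f_k)(\subx^0)-\nabla(f-f_k)(x^*), \nabla f(\subx^0)}$, yielding the $-\sigma^2 + 2cL$ correction. Your reorganization of the $S_1$ computation via $w_2 = u - h_z$ is just a repackaging of the paper's direct expansion, so there is nothing substantively different to report.
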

	
	Therefore, by choosing $P = I_d$, $\lambda_1 = 2\eta^2$, $\lambda_2 = \eta -L\eta^2$, $\lambda_3 = \eta^2$, $\lambda_4 = L\eta^2$, $\rho^2 = 1-2\sigma(\eta-L\eta^2)$, the condition \eqref{equ:lmi} holds:
	\begin{equation*}
	\eta^2 
	\begin{bmatrix}
	0 & 0 & 0 \\
	0 & -1 & 1 \\
	0 & 1 & -1 \\
	\end{bmatrix}
	\preceq 0.
	\end{equation*}
	This immediately leads to the inequality \eqref{equ:dissip} which reads as:
	\begin{equation}
	\label{equ:svrgres1}
	\begin{aligned}
	\ex{\norm{\subx^{s+1}-x^*}^2} \le &(1-(2\sigma-3c)(\eta-L\eta^2) +cL\eta^2)\ex{\norm{\subx^s-x^*}^2} \\
	&+(2\eta^2(L^2 + 2cL-\sigma^2)+c\eta)\ex{{\norm{\subx^0 - x^*}}^2}.
	\end{aligned}
	\end{equation}
	Let $\tilde{\rho}^2 = (1-(2\sigma-3c)(\eta-L\eta^2) +cL\eta^2)$. Telescoping the inequality over $t=0,1,\cdots,m-1$ leads to
	\begin{equation*}
	\begin{aligned}
	\ex{\norm{\subx^m-x^*}^2} \le& \paren{\tilde{\rho}^{2m} + \frac{2\eta(L^2 +2cL-\sigma^2)+c}{(2\sigma-3c)(1-L\eta) -cL\eta}}\cdot\ex{\norm{\subx^0 - x^*}^2}
	\end{aligned}
	\end{equation*}
	Note that when $m \to \infty$ and $\eta \to 0$, the rate becomes $\nu:=\frac{c}{2\sigma-3c}$, hence we need $c<\sigma/2$ to get a rate $\nu<1$. We have
	\begin{equation*}
	    \begin{aligned}
	    \ex{\norm{\tx^{t+1}-x^*}^2} & \le \frac{1}{n}\sum_{k=1}^n \ex{\norm{y_k^{t,m} - x^*}^2} \\
	     & \le \frac{1}{n}\sum_{k=1}^n \nu\ex{\norm{y_k^{t,0} - x^*}^2} = \nu \ex{\norm{\tx^t - x^*}^2}.
	    \end{aligned}
	\end{equation*}

\begin{proof}[Proof of Lemma \ref{lem:svrgop1}]
    The following inequalities can be viewed as combinations of standard inequalities in convex optimization (co-coercivity, etc) and the characterization of restricted smoothness.
	\begin{equation*}
	\begin{aligned}
		\ex{S_1} =& \ex{\norm{\nabla \ell_z\paren{x^*}-\nabla \ell_z\paren{\subx^0} + \nabla f\paren{\subx^0}}^2} \\
		=& \ex{\norm{\nabla \ell_z\paren{x^*}-\nabla \ell_z\paren{\subx^0}}^2} + 2\ex{\innprod{\nabla \ell_z\paren{x^*}-\nabla \ell_z\paren{\subx^0}, \nabla f\paren{\subx^0}}} + \ex{\norm{\nabla f\paren{\subx^0}}^2} \\
		\le& L^2 \ex{\norm{x^* - \subx^0}^2} + 2\ex{\innprod{ \nabla(f-f_k)(\subx^0) - \nabla(f-f_k)(x^*), \nabla f\paren{\subx^0}}} - \ex{\norm{\nabla f\paren{\subx^0}}^2}\\
		\le&(L^2 + 2cL -\sigma^2)\ex{\norm{\subx^0 - x^*}^2},
	\end{aligned}
	\end{equation*}
	and
	\begin{equation*}
	\begin{aligned}
		\ex{S_2} =& 2\ex{\sigma \norm{\subx^s-x^*}^2 - \innprod{ \subx^s - x^*, \nabla \ell_z\paren{\subx^s}-\nabla \ell_z\paren{\subx^0} + \nabla f\paren{\subx^0}}} \\
		=& 2\ex{\sigma \norm{\subx^s-x^*}^2 - \innprod{ \subx^s - x^*, \nabla f(\subx^s)}}-2\ex{\innprod{ \subx^s - x^*, \nabla (f - f_k)(\subx^0)-\nabla (f - f_k)(\subx^s)}}\\
		\le& 2c\ex{\norm{\subx^s-x^*}\paren{\norm{\subx^s-x^*}+\norm{\subx^0-x^*}}} \\
		\le& 3c\ex{\norm{\subx^s - x^*}^2} + c\ex{\norm{\subx^0 - x^*}^2}.
	\end{aligned}
	\end{equation*}
	    $\ex{S_3} \le 0$ is simply the restatement of $L$-smoothness of $\ell(\cdot, z)$, $z\in \cM$. 
	\begin{equation*}
	\begin{aligned}
		\ex{S_4} =& 2\ex{\innprod{ \subx^s - x^*, \nabla(f-f_k)(\subx^0) - \nabla(f-f_k)(x^*)}} \\
		\le & 2c\ex{\norm{\subx^s - x^*}\norm{\subx^0 - x^*}}\\
		\le& c \ex{\norm{\subx^s - x^*}^2} + c\ex{\norm{\subx^0 - x^*}^2}.
	\end{aligned}
	\end{equation*}
\end{proof}

\section{Proof for D-SARAH} \label{sec:proof_dsarah}

\subsection{Proof of Theorem \ref{thm:SARAH_pre}}
\label{proof:thm:SARAH_pre}

To begin, we cite   a supporting lemma from \cite{Nguyen2017}.

\begin{lemma}\cite{Nguyen2017}
	\label{lem:v_concaten}
	Suppose Assumption \ref{assum:l_smooth} and \ref{assum:l_convex} hold and $\eta < 2/L$, then 	
	\[
		\ex{\norm{v^s - v^{s-1}}^2} \le \frac{\eta L}{2-\eta L} \left[ \ex{\norm{v^{s-1}}^2}  - \ex{\norm{v^s}^2} \right].
	\] 
\end{lemma}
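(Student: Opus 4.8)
The plan is to prove the inequality pathwise---i.e., for an arbitrary realization of the random samples drawn in the inner loop---and then take expectations, exploiting the telescoping structure that the recursion induces in $\norm{v^s}^2$. The two structural facts I would start from are the SARAH recursion, which gives the identity $v^s - v^{s-1} = \nabla \ell_z(y^s) - \nabla \ell_z(y^{s-1})$ (where $z$ is the sample drawn at step $s$), and the iterate update $y^s - y^{s-1} = -\eta v^{s-1}$. Together these relate the increment of the gradient estimate both to the increment of the iterates and to $v^{s-1}$ itself.

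First I would expand the square
\[
\norm{v^s}^2 = \norm{v^{s-1}}^2 + 2\innprod{v^{s-1},\, v^s - v^{s-1}} + \norm{v^s - v^{s-1}}^2,
\]
so that $\norm{v^{s-1}}^2 - \norm{v^s}^2 = -2\innprod{v^{s-1},\, v^s - v^{s-1}} - \norm{v^s - v^{s-1}}^2$. Substituting $v^{s-1} = (y^{s-1}-y^s)/\eta$ turns the inner-product term into $\tfrac{2}{\eta}\innprod{y^s - y^{s-1},\, \nabla \ell_z(y^s) - \nabla \ell_z(y^{s-1})}$. The crucial step is to lower-bound this quantity by co-coercivity: since each $\ell_z$ is $L$-smooth (Assumption~\ref{assum:l_smooth}) and convex (Assumption~\ref{assum:l_convex}), its gradient satisfies $\innprod{y^s - y^{s-1},\, \nabla \ell_z(y^s) - \nabla \ell_z(y^{s-1})} \ge \tfrac{1}{L}\norm{\nabla \ell_z(y^s) - \nabla \ell_z(y^{s-1})}^2 = \tfrac{1}{L}\norm{v^s - v^{s-1}}^2$. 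Plugging this in yields $\norm{v^{s-1}}^2 - \norm{v^s}^2 \ge \bigl(\tfrac{2}{\eta L} - 1\bigr)\norm{v^s - v^{s-1}}^2$, and since $\eta < 2/L$ keeps the coefficient positive, rearranging to $\norm{v^s - v^{s-1}}^2 \le \tfrac{\eta L}{2-\eta L}(\norm{v^{s-1}}^2 - \norm{v^s}^2)$ and taking expectations gives exactly the claimed bound.

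I do not anticipate a genuine obstacle here; the only point demanding care is the bookkeeping of the randomness. The recursion identity and the co-coercivity bound both hold for the specific sample $z$ drawn at step $s$, so the inequality is valid for every realization, and the unconditional expectation can be applied only at the final line. It is worth emphasizing that the co-coercivity estimate is where both smoothness and convexity of the sample losses are indispensable---dropping convexity (as in the nonconvex analysis underlying Theorem~\ref{thm:SARAH_nonc}) breaks this step and forces a cruder bound with a correspondingly smaller admissible step size.
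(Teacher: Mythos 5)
Your proof is correct, and it is precisely the standard argument: the paper does not prove this lemma but cites it from \cite{Nguyen2017}, and your derivation (expand $\norm{v^s}^2$ around $v^{s-1}$, substitute $y^s - y^{s-1} = -\eta v^{s-1}$, apply co-coercivity of the $L$-smooth convex sample loss to the cross term, and rearrange using $\eta < 2/L$ before taking expectations) is exactly the proof given there. Your closing remark about where convexity enters, and why its absence forces the cruder bound in the nonconvex analysis, is also accurate.
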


We also present a new lemma below with the proof given in Appendix \ref{proof:lem:trace_concaten}.
\begin{lemma}
	\label{lem:trace_concaten}
	The update rule of D-SARAH satisfies
\begin{align*}
			&\ex{\norm{\nabla f(y^0) - \nabla f_k(y^0) + \nabla f_k(y^s) - v^s}^2} \\ =& \sum_{j=1}^s \ex{\norm{v^j-v^{j-1}}^2} - \sum_{j=1}^s\ex{\norm{\nabla f_k(y^j) - \nabla f_k(y^{j-1})}^2}.
		\end{align*}
\end{lemma}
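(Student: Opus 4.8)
The plan is to recognize the vector inside the squared norm as the cumulative error of a martingale and to exploit the orthogonality of its increments. Introduce the simplified notation for the SARAH local update, $v^{s} = \nabla\ell_{z_s}(y^{s}) - \nabla\ell_{z_s}(y^{s-1}) + v^{s-1}$ with $v^0 = \nabla f(y^0)$, where $z_s$ denotes the sample drawn uniformly from $\cM_k$ at inner iteration $s$, and let $\cF_{s-1}$ be the $\sigma$-algebra generated by $z_1,\dots,z_{s-1}$, so that $y^s$ and $v^{s-1}$ are $\cF_{s-1}$-measurable while $z_s$ is independent of $\cF_{s-1}$. Define the running error $M^s := \nabla f(y^0) - \nabla f_k(y^0) + \nabla f_k(y^s) - v^s$, which is exactly the vector whose squared norm we must evaluate.

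The first step is to compute the one-step increment. Abbreviating $a^s := v^s - v^{s-1} = \nabla\ell_{z_s}(y^s) - \nabla\ell_{z_s}(y^{s-1})$ and $b^s := \nabla f_k(y^s) - \nabla f_k(y^{s-1})$, direct subtraction gives $M^s - M^{s-1} = b^s - a^s$. Since $y^s$ and $y^{s-1}$ are $\cF_{s-1}$-measurable and $z_s$ is a fresh uniform draw from $\cM_k$, we have $\ex{a^s\mid\cF_{s-1}} = b^s$; hence $\ex{M^s - M^{s-1}\mid\cF_{s-1}} = 0$, so $\{M^s\}$ is a martingale adapted to $\{\cF_s\}$. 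Moreover $M^0 = \nabla f(y^0) - v^0 = 0$ because $v^0 = \nabla f(y^0)$, whence $M^s = \sum_{j=1}^s (b^j - a^j)$.

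Next I would invoke orthogonality of martingale increments (equivalently, run an induction on $s$): for $i<j$ the increment $b^i-a^i$ is $\cF_{j-1}$-measurable while $\ex{b^j - a^j\mid\cF_{j-1}}=0$, so every cross term vanishes and $\ex{\norm{M^s}^2} = \sum_{j=1}^s \ex{\norm{b^j-a^j}^2}$. It then remains to simplify each term. Expanding the square and using the conditional identity $\ex{\innprod{a^j, b^j}} = \ex{\innprod{\ex{a^j\mid\cF_{j-1}}, b^j}} = \ex{\norm{b^j}^2}$ (valid because $b^j$ is $\cF_{j-1}$-measurable) cancels one copy of $\ex{\norm{b^j}^2}$, giving $\ex{\norm{b^j-a^j}^2} = \ex{\norm{a^j}^2} - \ex{\norm{b^j}^2}$. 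Summing over $j$ and recalling $a^j = v^j - v^{j-1}$ and $b^j = \nabla f_k(y^j) - \nabla f_k(y^{j-1})$ reproduces exactly the claimed identity.

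The only real subtlety, and the step I would check most carefully, is the bookkeeping of the filtration: one must confirm that $y^s$ (hence $b^s$) is measurable with respect to the information available before drawing $z_s$, so that the tower property can pull $b^j$ outside the conditional expectation both in the increment-orthogonality argument and in the inner-product identity. Everything else is an exact telescoping with no inequalities, which is precisely why the statement is an equality rather than a bound.
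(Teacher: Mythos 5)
Your proof is correct and is essentially the same argument as the paper's: both rest on the identity $\ex{v^j - v^{j-1}\mid \cF_{j-1}} = \nabla f_k(y^j) - \nabla f_k(y^{j-1})$ (which requires exactly the measurability of $y^j$ that you flag) and then telescope. The paper expands the conditional squared norm of $M^s$ in terms of $M^{s-1}$ directly and sums the resulting recursion, whereas you package the same cancellation as orthogonality of martingale increments; this is a presentational difference only.
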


\begin{proof}[Proof of Theorem \ref{thm:SARAH_pre}]
By combining Lemmas \ref{lem:v_concaten} and \ref{lem:trace_concaten}, we have
\begin{align}\label{eq:term1}
& \ex{\norm{\nabla f(y^0) - \nabla f_k(y^0) + \nabla f_k(y^s) - v^s}^2} \le \sum_{j=1}^s \ex{\norm{v^j-v^{j-1}}^2}  \le  \frac{\eta L}{2-\eta L}\ex{\norm{v^0}^2}.
\end{align}
By Assumption \ref{assum:f_restricted}, we have
\begin{align}\label{eq:term2}
	&\quad \norm{\nabla f(y^0) - \nabla f_k(y^0) + \nabla f_k(y^s)   - \nabla f(y^s)  }^2\nonumber \\
	&= \norm{\nabla (f-f_k)(y^0) - \nabla (f-f_k)(y^s)}^2\nonumber\\
	&\le 2c^2\norm{y^s-x^*}^2+2c^2\norm{y^0-x^*}^2.
\end{align}
Therefore, combining \eqref{eq:term1} and \eqref{eq:term2}, we have
\begin{align*}
& \ex{\norm{\nabla f(y^s) - v^s}^2} \\
\le& 2 \ex{\norm{\nabla f(y^0) - \nabla f_k(y^0) + \nabla f_k(y^s)   - \nabla f(y^s)  }^2}+2\ex{\norm{\nabla f(y^0) - \nabla f_k(y^0) + \nabla f_k(y^s) - v^s}^2}\\
\le&  4c^2\ex{\norm{y^s-x^*}^2}+4c^2\ex{\norm{y^0-x^*}^2} + \frac{2\eta L}{2-\eta L}\ex{\norm{v^0}^2} .
\end{align*}
Substituting it into Lemma \ref{lem:grad_comp} gives 
\[
\begin{aligned}
	&\sum_{s=0}^{m-1} \ex{\norm{\nabla f(y^s)}^2} \\
		\le& \frac{2}{\eta} \ex{f(y^0) - f(x^*)} + \sum_{s=0}^{m-1} \ex{\norm{\nabla f(y^s) - v^s}^2} \\
	\le& \frac{2}{\eta} \ex{f(y^0 )- f(x^*)} +\sum_{s=0}^{m-1} \Big(4c^2\ex{\norm{y^s-x^*}^2}+4c^2\ex{\norm{y^0-x^*}^2} + \frac{2\eta L}{2-\eta L}\ex{\norm{v^0}^2} \Big) .
\end{aligned}
\]
Since $f$ is $\sigma$-strongly convex, we have $4c^2\norm{y^s-x^*}^2 \le \frac{4c^2}{\sigma^2}\norm{\nabla f(y^s)}^2$. Denote $y^+$ as the local update which is selected from $y^0, \cdots, y^{m-1}$ uniformly at random. We have
\[
\begin{aligned}	
	& \paren{1-\frac{4c^2}{\sigma^2}}\ex{\norm{\nabla f(y^+)}^2} \\
	=& \paren{1-\frac{4c^2}{\sigma^2}}\frac{1}{m}\sum_{s=0}^{m-1} \ex{\norm{\nabla f(y^s)}^2}  + \frac{2\eta L}{2-\eta L}\ex{\norm{v^0}^2} \\
	\le& \paren{\frac{1}{\sigma\eta m} + \frac{4c^2}{\sigma^2}+\frac{2\eta L}{2-\eta L}}\ex{\norm{\nabla f(y^0)}^2}.
\end{aligned}
\]
Since $\tx^{t+1}$ is randomly chosen from the local outputs $\{y_k^{t+},\ 1\le k \le n\}$, we have
\begin{align*}
& \paren{1-\frac{4c^2}{\sigma^2}}\ex{\norm{\nabla f(\tx^{t+1})}^2} \le\paren{\frac{1}{\sigma\eta m} + \frac{4c^2}{\sigma^2}+\frac{2\eta L}{2-\eta L}}\ex{\norm{\nabla f(\tx^t)}^2}.
\end{align*}
 \end{proof}

\subsection{Proof of Theorem~\ref{thm:SARAH_nonc}}\label{proof:thm:SARAH_nonc}
   
Recall Lemma \ref{lem:grad_comp}. The theorem follows if  
	$$\sum_{s=0}^{m-1} \ex{\norm{\nabla f(y^s) - v^s}^2} - (1-L\eta)\sum_{s=0}^{m-1}\ex{\norm{v^s}^2} \leq 0 . $$ 
The rest of this proof is thus dedicated to show the above inequality. Note that
    \begin{align*}
    &\ex{\norm{\nabla f(y^s) - v^s}^2} \\
    \le& 2 \ex{\norm{\nabla f(y^0) - \nabla f_k(y^0) + \nabla f_k(y^s) -  \nabla f(y^s) }^2}+2\ex{\norm{\nabla f(y^0) - \nabla f_k(y^0) + \nabla f_k(y^s) - v^s}^2}\\
    \le& 2c^2 \ex{\norm{y^0 - y^s}^2} + 2\sum_{j=1}^s \ex{\norm{v^j-v^{j-1}}^2}\\
    \le& 2c^2s\sum_{j=1}^s \ex{\norm{y^j-y^{j-1}}^2} + 2\sum_{j=1}^s \ex{\norm{v^j-v^{j-1}}^2}\\
    =& 2c^2s\eta^2 \sum_{j=1}^s \ex{\norm{v^{j-1}}^2} + 2\sum_{j=1}^s \ex{\norm{v^j-v^{j-1}}^2},
    \end{align*}
    where the second inequality follows from Lemma~\ref{lem:trace_concaten} and Assumption~\ref{assum:f_nrestricted}, and the third inequality follows from $y^0-y^s = \sum_{j=1}^s y^j -y^{j-1}$, and the last line follows from the definition. The $L$-smoothness of $ \ell_z$ implies that
    \begin{align*}
    	\norm{v^j-v^{j-1}}^2 &= \norm{\nabla \ell_z\paren{y^j}-\nabla \ell_z\paren{y^{j-1}}}^2 \le L^2\norm{y^j-j^{j-1}}^2 = L^2\eta^2 \norm{v^{j-1}}^2.
    \end{align*}
    So we have
    \begin{align*}
	    &\sum_{s=0}^{m-1} \ex{\norm{\nabla f(y^s) - v^s}^2} - (1-L\eta)\sum_{s=0}^{m-1}\ex{\norm{v^s}^2} \\
	    \le&\sum_{s=1}^{m-1} (2c^2s+2L^2)\eta^2 \sum_{j=1}^s \ex{\norm{v^{j-1}}^2} - (1-L\eta)\sum_{s=0}^{m-1}\ex{\norm{v^s}^2}\\
	    \le&\sum_{s=0}^{m-1}\paren{m(m-1)c^2\eta^2 + 2L^2\eta^2(m-1)-(1-L\eta)}\cdot\ex{\norm{v^s}^2}.
    \end{align*}
    Therefore, with $0 < \eta \le \frac{2}{L\paren{1+\sqrt{1+8(m-1)+4m(m-1)c^2/L^2}}}$, we have $m(m-1)c^2\eta^2 + 2L^2\eta^2(m-1)-(1-L\eta) \le 0$ and the proof is finished.

\subsection{Proof of Lemma \ref{lem:trace_concaten}}
\label{proof:lem:trace_concaten}

First, we write 
\begin{align*}
& \nabla f(y^0) - \nabla f_k(y^0) + \nabla f_k(y^s) - v^s \\
=&\nabla f(y^0) - \nabla f_k(y^0) + \nabla f_k(y^{s-1}) - v^{s-1}+ [\nabla f_k(y^s) - \nabla f_k(y^{s-1})] - [v^s - v^{s-1}].
\end{align*}
Let $\mathscr{F}_s$ denote the $\sigma$-algebra generated by all random sample selections in sub-iteration $0, \cdots, s-1$. We have
\begin{align*}
	&\ex{\norm{\nabla f(y^0) - \nabla f_k(y^0) + \nabla f_k(y^s) - v^s}^2 \cond \mathscr{F}_s} \\
	=& \norm{\nabla f(y^0) - \nabla f_k(y^0) + \nabla f_k(y^{s-1}) - v^{s-1}}^2 +  \norm{\nabla f_k(y^s) - \nabla f_k(y^{s-1})}^2   + \ex{\norm{v^s - v^{s-1}}^2 \cond \mathscr{F}_s}  \\
	& + 2\big\langle \nabla f(y^0) - \nabla f_k(y^0) + \nabla f_k(y^{s-1}) - v^{s-1},  \nabla f_k(y^s) - \nabla f_k(y^{s-1}) \big\rangle  \\
	& -2\innprod{ \nabla f(y^0) - \nabla f_k(y^0) + \nabla f_k(y^{s-1}) - v^{s-1}, \ex{v^s - v^{s-1} \cond \mathscr{F}_s}}  \\
	&-2 \innprod{\nabla f_k(y^s) - \nabla f_k(y^{s-1}), \ex{v^s - v^{s-1} \cond \mathscr{F}_s}} \\
	 =& \norm{\nabla f(y^0) - \nabla f_k(y^0) + \nabla f_k(y^{s-1}) - v^{s-1}}^2  - \norm{\nabla f_k(y^s) - \nabla f_k(y^{s-1})}^2 + \ex{\norm{v^s - v^{s-1}}^2\cond \mathscr{F}_s},
\end{align*}
where the second equality follows from
\begin{align*}
	\ex{v^s - v^{s-1} \cond \mathscr{F}_s} & = \ex{\nabla \ell_z(y^s) - \nabla \ell_z(y^{s-1}) \cond \mathscr{F}_s}  = \nabla f_k (y^s) - \nabla f_k (y^{s-1}),
\end{align*}

Taking expectation over $\mathscr{F}_s$ gives
\begin{align*}
		&\ex{\norm{\nabla f(y^0) - \nabla f_k(y^0) + \nabla f_k(y^s) - v^s}^2} \\
		=&\ex{\norm{\nabla f(y^0) - \nabla f_k(y^0) + \nabla f_k(y^{s-1}) - v^{s-1}}^2}  - \ex{\norm{\nabla f_k(y^s) - \nabla f_k(y^{s-1})}^2} + \ex{\norm{v^s - v^{s-1}}^2}.
	\end{align*}
Hence telescoping the above equality we obtain the claimed result.

\section{Proof for D-MiG (Theorem~\ref{thm:dismig})} \label{sec:proof_dmig}
As earlier, we simplify the notations $y_k^{t,s}$, $x_k^{t,s}$ and $v_k^{t,s}$ by dropping the  superscript $t$ and the  subscript $k$. In this section we deal with the non-smooth target function $F(x) = f(x) + g(x)$ as mentioned in Remark~\ref{remark:dig}, where $g$ is a convex and non-smooth function known to all agents. The analysis is done by carefully adapting the proof for the centralized algorithm (i.e. \cite[Section B.1.]{zhou2018simple}) with the distributed smoothness assumption. 
We impose the following constraint on the step size $\eta$:
\begin{equation}\label{eq:stepsize_constraint}
	L\theta + \frac{L\theta}{1-\theta} + c \le \frac{1}{\eta}.
\end{equation}
We restate the inequalities (8) and (9) \cite{zhou2018simple} and change notations to match our context:
\begin{align}
	f(y^{s-1}) - f(u) &\le \frac{1-\theta}{\theta}\innprod{\nabla f(y^{s-1}), \tx - y^{s-1}} + \innprod{\nabla f(y^{s-1}), x^{s-1}-u}, 	\label{eq:MiG_1} \\
	\innprod{\nabla f(y^{s-1}), x^{s-1}-u}&  = \innprod{\nabla f(y^{s-1}) - \tilde{\nabla}, x^{s-1}-u} + \innprod{\tilde{\nabla}, x^{s-1}-x^s} + \innprod{\tilde{\nabla}, x^{s}-u},
	\label{eq:MiG_2}
\end{align}
where $\tilde{\nabla} = \nabla\ell_z(y^{s-1})-\nabla\ell_z(\tx)+\nabla f(\tx)$, with $z$ randomly selected from $\mathcal{M}_k$ and $u\in \mbR^d$ is an arbitrary vector. Following the $L$-smoothness argument in \cite{zhou2018simple}, \eqref{eq:MiG_2} leads to
\begin{equation*}
	\innprod{\tilde{\nabla}, x^{s-1}-x^s} \le \frac{1}{\theta}\paren{f(y^{s-1})-f(y^s)}+\innprod{\nabla f(y^{s-1})-\tilde{\nabla}, x^{s}-x^{s-1}} + \frac{L\theta}{2}\norm{x^{s}-x^{s-1}}^2 .
\end{equation*}
By plugging in the constraint \eqref{eq:stepsize_constraint}, we have
\begin{align}
\innprod{\tilde{\nabla}, x^{s-1}-x^s} \le &\frac{1}{\theta}\paren{f(y^{s-1})-f(y^s)}+\innprod{\nabla f(y^{s-1})-\tilde{\nabla}, x^{s}-x^{s-1}} \notag \\
&+ \paren{\frac{1}{2\eta}-\frac{L\theta}{2(1-\theta)}-\frac{c}{2}}\norm{x^{s}-x^{s-1}}^2.
\label{eq:MiG_3}
\end{align}
By combining \eqref{eq:MiG_1},\eqref{eq:MiG_2},\eqref{eq:MiG_3}, \cite[Lemma 3]{zhou2018simple} and then taking expectation over the choice of random sample $z$, we have
\begin{equation}
	\begin{aligned}
		f(y^{s-1}) - f(u) \le& \frac{1-\theta}{\theta}\innprod{\nabla f(y^{s-1}), \tx - y^{s-1}} + \ex{\innprod{\nabla f(y^{s-1}) - \tilde{\nabla},x^{s} - u}} \\
		& + \frac{1}{\theta}\paren{f(y^{s-1}) - \ex{f(y^s)}} -\paren{\frac{L\theta}{2(1-\theta)}+\frac{c}{2}}\ex{\norm{x^s - x^{s-1}}^2} \\
		&+ \frac{1}{2\eta} \norm{x^{s-1}-u}^2 - \frac{1+\eta \sigma}{2\eta}\ex{\norm{x^s - u}^2} + g(u) - \ex{g(x^s)}.
	\end{aligned}
	\label{equ:mig_ref}
\end{equation}
%Note that the additional $\frac{c}{2}\norm{x^s-x^{s-1}}^2$ term is due to $c$'s presence in our version of constraint.
We further split the term $\ex{\innprod{\nabla f(y^{s-1}) - \tilde{\nabla},x^{s} - u}}$ as
\begin{align*}
& \ex{\innprod{\nabla f(y^{s-1}) - \tilde{\nabla},x^{s} - u}} \\
=&\ex{\innprod{\nabla f(\tx) - \nabla f_k(\tx) +\nabla f_k(y^{s-1}) - \tilde{\nabla},x^{s} - x^{s-1}}} +\ex{\innprod{\nabla f(y^{s-1}) - \tilde{\nabla},x^{s-1} - u}}\\
&+\ex{\innprod{\nabla f(y^{s-1})-\nabla f(\tx) + \nabla f_k(\tx) -\nabla f_k(y^{s-1}),x^{s} - x^{s-1}}} \\
\le&\frac{1}{2\beta}\ex{\norm{\nabla f(\tx) - \nabla f_k(\tx) +\nabla f_k(y^{s-1}) - \tilde{\nabla}}^2} + \frac{\beta}{2}\ex{\norm{x^{s} - x^{s-1}}^2}\\
&+c\norm{\tx - y^{s-1}}\ex{\norm{x^s - x^{s-1}}}+ c\norm{\tilde{x}-y^{s-1}}\norm{x^{s-1}-u}\\
\le&\frac{1}{2\beta}\paren{2LD_f(\tx, y^{s-1})+2cL\norm{\tx-y^{s-1}}^2} + \frac{\beta + c}{2}\ex{\norm{x^{s} - x^{s-1}}^2} +c\norm{\tx - y^{s-1}}^2  + \frac{c}{2}\norm{x^{s-1}-u}^2,
\end{align*}
where the first inequality is due to Cauchy-Schwarz inequality and Assumption \ref{assum:f_nrestricted}, with $\beta > 0$ satisfying $\frac{1-\theta}{\theta} = \frac{L}{\beta}$, and the last inequality is obtained by combining
\[
\ex{\norm{\nabla f(\tx) - \nabla f_k(\tx) +\nabla f_k(y) - \tilde{\nabla}}^2} \le \ex{\norm{\nabla \ell_z(y^{s-1})-\nabla \ell_z(\tx)}^2}.
\]
with Lemma~\ref{lem:rsp666} under Assumption~\ref{assum:f_nrestricted}, along with the inequality $ab \le (a^2+b^2)/2$. By substituting the inequality into (\ref{equ:mig_ref}) we have the following result. 
\begin{align*}
  f(y^{s-1}) - f(u) \le& \frac{1}{2\beta}\ex{2L(f(\tx)-f( y^{s-1})+2cL\norm{\tx-y^{s-1}}^2} +c\norm{\tx - y^{s-1}}^2 + \frac{c}{2}\norm{x^{s-1}-u}^2\\
&+ \frac{1}{\theta}\paren{f(y^{s-1}) - \ex{f(y^s)}} + \frac{1}{2\eta} \norm{x^{s-1}-u}^2 - \frac{1+\eta \sigma}{2\eta}\ex{\norm{x^s - u}^2} + g(u) - \ex{g(x^s)}\\
\le& c\theta\norm{\tx-x^{s-1}}^2 + \frac{c}{2}\norm{x^{s-1}-u}^2+ \frac{1-\theta}{\theta}\paren{f(\tx)-f( y^{s-1})} \\
&+ \frac{1}{\theta}\paren{f(y^{s-1}) - \ex{f(y^s)}} + \frac{1}{2\eta} \norm{x^{s-1}-u}^2 - \frac{1+\eta \sigma}{2\eta}\ex{\norm{x^s - u}^2} + g(u) - \ex{g(x^s)}.
\end{align*}
Note that the choice of $\beta$ cancels both $\frac{1-\theta}{\theta}\innprod{\nabla f(y^{s-1}), \tx - y^{s-1}}$ and $\paren{\frac{L\theta}{2(1-\theta)}+\frac{c}{2}}\ex{\norm{x^s - x^{s-1}}^2}$. By rearranging the above inequality, we can cancel the term $f(y^{s-1})$. We further use $-g(x^s)\le\frac{1-\theta}{\theta}g(\tx) - \frac{1}{\theta}g(y^s)$, which leads to 
\[
\begin{aligned}
\frac{1}{\theta}\paren{\ex{F(y^s)} - F(u)} \le &c\theta\norm{\tx-x^{s-1}}^2 + \frac{1-\theta}{\theta}\paren{F(\tx)-F(u)} \\
&+ \frac{1+c\eta}{2\eta} \norm{x^{s-1}-u}^2 - \frac{1+\eta \sigma}{2\eta}\ex{\norm{x^s - u}^2}.
\end{aligned}
\]
Note that $\norm{\tx - x^{s-1}}^2 \le 2\norm{\tx - x^*}^2 + 2\norm{x^{s-1}-x^*}^2$. By setting $u = x^*$
and using $\sigma$-strongly-convexity of $F$ we have
\[
\begin{aligned}
\frac{1}{\theta}\paren{\ex{F(y^s)} - F^*}\le& \paren{\frac{1-\theta}{\theta}+\frac{4c\theta}{\sigma}}\paren{F(\tx)-F^*} \\
&+ \frac{1+(1+4\theta)c\eta}{2\eta} \norm{x^{s-1}-x^*}^2 - \frac{1+\eta \sigma}{2\eta}\ex{\norm{x^s - x^*}^2}.
\end{aligned}
\]
To simplify the analysis we impose another constraint $\theta \le 1/2$:
\begin{equation}
\begin{aligned}
\frac{1}{\theta}\paren{\ex{F(y^s)} - F^*}\le& \paren{\frac{1-\theta}{\theta}+\frac{4c\theta}{\sigma}}\paren{F(\tx)-F^*} \\
& \quad + \frac{1+3c\eta}{2\eta} \norm{x^{s-1}-x^*}^2 - \frac{1+\eta \sigma}{2\eta}\ex{\norm{x^s - x^*}^2}.
\end{aligned}
\label{equ:mig_onestep}
\end{equation}
Let $w = \frac{1+\eta\sigma}{1+3c\eta}$. Multiplying (\ref{equ:mig_onestep}) by $w^{s-1}$ and then summing over $s$, we have
\[
\begin{aligned}
&\frac{1}{\theta}\sum_{s=0}^{m-1} w^s(\ex{F(y^{s+1})}-F^*) + \frac{w^m(1+3c\eta)}{2\eta}\ex{\norm{x^m - x^*}^2} \\
\le& \paren{\frac{1-\theta}{\theta}+\frac{4c\theta}{\sigma}}\sum_{s=0}^{m-1} w^s(F(\tx)-F^*)+\frac{1+3c\eta}{2\eta}\norm{x^0 - x^*}^2,\\
\end{aligned}
\]
Adding the superscript $t$ and the subscript $k$ back and applying Jensen's inequality to the definition of $y_k^{t+}$, we get
\[
\begin{aligned}
&\frac{1}{\theta}\sum_{s=0}^{m-1} w^s(\ex{F(y_k^{t+})}-F^*) + \frac{w^m(1+3c\eta)}{2\eta}\ex{\norm{x^{t+1,0}_k - x^*}^2} \\
\le& \paren{\frac{1-\theta}{\theta}+\frac{4c\theta}{\sigma}}\sum_{s=0}^{m-1} w^s(F(\tx^t)-F^*)+\frac{1+3c\eta}{2\eta}\norm{x^{t,0}_k - x^*}^2.
\end{aligned}
\]
Averaging the inequality over $k = 1,\cdots, n$, we have
\[
\begin{aligned}
&\frac{1}{\theta}\sum_{s=0}^{m-1} w^s(\ex{F(\tx^{t+1})}-F^*) + \frac{w^m(1+3c\eta)}{2\eta}\ex{\frac{1}{n}\sum_{k=1}^n\norm{x^{t+1,0}_k - x^*}^2} \\
\le& \paren{\frac{1-\theta}{\theta}+\frac{4c\theta}{\sigma}}\sum_{s=0}^{m-1} w^s(F(\tx^t)-F^*)+\frac{1+3c\eta}{2\eta}\frac{1}{n}\sum_{k=1}^n\norm{x^{t,0}_k - x^*}^2.
\end{aligned}
\]
\paragraph{Case I}
$\frac{m(\sigma-3c)}{L} \le 3/4$. By setting $\theta = \sqrt{\frac{m(\sigma-3c)}{3L}} \le 1/2$ and $\eta = \frac{1}{3L\theta+c}$ we satisfy the step size constraint \eqref{eq:stepsize_constraint}. We aim to show that
\[
\frac{1}{\theta} \ge \paren{\frac{1-\theta}{\theta}+\frac{4c\theta}{\sigma}}w^m.
\]
We have $w = \frac{1+\eta\sigma}{1+3c\eta} \le 1 + \eta \sigma - 3c\eta$. Note that $L\theta = \sqrt{\frac{Lm(\sigma-3c)}{3}} \ge \sqrt{\frac{\sigma(\sigma-3c)}{3}} >c$, so $\eta \le \frac{1}{4L\theta}$. Moreover, with $\sigma > 8c$ we have  
\[
\frac{\frac{m}{4L\theta}(\sigma-3c)}{\theta-\frac{4c}{\sigma}\theta^2} = \frac{3}{4(1-4c\theta/\sigma)} \le 1.
\]
Hence,
\begin{align*}
\paren{1-\theta+\frac{4c}{\sigma}\theta^2}\paren{1+\eta\sigma-3\eta c}^m & \le \paren{1-\theta+\frac{4c}{\sigma}\theta^2}\paren{1+\frac{1}{4L\theta}(\sigma - 3c)}^m \\
&\le \paren{1-\zeta}\paren{1+\zeta/m}^m,
\end{align*}
where $\zeta =\frac{m}{4L\theta}(\sigma - 3c)$. With $\zeta \le \theta - \frac{4c}{\sigma}\theta^2 \le 1/2$ we have $\paren{1-\zeta}\paren{1+\zeta/m}^m \le 1$. So we have
\[
\paren{\frac{1-\theta}{\theta}+\frac{4c\theta}{\sigma}}w^m \ge \frac{1}{\theta}\paren{1-\theta+\frac{4c}{\sigma}\theta^2}\paren{1+\eta\sigma-3\eta c}^m \ge \frac{1}{\theta}.
\]

Therefore we have
\[
\begin{aligned}
&w^m\paren{\paren{\frac{1-\theta}{\theta}+\frac{4c\theta}{\sigma}}\sum_{s=0}^{m-1} w^s(F(\tx^{t+1})-F^*) + \frac{1+3c\eta}{2\eta}\ex{\frac{1}{n}\sum_{k=1}^n\norm{x^{t+1,0}_k - x^*}^2}} \\
\le& \paren{\frac{1-\theta}{\theta}+\frac{4c\theta}{\sigma}}\sum_{s=0}^{m-1} w^s(F(\tx^t)-F^*)+\frac{1+3c\eta}{2\eta}\frac{1}{n}\sum_{k=1}^n\norm{x^{t,0}_k - x^*}^2.
\end{aligned}
\]
The convergence rate over $T$ rounds of communication is $w^{-Tm} = \paren{1+\Theta\paren{\frac{1}{\sqrt{3\kappa m}}}}^{-Tm}$, so the communication complexity is $T=\mathcal{O}\paren{\sqrt{\kappa/m}\log(1/\epsilon)}$. With the choice of $m = \Theta(N/n)$ the runtime complexity is $\mathcal{O}((N/n+m)T)=\mathcal{O}(\sqrt{\kappa N/n}\log(1/\epsilon))$.

\paragraph{Case II}
$\frac{m(\sigma-3c)}{L} > 3/4$. We set $\theta = 1/2$ and $\eta = \frac{1}{3L\theta+c}$.
We have $\eta\sigma = \frac{1}{3\kappa/2 + c/\sigma} \le 2/3$, thus 
\[
w = \frac{1+\eta\sigma}{1+3c\eta} \ge \frac{1+2/3+\eta\sigma-3c\eta}{1+2/3} = 1 + \frac{3}{5}\paren{\eta \sigma - 3c\eta}
\]
and $w^m \ge 1+\frac{3}{5}(\sigma-3c)\eta m	\ge 1+\frac{3m(\sigma-3c)}{5(3L/2+L/8)}>1+1/4$. So we have
\[
\begin{aligned}
&2\sum_{s=0}^{m-1} w^s(F(\tx^{t+1})-F^*) + \frac{5}{4}\cdot\frac{1+3c\eta}{2\eta}\ex{\frac{1}{n}\sum_{k=1}^n\norm{x^{t+1,0}_k - x^*}^2} \\
\le& \paren{1+\frac{1}{4}}\sum_{s=0}^{m-1} w^s(F(\tx^t)-F^*)+\frac{1+3c\eta}{2\eta}\frac{1}{n}\sum_{k=1}^n\norm{x^{t,0}_k - x^*}^2.
\end{aligned}
\]
Therefore,
\[
\begin{aligned}
&\frac{5}{4}\paren{\frac{5}{4}\sum_{s=0}^{m-1} w^s(F(\tx^{t+1})-F^*) + \frac{1+3c\eta}{2\eta}\ex{\frac{1}{n}\sum_{k=1}^n\norm{x^{t+1,0}_k - x^*}^2}} \\
\le& \frac{5}{4}\sum_{s=0}^{m-1} w^s(F(\tx^t)-F^*)+\frac{1+3c\eta}{2\eta}\frac{1}{n}\sum_{k=1}^n\norm{x^{t,0}_k - x^*}^2.
\end{aligned}
\]
This implies that $\mathcal{O}(\log(1/\epsilon))$ rounds of communication is sufficient to find an $\epsilon$-accurate solution, and that the runtime complexity is $\mathcal{O}(N/n\log(1/\epsilon))$.

\section{Discussions on Distributed Smoothness}
\label{sec:discussions}
		In this paper, we have established that distributed variance reduced methods admit simple convergence analysis under the distributed smoothness for all worker machines, as long as the parameter $c$ is smaller than a constant fraction of $\sigma$, the strong convexity parameter of the global loss function $f$. In this section, we will show that the distributed smoothness can be guaranteed for many practical loss functions as long as the local data size is sufficiently large and homogeneous across machines. This is as expected, since SVRG and SARAH rely heavily on exploiting data similarities to reduce the variance. 
		
	    Since the distributed smoothness only examines the gradient information of $f_k -f$, it can be applied to loss functions with non-smooth gradients, e.g. Huber loss. However, for simplicity of exposition, we limit our focus to the case when the sample loss $\ell_z(\cdot)$ is second-order differentiable and demonstrate the smoothness of $f-f_k$ via {\em uniform concentration} of Hessian matrices. 
	    %We note that this approach in fact establishes the smoothness of $f-f_k$ instead of the required RSP, which is a much weaker condition. 
	    
	    For simplicity, we consider the quadratic loss case, which allows us to compare with existing result for the DANE algorithm \cite{Shamir2013}, which is a communication-efficient approximate Newton-type algorithm. Assume $\ell(\cdot,z)$ is quadratic for all $z$.
	Recall the following result on the concentration of Hessian matrices from \cite{Shamir2013}.
	\begin{lemma}\cite{Shamir2013}
	    \label{lem:quadratic_concen}
	    If $0\preceq \nabla^2 \ell_z(x) \preceq L$ holds for all $z$, then with probability at least $1-\delta$ over the samples, for all $x$,
	    $$\max_{1\leq k\leq n} \norm{\nabla^2 f_k(x) - \nabla^2 f(x)} \le \sqrt{\frac{32L^2\log(d n/\delta)}{N/n}}.$$
	\end{lemma}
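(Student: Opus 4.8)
The plan is to reduce the claim to a matrix concentration inequality, exploiting heavily that the sample loss is quadratic. Since $\ell_z(\cdot)$ is quadratic, its Hessian $\nabla^2\ell_z(x)=:H_z$ does not depend on $x$, so the quantifier ``for all $x$'' is automatic, and the hypothesis $0\preceq \nabla^2\ell_z(x)\preceq L$ reads $0\preceq H_z\preceq L I_d$. Consequently $\nabla^2 f_k(x)=\frac{1}{N/n}\sum_{z\in\cM_k}H_z$ is the empirical average of the local Hessians and $\nabla^2 f(x)=\frac{1}{N}\sum_{z\in\cM}H_z$ the global average, so the whole statement is the bound $\max_{1\le k\le n}\norm{\nabla^2 f_k-\nabla^2 f}\le \sqrt{32L^2\log(dn/\delta)/(N/n)}$ on the deviation of local averages from the global one, where the randomness is over the i.i.d.\ draw of samples assumed throughout this section.

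First I would pass through the population Hessian $\bar H:=\mathbb{E}[H_z]$ via the triangle inequality, writing $\norm{\nabla^2 f_k-\nabla^2 f}\le \norm{\nabla^2 f_k-\bar H}+\norm{\nabla^2 f-\bar H}$. The first term is an average of $N/n$ i.i.d.\ centered matrices, whereas the second averages all $N$ samples and hence concentrates a factor of $\sqrt{n}$ faster; it is therefore dominated by the first, so it suffices to control $\norm{\nabla^2 f_k-\bar H}$ for a fixed $k$ and then union bound over the $n$ workers.

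The core step is a matrix Hoeffding (or matrix Bernstein) bound applied to the centered summands $Y_z:=H_z-\bar H$. Because $0\preceq H_z\preceq L I_d$ and $0\preceq \bar H\preceq L I_d$, we have $Y_z^2\preceq L^2 I_d$ and $\mathbb{E}[Y_z]=0$, so with $M=N/n$ the matrix Hoeffding inequality yields a tail of the form $\mathbb{P}\paren{\norm{\frac{1}{M}\sum_{z\in\cM_k}Y_z}\ge s}\le 2d\exp\paren{-Ms^2/(cL^2)}$ for an absolute constant $c$. Setting the right-hand side to $\delta/n$ and solving for $s$ gives $s=\Theta\paren{\sqrt{L^2\log(dn/\delta)/(N/n)}}$; a union bound over $1\le k\le n$ then produces the stated bound, with the explicit constant $32$ absorbing $c$, the doubling incurred by the triangle inequality (which squares to a factor $4$), and the extra factor $2n$ inside the logarithm.

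I expect the only genuine subtlety to be the choice and calibration of the matrix concentration inequality rather than any structural difficulty: for a general, non-quadratic loss the Hessian depends on $x$, and upgrading a pointwise deviation bound to one valid uniformly ``for all $x$'' would require an $\epsilon$-net / covering argument over the domain together with a Lipschitz bound on $x\mapsto\nabla^2\ell_z(x)$. The restriction to quadratic losses removes this obstacle entirely, so that the argument is purely a single matrix deviation bound plus a union bound, and tracking the numerical constants is routine.
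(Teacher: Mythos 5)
Your proposal is correct: for quadratic losses the Hessians are constant matrices, and the bound is exactly a matrix Hoeffding (Ahlswede--Winter type) deviation bound for the local empirical Hessian about the population Hessian, combined with a triangle inequality through $\mathbb{E}[\nabla^2\ell_z]$ and a union bound over the $n$ machines, which is how the result is established in \cite{Shamir2013}. Note that the paper you are reading does not prove this lemma at all---it imports it verbatim from \cite{Shamir2013}---so there is nothing to compare beyond observing that your reconstruction matches the original argument, including the accounting of the constant $32$ as the Hoeffding constant $8$ times the factor $4$ from doubling via the triangle inequality.
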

	Moreover, the iteration complexity of DANE is given by the theorem below.
	\begin{theorem}\cite{Shamir2013}
	    \label{thm:DANE}
	    If $0\preceq \nabla^2 \ell_z(x) \preceq L$ holds for all $z$ and $\sigma \preceq \nabla^2 f(x) \preceq L$, then with probability exceeding $1-\delta$, DANE needs
	    \begin{equation*}
	    \mathcal{O}\paren{\frac{\kappa^2}{N/n}\log\paren{\frac{dn}{\delta}}\log\paren{\frac{L\norm{x^0-x^*}^2}{\epsilon}}}
	    \end{equation*}
	    iterations to find an $\epsilon$-optimal solution.
	\end{theorem}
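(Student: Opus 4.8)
The plan is to specialize the DANE iteration to the quadratic setting, where each round becomes an explicit linear fixed-point map, and then to control the spectral norm of the associated iteration matrix using the Hessian concentration of Lemma~\ref{lem:quadratic_concen}. Write $H := \nabla^2 f$ and $H_k := \nabla^2 f_k$, which are constant for quadratic losses, and recall that per round DANE forms the minimizer of $f_k(x) - \innprod{\nabla f_k(x^t) - \nabla f(x^t),\, x} + \frac{\mu}{2}\norm{x - x^t}^2$ at each worker and averages the results. First I would show this subproblem admits the closed form $x_k^{t+1} = x^t - (H_k + \mu I)^{-1}\nabla f(x^t)$; substituting $\nabla f(x^t) = H(x^t - x^*)$ (valid since $\nabla f(x^*)=0$) and averaging gives the linear recursion $x^{t+1} - x^* = \bar{M}\paren{x^t - x^*}$ with $\bar{M} = I - \paren{\frac1n\sum_{k}(H_k + \mu I)^{-1}}H$.

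The heart of the argument is to bound $\norm{\bar M}$ by the Hessian discrepancy $\Delta := \max_{1\le k\le n}\norm{H_k - H}$. Take $\mu = 0$ for clarity; strong convexity of $f$ together with a small $\Delta$ guarantees each $H_k \succeq (\sigma-\Delta)I \succ 0$, so the inverses exist. The naive estimate $\norm{\bar M} \le \frac1n\sum_k\norm{H_k^{-1}(H_k - H)} = \mathcal{O}(\Delta/\sigma)$ is only linear in $\Delta$. The key improvement — the reason DANE behaves like an approximate Newton step — is to exploit the exact identity $\frac1n\sum_k (H_k - H) = 0$, which follows from $H = \frac1n\sum_k H_k$. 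Writing $\bar{M} = -\frac1n\sum_k H_k^{-1}(H-H_k)$ and subtracting the vanishing first-order term $-H^{-1}\cdot\frac1n\sum_k(H-H_k) = 0$ yields $\bar{M} = -\frac1n\sum_k H_k^{-1}(H-H_k)H^{-1}(H-H_k)$, which is genuinely second order in the deviation. Hence $\norm{\bar M} \le \Delta^2/\paren{\sigma(\sigma - \Delta)} = \mathcal{O}(\Delta^2/\sigma^2)$.

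With this spectral bound I would invoke Lemma~\ref{lem:quadratic_concen}: with probability at least $1-\delta$ over the sample split, $\Delta \le \sqrt{32 L^2 \log(dn/\delta)/(N/n)}$ uniformly in $x$ (automatic here because the Hessians are constant). Substituting gives a per-round contraction $\rho := \norm{\bar M} = \mathcal{O}\paren{\kappa^2 \log(dn/\delta)/(N/n)}$. Once $N/n$ is large enough that $\rho < 1$, the recursion yields $\norm{x^t - x^*} \le \rho^t\norm{x^0 - x^*}$; converting to $\epsilon$-optimality in function value via $f(x) - f^* \le \frac{L}{2}\norm{x - x^*}^2$ and solving $\rho^{t}\cdot\frac{L}{2}\norm{x^0-x^*}^2 \le \epsilon$ produces the quoted iteration count, with the factor $\frac{\kappa^2}{N/n}\log(dn/\delta)$ playing the role of the per-round contraction and the logarithm $\log\paren{L\norm{x^0-x^*}^2/\epsilon}$ accounting for the horizon.

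The main obstacle is the second step: showing $\norm{\bar M}$ scales with $\Delta^2$ rather than $\Delta$. Without the cancellation $\frac1n\sum_k(H_k - H)=0$ one obtains only a linear dependence and hence a far weaker $\kappa\sqrt{\log(dn/\delta)/(N/n)}$ rate; the quadratic dependence is exactly what makes the averaged DANE update a near-Newton step, and is the delicate part. A secondary point is that the clean linear recursion, and the uniform-in-$x$ validity of Lemma~\ref{lem:quadratic_concen}, are what confine this crisp statement to the quadratic regime — for general strongly convex $f$ the Hessians vary with $x$ and one must additionally control the approximation error of the quadratic model along the trajectory, together with a regularization $\mu \gtrsim \Delta$ to keep the local subproblems well conditioned.
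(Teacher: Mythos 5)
This theorem carries no proof in the paper: it is imported verbatim from \cite{Shamir2013} as background for the discussion in Appendix~E, so there is no in-paper argument to compare against. Judged on its own merits, your reconstruction is essentially the original DANE analysis for quadratics and is correct in all the places that matter. The closed form $x_k^{t+1}=x^t-(H_k+\mu I)^{-1}\nabla f(x^t)$, the averaged recursion $x^{t+1}-x^*=\bar M(x^t-x^*)$, and above all the cancellation step are right: since $\frac1n\sum_k(H-H_k)=0$ you may subtract the vanishing term $\frac1n\sum_k H^{-1}(H-H_k)$ and use $H_k^{-1}-H^{-1}=H_k^{-1}(H-H_k)H^{-1}$ to obtain $\bar M=-\frac1n\sum_k H_k^{-1}(H-H_k)H^{-1}(H-H_k)$, whence $\norm{\bar M}\le \Delta^2/\paren{\sigma(\sigma-\Delta)}$; this quadratic dependence on $\Delta$ is exactly the mechanism behind the $\kappa^2/(N/n)$ factor, and combining it with Lemma~\ref{lem:quadratic_concen} gives the claimed contraction. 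Two small points. First, the passage from the per-round contraction $\rho$ to the iteration count is slightly glossed: strictly one gets $t=\mathcal{O}\paren{\log\paren{L\norm{x^0-x^*}^2/\epsilon}/\log(1/\rho)}$, and the quoted form replaces $1/\log(1/\rho)$ by $\rho$ itself; this is how \cite{Shamir2013} states it as well, so it is a presentational convention rather than an error, but it deserves a sentence. Second, your bound needs $\Delta<\sigma$ (indeed $\Delta\le\sigma/2$, say) for the $H_k$ to be invertible and for the constants to close, which is precisely the sample-size condition $N/n=\Omega(\kappa^2\log(dn/\delta))$ that the surrounding discussion in the paper emphasizes; it would be cleaner to state that hypothesis explicitly rather than leave it implicit in ``once $N/n$ is large enough that $\rho<1$.''
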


By Theorem~\ref{thm:DANE},	\cite{Shamir2013} claims that when the local data size of every machine is sufficiently large, namely $N/n = \Omega\paren{\kappa^2 \log(dn)}$, DANE can find a desired $\epsilon$-optimal with $\mathcal{O}(\log(1/\epsilon))$ iterations and thus communication-efficient. Note that at this local data size, according to Lemma \ref{lem:quadratic_concen}, it is sufficient to establish $c=\mathcal{O}(\sigma)$, which satisfies the convergence requirement of D-SVRG and D-SARAH. Consequently, the proposed D-SVRG and D-SARAH converges at the same iteration complexity as DANE, that is $\mathcal{O}(\log(1/\epsilon))$. Recall that DANE requires its local routines to be solved exactly. In contrast, our results formally justifies that SVRG and SARAH can be safely used as an {\em inexact} local solver for DANE without losing its performance guarantees, thus answering an open question in \cite{reddi2016aide}.

%%%%%%%%%%%%%%%%%%%%%%%%%%%%%%%%%%%%%%%%%%%%%%%%%%%%%%%%%%%%%%%%%%%%%%%%%%%%%%%
%%%%%%%%%%%%%%%%%%%%%%%%%%%%%%%%%%%%%%%%%%%%%%%%%%%%%%%%%%%%%%%%%%%%%%%%%%%%%%%

\end{document}